\long\def\comment#1{}
\newcommand{\boldK}{{\textit {\bf  K}\hspace{.01in}}}
\newcommand{\boldnot}{{\textit {\bf  not}\,}}
\newcommand{\bfnot}{\textit {\bf  {not}}}
\newcommand{\bfK}{{\textit {\bf  K}}}
\newcommand{\boldP}{\textit {\bf  P}}
\newcommand{\boldN}{\textit {\bf  N}}
\newcommand{\MKNF}{{\textit {MKNF}}}
\newcommand{\bft}{{\textit {\bf t}}}
\newcommand{\bff}{{\textit {\bf f}}}
\newcommand{\head}{\textit{hd}}
\newcommand{\body}{\textit{bd}}
\newcommand{\wholebody}{\textit{body}}
\newcommand{\lfp}{\textit{lfp}}
\newcommand{\Approx}{\textit{Appx}}
\newcommand{\lub}{\textit{lub}}
\newcommand{\glb}{\textit{glb}}
\newcommand{\K}{\cal K}
\newcommand{\M}{{\cal M}}
\newcommand{\N}{{\cal N}}
\newcommand{\KA}{\sf KA}
\newcommand{\OB}{\sf OB}
\renewcommand{\P}{\cal P}
\renewcommand{\Pi}{\varPi}
\renewcommand{\Gamma}{\varGamma}
\newcommand{\cK}{\mathcal K}
\newcommand{\cO}{\mathcal O}
\newcommand{\cP}{\mathcal P}
\newcommand{\cL}{\mathcal L}
\newcommand{\imply}{\supset}
\newtheorem{example}{Example}
\newtheorem{definition}{Definition}
\newtheorem{proposition}{Proposition}
\newtheorem{theorem}{Theorem}
\newcommand\bcmdtab{\noindent\bgroup\tabcolsep=0pt%
  \begin{tabular}{@{}p{10pc}@{}p{20pc}@{}}}
\newcommand\ecmdtab{\end{tabular}\egroup}
  \title[Theory and Practice of Logic Programming]
        {Alternating Fixpoint Operator for Hybrid MKNF Knowledge Bases as an Approximator of AFT}
  \author[F. Liu and J-H. You]
         {Fangfang Liu$^1$ and Jia-Huai You$^2$\\
         $^1$School of Computer Engineering and Science, Shanghai University, Shanghai, China\\
         $^2$Department of Computing Science, University of Alberta, Edmonton, Canada\\
 \email{ffliu@shu.edu.cn, jyou@ualberta.ca}
}
\newtheorem{lemma}{Lemma}[section]
\begin{document}

\clearpage
\pagenumbering{arabic}

\comment{
\setcounter{page}{-0}

\noindent
{\bf Comments on Revision:}

Besides correcting the typos indicated in guest editor's note,  we have slightly polished the following parts of the paper. 

\begin{itemize}
\item Footnote 5 is slightly re-worded. There was a grammar error in the original sentence. 

\item On page 14, ``...,  we are then talking about a two-valued MKNF model"  $\Rightarrow$ " ..., then $(M,N)$ is equivalent to a two-valued MKNF model."   The latter is more formal. 

\item On page 20, ``..., but its hd(r) does not; ..." $\Rightarrow$ ``..., but its hd(r) evaluates to f or u".  The latter is explicit, instead of saying ``does not (evaluate to t)". 

\item On page 21, starting from the 4th line, we added more details about the case when M = N: ``If M = N, one can verify ...."

\item On page 23, in then first paragraph of proof of Theorem 6,  the word ``isolated" is removed as it does not say anything informative, and the sentence is slightly revised. 

\item The last sentence of Section 7 is slightly revised. 

\end{itemize}
}

\comment{
\noindent
{\bf Comments on Revision:}

We'd like to thank the reviewers for their detailed comments and suggestions, which guided us through this revision.  The main revision is to separate the original AFT (new section 2.1) from our generalization (section 3).
Addtional responses are attached below. All numbers (examples, sections,  et al.) refer to the revised version presented here if not said otherwise. 

\medskip
\noindent
{\bf Reviewer 1}
\begin{itemize}
\item 
{\color{blue} ``In addition,
more intuitive/motivating examples should be added, in particular in
the first half of the paper, to make the paper more accessible for
non-experts in AFT and/or hybrid MKNF."}

Response: Example 1 on page 6 is added to explain various technical definitions of AFT. Example 4 just before Section 4.2 is added to illustrate the definition of 3-valued MKNF models. 

\medskip
\item {\color{blue} ``The paper starts with two paragraphs which already mention many
details of AFT, which are formally introduced only later in the paper.
For instance, the projection functions for pairs and monotone
operators are only defined in Section 2. This makes these paragraphs
difficult to understand for a reader not familiar with AFT. Moreover,
it may be better to start with the general introduction of AFT and to
go into the specifics afterwards in the introduction.''}

R: The Introduction is revised to be general, and the details of AFT are removed. 

\medskip
\item {\color{blue} ``The first two sentences of the paragraph on page 2 starting `A major
advantage...' are not very clear without more context and/or
background on AFT."}

R: Revised by mentioning ``algebraic structure". Also changed the word `advantage' to `feature' to sound more self-contained. 
In this new version, it is given at the start of the bottom paragraph of page 1.

\medskip
\item {\color{blue}
``The citation format should be fixed and aligned throughout the paper.
In many places, the author names appear twice, ..."}

R: Fixed throughout.

\medskip
\item {\color{blue}
``In Section 2, for instance the concepts of bilattice, chain,
A-contracting and A-prudent could be illustrated by examples to
provide more intuition."}

R:
The familiar bilattice in logic programming is now mentioned just before Section 2.1.
In this new version, we added Section 2.1 on previous development of AFT, where motivations and explanations of various notations are given in some details and illustrated in Example 1.

\medskip
\item {\color{blue} ``It is defined what it means for a pair to be A-contracting and
A-prudent, but no explanation of these concepts is given; and the
notation $\lfp(A(\cdot,v)_1)$ is mentioned at the beginning of the
introduction but should be defined in the preliminaries section as
well."}

R: Now it is given in Section 2.1.

\medskip
\item 
{\color{blue}
``In Section 3, the background on AFT is interleaved with the new
generalization of AFT. It would be more clear to introduce the
necessary preliminaries on AFT in the preliminaries section, and to
only present the generalization in Section 3. In this regard, maybe it
would be good to introduce a modified term and notation for
approximators according to the relaxed definition, to distinguish them
from the previous approximators."}

R: Thanks for this suggestion, which led to a major improvement of paper's organization, namely, a separation of the previous development of AFT (Section 2.1) and our generalization (Section 3).

\medskip
\item
{\color{blue} ``The sentence in the first paragraph of Section 3 starting `This
generalization is needed...' is not very clear without the subsequent
context on approximators and hybrid MKNF."}

R: Now the sentence is followed by a short, general explanation. This can be found in the first paragraph of Section 3 (page 7).

\medskip
\item {\color{blue} ``The projection operators used in equation (1) should be introduced
formally in addition to the projection functions."} 

The definition is now given along with equation (1).

\medskip
\item {\color{blue}
``In the sentence `... we formulate for hybrid MKNF knowledge bases are
(essentially) strong approximators' on page 8, what precisely is meant
by 'essentially'?"}

R:  Good point. We added a footnote (footnote 9 on page 10). The formal claim requires a mild condition which does not prohibit the preservation of consistent stable fixpoints that correspond to intended models. 

\medskip
\item
{\color{blue}
``Theorem 3 should be introduced by explaining why having
chain-completeness is important."}

R: In the revised version, this theorem becomes Theorem 4. As we now separate the past development from our generalization, we get a chance to explain why in the original AFT chain-completeness is important (Section 2.1). Also, we added a short paragraph commenting on why chain-completeness is relevant in the generalized AFT (the last paragraph of Section 3).

\medskip
\item {\color{blue}
``In Section 4, page 10, the sentence starting `Then we extend the
language...' does not refer to the definition of the satisfaction
relation behind the colon. This part should be moved behind the
indented definition. Or the definition could also be moved to a figure
like Table 1, to align the formatting."}

R: The “names” should come first because the definition of satisfaction makes a reference to it. Re-organized

\medskip
\item {\color{blue} ``The term DL-safety is mentioned on page 12, but it is not explained
what it means."}

R: The definition is now provided (this part of material is now on page 15).  DL-safety ensures that a first-order rule base is semantically equivalent to a finite ground rule base; hence decidability is guaranteed.

\medskip
\item
{\color{blue}
``It would be good to see a comment on why the notion of partition is
generalized from consistent pairs to all pairs on pages 12 and 14."}

R: A comment is added



\medskip
\item {\color{blue} 
``The proofs are sometimes difficult to read when many steps are
condensed into a long sentence; and starting embedded sentences with a
formula should be avoided. For instance, the sentence in the proof for
Theorem 3 starting `To show that the stable revision operator is
well-defined, ...' is difficult to parse; likewise the sentences in
the proof for Theorem 5 starting 'Next, by the definition of operators...'  and `For each $r \in \cal{P}$ with $head(r)={\bf K} a$ ...'. "}

R:   The proofs have been polished. 

\medskip
\item {\color{blue} ``How is the work related to the approach in [1], where inconsistencies
in hybrid MKNF under a paraconsistent well-founded semantics are also
regarded in an alternating fixpoint construction? If there is a
relevant relation, this should be discussed in Section 7. In Section
7, it is also stated that "in (Liu and You 2017), the construction was
related to a notion called stable partition"; it should be explained
how this relates to the current approach."}

{\color{blue} ``[1] Tobias Kaminski, Matthias Knorr, João Leite: Efficient Paraconsistent Reasoning with Ontologies and Rules. IJCAI 2015: 3098-3105."}

R: The relation with (Liu and You 2017) is further explained. Thanks for pointing out this paper. As commented towards the end of Section 7, an interesting future work on paraconsistent reasoning for hybrid MKNF KBs appears to be just around the corner! 

\medskip
\item {\color{blue}  Minor Issues/Typos}

R: All corrected/revised. But the notation $\glb(S)$ and $\lub(S)$ has been used in the literature without a reference to $L$, so we just follow, assuming $L$ is clear from the context.

\end{itemize}

\bigskip
\noindent
{\bf Reviewer 2}: Minor issues are all addressed and typos fixed. Here are some comments on the revision. 

\begin{itemize}
\item {\color{blue}    ``The introduction should contain a paragraph in which the main
differences of the present paper compared to the previous conference
papers are summarized."
}

Response: The main difference is to develop the generalized AFT in full detail ad filling in technical details.

\medskip
\item {\color{blue} ``In Theorem 3, $L^{rp}$ seems not defined. ..."}

R: As suggested by another reviewer, we now separate the original AFT (Section 2.1) from our generalization (Section 3). 
Now $L^{rp}$ is introduced with motivation in Section 2.1 for $L^c$ and reintroduced, without confusion, in Section 3 for $L^2$. 

\medskip
\item {\color{blue} ``Citations: Duplication in citations, like Knorr et al. (Knorr et al.,
2011) should be removed and replaced by (Knorr et al., 2011). ..."}

R: Fixed. 

\medskip
\item {\color{blue}       ``The conclusion mainly focuses on future work for AFT; perhaps the
authors could share their thought on future work on MKNF as well?"} 

R: See the added two paragraph towards the end of Section 7.

\medskip
\item {\color{blue} `` p5: 
this generalization is motivated by operators arising in
knowledge representation that are symmetric." It would be good to give
examples here."}

R:  A footnote (footnote 3) is added. 

\medskip
\item {\color{blue} ``Let us call the existence of a gap between the two pairs of least
fixpoints discussed above" it is not entirely clear ..."}

R: Now it is made explicit. 

\medskip
\item {\color{blue} ``p 9: you write $(\lub(C_1), \glb(C_2)) $ ...

It is then not clear why on line 7 of the proof $a \leq u_0$ should hold.
"} 

R: It is caused by a typo. Once the typo is fixed, the problem is no longer there. 

\medskip
\item {\color{blue} ``p 10: ...
about the use of names: the remark in footnote 6 and the subsequent
paragraph "An MKNF interpretation M ..." are confusing, at least, if
not misleading.
  }

R: The footnote is removed.  In the revised manuscript,  first we define the satisfaction relation with “names”. Then, standard name assumption ensures countability, just like what the reviewer commented. 

\medskip
\item {\color{blue} ``To call a first-order formula satisfiable if a model exists, but not
introduce this terminology for an MKNF formula sounds a bit strange.
Note that "consistent" and "satisfiable" are used a bit
confusingly. ..."}

R: Removed the definitions of consistent, inconsistent, satisfiable here. 
In this paper, we only use “satisfiable” for first-order formulas in $\OB_{\cO, T}$, which has no confusion. For an MKNF formula, we only talk about its MKNF models or 3-valued MKNF models. So, there is no need to stress whether an MKNF formula possesses an MKNF model (resp. a three-valued MKNF model) or not.

\medskip
\item {\color{blue} ``Can you give an intuition why $N=M$ should require $N'=M'$?" }

R: First, we put the definition of 3-valued MKNF model in a definition environment (to be consistent with how 2-valued MKNF models are introduced).
If $M = N$, we are then talking about a two-valued MKNF model. The requirement 
$M' = N'$ reduces the definition to one for 
two-valued MKNF models as given in Definition \ref{2m}, which enables Knorr et al. \shortcite{KnorrAH11} to show 
that 
an MKNF interpretation pair $(M,M)$ that is a three-valued MKNF
model of $\varphi$ corresponds to a two-valued MKNF model $M$ of $\varphi$ as defined in \cite{Motik:JACM:2010}.
Example 4 is added to illustrate the definition of 3-valued MKNF model.

\medskip
\item {\color{blue} 
``The notion of DL-safety should be defined and illustrated."}

R:  Now it is defined and commented on why it is needed (in the revised version, this can be found in the top part of page 15).

\medskip
\item {\color{blue} ``The notion of `partition' $(T,P)$ is in contrast to the usual notion,
where a set is split in (nonempty) disjoint subsets. I have no
concrete suggestion, but the current one is confusing with
mathematical convention."}

R: This is the notation used in AFT. It is convenient to use the same notation for both AFT and semantics. 
\medskip
\item {\color{blue} ``
Example 4: Couldn't be the firt-order part be $a \supset \neg b$? As
said, these are simple ontology axioms (and Horn), the usefulness of
the notion of well-founded models seems questionable."}

R: Yes, or $\neg (a \wedge b)$. Perhaps an atom like $h$ appearing in $\cO$ but not in $\P$ can be a bit illustrating.
The notion of well-founded model defined by Knorr et al.  may  not really ``well-founded" in the conventional sense. 
This is perhaps reasonable since in hybrid systems inconsistencies can naturally arise, which can deny a least fixpoint to be 
an (indended) model. 

\medskip
\item {\color{blue} `` p 17: `(where either $P'=P$ ...)' should be deleted;}
{\color{blue}
'one can check that $(M',N') \not\models \pi (\mathcal{K})$' please
be more explicit here (add details)."}

R: Removed. For the latter remark, the proof of Theorem 5 (old Theorem 4) is polished, now with full details. 

\medskip
\item {\color{blue} ``p 18: "for A, it can only be discovered" what does it refer to?"}

R: The sentence is rewritten to make it straightforward. The text is now in the last part of the second paragraph of Section 6.

\medskip
\item {\color{blue} ``
`p20: ... and therefore $ T \subseteq \Phi(T)$' please detail".

`` `the fixpoint construction confirms that' please reformulate `confirms' "
}

R: The remark seems to refers to ``and therefore $T \subseteq \theta(T)$" in the proof of (old) Theorem 5. The details are now provided. In this new version, it is Theorem 6 (now on page 23, the mentioned reasoning step is in the second paragraph of the proof of Theorem 6).
``Confirms" is reformulated.

\medskip
\item {\color{blue} ``p 21: 'to capture all transitions for hybrid MKNF knowledge bases' 
it is not clear what kind of 'transitions' you mean here."}

R: This is about mapping one state (in terms of a partial partition) to another state. Re-worded.

\medskip
\item {\color{blue} ``Chiaki Sakama, Katsumi Inoue: Paraconsistent Stable Semantics for
Extended Disjunctive Programs. J. Log. Comput. 5(3): 265-285 (1995)

\medskip
Giovanni Amendola, Thomas Eiter, Michael Fink, Nicola Leone, João
Moura: Semi-equilibrium models for paracoherent answer set
programs. Artif. Intell. 234: 219-271 (2016)"}

R:  Added a paragraph (second to last) to Section 7. AFT only provides a framework where semantics are defined in terms of approximators. Whether such an approximator exists for these semantics is open. We are not very optimistic since it may be hard to express special tricks in defining such a semantics by a single approximator.  But in the same paragraph,
a similar question raised by another reviewer on paraconsistent reasoning for hybrid KBs can be interesting (and promising).

\end{itemize}

\bigskip
\noindent
{\bf Reviewer 3}:  Typos and corrections are all fixed. 

\begin{itemize}
\item {\color{blue} ``p. 4: 'Vennekens et al. (Vennekens et al. 2006)' $\rightarrow$ Vennekens et al. (2006)
Change other citations in a similar manner."}

Response: All fixed. 

\medskip
\item {\color{blue} `` p. 6: 'This paper is a substantial revision and extension of a
preliminary report of the work that appeared in (Liu and You 2019).'
$\rightarrow$  Please state exactly what the extensions."
}

R: Stated now. ``Substantial extension" is probably an overstatement. Yes, the revised paper is much longer, but 
the main extension is filling out technical details of the work through a reorganization. 
\end{itemize}
}

\label{firstpage}

\maketitle

  \begin{abstract}
    Approximation fixpoint theory (AFT) provides an algebraic framework for the study of fixpoints of operators on bilattices and has found its applications in characterizing semantics for various classes of logic programs and nonmonotonic languages. In this paper, we show one more application of this kind: the alternating fixpoint operator by Knorr et al. for the study of the well-founded semantics for hybrid MKNF knowledge bases is in fact an approximator of AFT in disguise, which, thanks to the abstraction power
of AFT, characterizes not only the well-founded semantics but also two-valued as well as three-valued semantics for hybrid MKNF knowledge bases. Furthermore, we show an improved approximator for these knowledge bases, of which the least stable fixpoint is information richer than the one formulated from Knorr et al.'s construction. This leads to an improved computation for the well-founded semantics.  This work is built on an extension of AFT 
that supports consistent as well as inconsistent pairs in the induced product bilattice, to deal with inconsistencies that arise in the context of hybrid MKNF knowledge bases. This part of the work can be considered generalizing the original AFT from symmetric approximators to arbitrary approximators. 
  \end{abstract}

  \begin{keywords}
    Approximation Fixpoint Theory, Hybrid MKNF Knowledge Bases, Logic Programs, Answer Set Semantics, Description  Logics,  Inconsistencies
  \end{keywords}

\comment {\tableofcontents}

\section{Introduction}

AFT 
 is a framework for the study of semantics of nonmonotonic logics based on operators and their fixpoints \cite{DeneckerMT04}. Under this theory, the semantics of a logic theory is defined or characterized in terms of respective stable fixpoints constructed by employing an {\em approximator} on a (product) bilattice. 
The least stable fixpoint of such an approximator is called the well-founded fixpoint, which serves as the basis for a well-founded semantics, and the stable  fixpoints that are total characterize a stable semantics, while partial  stable fixpoints give rise to a partial stable semantics. 
The approach is highly general as it only depends on mild conditions on
 approximators, and highly abstract as well since the semantics is given in terms of an algebraic structure. 
As different approximators may represent different intuitions, AFT provides a powerful framework to treat semantics uniformly and allows to explore alternative semantics by different approximators. 

Due to the underlying algebraic structure, a main feature of AFT is that  we can understand some general properties of a semantics without referring to a concrete approximator. For example,  the well-founded fixpoint approximates all other fixpoints, and mathematically, this property holds for all approximators. An implication of this property is that it provides the bases for building constraint propagators for solvers;  for logic programs for example, it guarantees that the true and false atoms in the well-founded fixpoint remain to hold in all stable fixpoints, and as such, the computation for the well-founded fixpoint can be adopted as constraint propagation for the computation of stable fixpoints. 
For example, this lattice structure  of stable fixpoints has provided key technical insights in building a DPLL-based solver for normal hybrid MKNF knowledge bases \cite{Ji17}, while previously the only known computational method was based on guess-and-verify  \cite{Motik:JACM:2010}.

AFT has been applied to default logic as well as autoepistemic logic, and the study has shown how the fixpoint theory induces the main and sometimes new semantics and leads to new insights in these logics \cite{DeneckerMT03}, including the well-founded semantics for 
autoepistemic logic \cite{DBLP:journals/ngc/0001JCJBD16}.
AFT has been adopted in the study of the semantics of logic programs with aggregates \cite{PelovDB07} and disjunctive HEX programs \cite{AnticEF13}.
Vennekens et al. \shortcite{Vennekens:TOCL2006} used AFT in a
modularity study for a number of nonmonotonic logics, and by applying AFT,
Strass  \shortcite{Strass13} showed that many semantics
from Dung's argumentation frameworks and abstract dialectical frameworks \cite{DBLP:journals/ai/Dung95}
 can be obtained rather directly. More recently, AFT has been shown to play a key role in the study of semantics for database revision based on active integrity constraints \cite{BogaertsC18} and in addressing  semantics issues arising in 
weighted abstract dialectical frameworks, which are abstract argumentation frameworks that incorporate not only attacks but also support, joint attacks and joint support \cite{DBLP:conf/aaai/Bogaerts19}. AFT has also contributed to the study of induction \cite{DBLP:journals/ai/0001VD18} and knowledge compilation \cite{DBLP:journals/tplp/BogaertsB15}. 

In this paper, we add one more application to the above collection for hybrid MKNF (which stands for minimal knowledge and negation as failure).
Hybrid MKNF was proposed by Motik and Rosati \shortcite{Motik:JACM:2010} 
  for integrating nonmonotonic rules with description logics (DLs).
 Since reasoning with DLs is based on classic, monotonic logic, there is no support of nonmonotonic features such as defeasible inheritance or default reasoning.   On the other hand, rules under the stable model semantics \cite{GelfondLifschitz88} are formulated mainly to reason with ground knowledge, without supporting quantifiers or function symbols.
It has been argued that such a combination draws strengths from both and the weaknesses of one are balanced by the strengths of the other.  The formalism of hybrid MKNF knowledge bases provides a tight integration of rules with DLs.

A hybrid MKNF knowledge base $\K$ consists of two components, $\cK = (\cO,\cP)$, where $\cO$ is a DL knowledge base, which is expressed by a decidable first-order theory, and $\cP$ is a collection of MKNF rules based on the stable model semantics. 
MKNF structures in this case are two-valued, under which MKNF formulas are interpreted to be true or false. 
Knorr et al.  \shortcite{KnorrAH11} formulated a three-valued extension of MKNF and defined three-valued MKNF models, where the least one is called the {\em well-founded MKNF model}. An alternating fixpoint operator was then formulated for the computation of the well-founded MKNF model for (nondisjunctive) hybrid MKNF knowledge bases. In this paper, our primary goal is to show that this alternating fixpoint operator is in fact an approximator of AFT. Due to the abstraction power of AFT,  it turns out that Knorr et al.'s alternating fixpoint construction provides a uniform characterization of all semantics based on various kinds of three-valued MKNF models, including two-valued MKNF models of \cite{Motik:JACM:2010}.  

As shown in previous research \cite{KnorrAH11,LY17}, not all hybrid MKNF knowledge bases possess a well-founded MKNF model, and in general, deciding the existence of a well-founded MKNF model is intractable even if the underlying DL knowledge base is polynomial \cite{LY17}. On the other hand, we also know that  alternating fixpoint construction provides a tractable means in terms of a linear number of iterations 
to compute the well-founded MKNF model for a subset of hybrid MKNF knowledge bases. 
A question then is whether this subset can be enlarged. 
In this paper, we answer this question positively by formulating 
an improved approximator, which is {\em more precise} than the one derived from Knorr et al.'s alternating fixpoint operator. As a result, the well-founded MKNF model can be computed iteratively for a strictly  larger class of hybrid MKNF knowledge bases than what was known previously.

Hybrid MKNF combines two very different reasoning paradigms, namely closed world reasoning with nonmonotonic rules and open world reasoning with ontologies that are expressed in description logics. In this context, inconsistencies naturally arise. AFT was first developed for consistent approximations. 
In the seminal work \cite{DeneckerMT04}, the authors show that the theory of consistent approximations generalizes to a class of approximators beyond consistent pairs, which are called symmetric approximators. They also state that it is possible to develop a generalization of AFT without the symmetry assumption. These results and claims are given under the restriction that an approximator maps an exact pair 
on a product bilattice (which represents a two-valued interpretation)
to an exact pair. Unfortunately, this assumption is too restrictive for hybrid MKNF since a two-valued interpretation for a hybrid MKNF knowledge base may well lead to an inconsistent state.

Approximations under symmetric approximators already provide a powerful framework for characterizing intended models of a logic theory. But we want to go beyond that. We do not only want 
to capture consistent approximations in the product bilattice, but also want to allow operators to map a consistent state to an inconsistent one, and even allow inconsistent stable fixpoints. This is motivated by the possible role that AFT may play in building constraint propagators  for solvers of an underlying logic (e.g., \cite{Ji17}), where inconsistency not only guides the search via backtracking but also provides valuable information to prune the search space (e.g., by learned clauses in SAT/ASP solvers). One can also argue that inconsistent stable fixpoints may provide useful information for debugging purposes (a potential topic beyond the scope of this paper).

We show in this paper that all of the above requires only a mild generalization of AFT, which is defined for all pairs in the product bilattice without the assumption of symmetry. We 
relax the condition for an approximator so 
 that an approximator is required to map an exact pair to an exact pair only in the case of consistent approximation. 
Based on this revised definition of approximator, we present a definition of the stable revision operator, which is well-defined, increasing, and monotone on the product bilattice of a complete lattice, that guarantees existence of fixpoints and a least fixpoint.

In summary, we extend AFT from consistent and symmetric approximators to arbitrary approximators for the entire product bilattice. 
The goal is to use stable fixpoints as candidates for intended models, or to provide useful information on stable states (in terms of fixpoints) that may contain consistent as well as inconsistent information.  Such an extension is not without subtleties. We provide a detailed account of how such technical subtleties are addressed. 

The paper is organized as follows. The next section introduces notations, basics of fixpoint theory, and the current state of AFT. In Section \ref{AFT0}, we present an extended AFT. Section \ref{mknf} gives a review of three-valued MKNF and hybrid MKNF knowledge bases along with the underlying semantics. Then, in Section \ref{approximator-for-mknf} we show how Knorr et al.'s alternating fixpoint operator can be recast as an approximator and provide semantic characterizations, and in  Section \ref{richer0}, we show an improved approximator. Section \ref{related} is about related work, concluding remarks, and future directions. 

This paper is revised and extended from a preliminary report of the work that appeared in \cite{DBLP:conf/ruleml/LiuY19}.
The current paper is reorganized by first presenting a detailed study of generalized AFT. 
Especially, we provide an elaborate account of 
the original AFT and contrast it with our generalization. In this extended  version of the work, all claims are complete with a proof.

\section{Preliminaries}

In this section, 
we  recall the basic definitions regarding lattices underlying our work based on the Knaster-Tarski fixpoint theory \cite{Tarski1955}.  

A {\em partially ordered set} $\langle L, \leq\rangle $  is a set $L$ equipped  with a partial order $\leq$, which is a reflexive, antisymmetric,  and transitive relation. As usual, the strict order is expressed by $x < y$ as an abbreviation for $x \leq y$ and $x \not =y$.  
Given a subset $S \subseteq L$, an element $x \in L$ is an {\em upper bound} (resp. {\em a lower bound})  if $s \leq x$ (resp. $ x \leq s $) for all $s \in S$.
A {\em lattice} $\langle L, \leq\rangle $ is a {\em partially ordered set} (poset) in which every two elements have a {\em least upper bound} (lub) and a {\em greatest lower bound} (glb). A {\em complete lattice} is a lattice where every subset of $L$ has a least upper bound and a greatest lower bound. A complete lattice has both a least element $\bot$ and a greatest element $\top$.
A greatest lower bound of a subset $S \subseteq L$ is called a {\em meet} and a least upper bound of $S$ is called a {\em join}, and we use the notations: $\bigwedge S = \glb (S)$, $x \wedge y = \glb(\{x,y\})$, $\bigvee S = \lub(S)$, and $x \vee y = \lub(\{x,y\})$.
An operator $O$ on $L$ is {\em monotone} if for all $x,y \in L$,
that $x \leq y$ implies $O (x) \leq O(y)$.
An element $x \in L$ is a {\em pre-fixpoint} of ${O}$ if ${ O}(x) \leq x$; it is a {\em post-fixpoint} of  ${O}$ if $x \leq {O}(x)$. 
The Knaster-Tarski fixpoint theory \cite{Tarski1955} tells us the fact that a monotone operator $O$ on a complete lattice has fixpoints and a least fixpoint, denoted $\lfp(O)$, which coincides with its least pre-fixpoint.
The following result of  Knaster-Tarski fixpoint theory \cite{Tarski1955} serves as the basis of our work in this paper. 

\begin{theorem} Let $\langle L, \leq \rangle$ be a complete lattice and $O$ a monotone operator on $L$. Then $O$ has fixpoints, a least fixpoint, and a least pre-fixpoint. (i) The set of fixpoints of $O$ is a complete lattice under order $\leq$. (ii) The least fixpoint and least pre-fixpoint of $O$ coincide, i.e., $\lfp(O) = \wedge\{x \in L :O(x) \leq x \}$. \end{theorem}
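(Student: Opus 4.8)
The plan is to establish part (ii) together with the bare existence claims first, and then bootstrap from there to part (i). For (ii), I would put $P = \{x \in L : O(x) \leq x\}$, the set of pre-fixpoints, and $a = \glb(P)$, which exists because $L$ is complete, and note $\top \in P$ so $P \neq \emptyset$. The first step is to check $a \in P$: for every $x \in P$ we have $a \leq x$, so monotonicity gives $O(a) \leq O(x) \leq x$, whence $O(a)$ is a lower bound of $P$ and therefore $O(a) \leq \glb(P) = a$. The second step upgrades this to a fixpoint: from $O(a) \leq a$ and monotonicity, $O(O(a)) \leq O(a)$, so $O(a) \in P$, hence $a \leq O(a)$; combined with $O(a) \leq a$ this yields $O(a) = a$. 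Since every fixpoint is a pre-fixpoint, $a$ lies below every fixpoint, so $a = \lfp(O)$; and $a = \glb(P)$ is by construction the least pre-fixpoint. This one argument simultaneously proves existence of fixpoints, of a least fixpoint, of a least pre-fixpoint, and the identity in (ii).

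For part (i), fix an arbitrary subset $S$ of the fixpoint set $F = \{x \in L : O(x) = x\}$; I must produce a greatest lower bound and a least upper bound of $S$ within $F$. The key observation is that these need not coincide with $\glb(S)$ and $\lub(S)$ taken in $L$; instead I would restrict $O$ to a suitable interval sublattice and reapply the fixpoint theory. Concretely, let $u = \glb(S)$ in $L$ and consider the interval $[\bot,u] = \{x \in L : x \leq u\}$, which is itself a complete lattice. It is closed under $O$: if $x \leq u$, then for each $s \in S$ we get $O(x) \leq O(s) = s$, so $O(x) \leq \glb(S) = u$. Applying the order-dual of the least-fixpoint result just proved to $O$ restricted to $[\bot,u]$ produces its greatest fixpoint $g$; then $g \in F$, $g \leq u$ so $g$ is a lower bound of $S$ in $F$, and any $f \in F$ with $f \leq s$ for all $s \in S$ satisfies $f \leq u$, hence lies in $[\bot,u]$ and is a fixpoint of the restriction, so $f \leq g$. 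Thus $g$ is the greatest lower bound of $S$ in $F$. Dually, with $\ell = \lub(S)$ and the interval $[\ell,\top]$ — closed under $O$ since $x \geq \ell$ implies $O(x) \geq O(s) = s$ for all $s$, hence $O(x) \geq \ell$ — the least fixpoint of $O$ on $[\ell,\top]$ is the least upper bound of $S$ in $F$. Since every subset of $F$ has both a glb and a lub in $F$, $F$ is a complete lattice; taking $S = \emptyset$ recovers its top and bottom elements.

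I expect the only real subtlety to be in part (i): recognizing that suprema and infima in $F$ are computed in the ``wrong place'' from the point of view of $L$, and handling this by the interval-restriction device, which also quietly invokes the order-dual form of the theorem (a monotone operator on a complete lattice has a greatest fixpoint, equal to the join of its post-fixpoints). Part (ii) and the existence statements are a direct two-step argument with no obstacle beyond careful bookkeeping of the monotonicity applications.
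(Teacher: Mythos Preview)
The paper does not actually prove this theorem: it is stated without proof as the classical Knaster--Tarski result \cite{Tarski1955} and used only as background for the rest of the development, so there is no ``paper's own proof'' to compare against.

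That said, your argument is correct and is essentially Tarski's original proof. The two-step argument for (ii) --- showing $a = \bigwedge\{x : O(x) \le x\}$ is a pre-fixpoint by monotonicity, then a fixpoint by iterating once more --- is exactly the standard one. Your treatment of (i) via restriction of $O$ to the interval $[\bot, \glb(S)]$ (respectively $[\lub(S), \top]$) and an appeal to the order-dual is also the textbook approach; the closure verification and the check that any fixpoint lower bound of $S$ already lives in the interval are both handled correctly, including the vacuous case $S = \emptyset$. The only implicit step is that an interval in a complete lattice is again a complete lattice under the restricted order, but this is immediate. Nothing is missing.
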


A {\em chain} in a poset $\langle L, \leq \rangle$ is a linearly ordered subset of $L$. A poset $\langle L, \leq\rangle$ is {\em chain-complete} if it contains a least element $\bot$ and every chain $C \subseteq L$ has a least upper bound in $L$. A complete lattice is chain-complete, but the converse does not hold in general. However, as pointed out by \cite{DeneckerMT04},
the Knaster-Tarski fixpoint theory generalizes to chain-complete posets.

\begin{theorem}
\label{chain-complete0} {\rm \cite{chain-complete}} Let $\langle L, \leq \rangle$ be a chain-complete poset and $O$ a monotone operator on $L$. Then $O$ has fixpoints, a least fixpoint, and a least pre-fixpoint. (i) The set of fixpoints of $O$ is a chain-complete poset under order $\leq$. (ii)  The least fixpoint and  least pre-fixpoint of $O$ coincide.\end{theorem}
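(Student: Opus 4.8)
The plan is to build the least fixpoint of $O$ explicitly by transfinite iteration from $\bot$, to read off statement (ii) and the existence of a least pre-fixpoint directly from that construction, and then to settle the chain-completeness of the set of fixpoints in (i) by relativizing $O$ to a principal up-set and re-applying the existence part just established. So the first step is to define an ordinal-indexed sequence $\langle x_\alpha\rangle$ by $x_0 = \bot$, $x_{\alpha+1} = O(x_\alpha)$, and $x_\lambda = \lub\{x_\alpha : \alpha < \lambda\}$ at limit ordinals $\lambda$. The key lemma, proved by transfinite induction, is that this sequence is non-decreasing (so each $\{x_\alpha : \alpha<\lambda\}$ is a chain): the successor step uses monotonicity of $O$ together with $x_\alpha \leq O(x_\alpha)$ from the induction hypothesis, and the limit step uses the defining property of the least upper bound. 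Chain-completeness of $L$ is exactly what guarantees the limit stages are defined. Since $L$ is a set, the sequence cannot be strictly increasing through all ordinals, so there is a least $\gamma$ with $x_{\gamma+1} = x_\gamma$; then $a := x_\gamma$ satisfies $O(a)=a$, which shows $O$ has a fixpoint.

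Next I would show that $a$ is the least pre-fixpoint of $O$, which simultaneously yields that $a$ is the least fixpoint and proves (ii). Let $y$ be any pre-fixpoint, i.e. $O(y)\leq y$. By transfinite induction one gets $x_\alpha \leq y$ for all $\alpha$: the base case is $\bot\leq y$; at a successor, $x_{\alpha+1} = O(x_\alpha) \leq O(y) \leq y$ by monotonicity and the pre-fixpoint property; at a limit, $y$ is an upper bound of $\{x_\alpha : \alpha<\lambda\}$, hence $x_\lambda \leq y$. Therefore $a\leq y$. Since every fixpoint is in particular a pre-fixpoint, $a$ lies below every fixpoint, so $a = \lfp(O)$; and $a$ is itself a pre-fixpoint, so $\lfp(O)$ coincides with the least pre-fixpoint, establishing (ii).

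For (i), write $\textit{Fix}(O)$ for the set of fixpoints of $O$ ordered by $\leq$. It has a least element, namely $a$, by the paragraph above, so it remains to show that every chain $C \subseteq \textit{Fix}(O)$ has a least upper bound inside $\textit{Fix}(O)$. Let $c = \lub C$ taken in $L$ (which exists by chain-completeness). For each $x\in C$ we have $x = O(x) \leq O(c)$ by monotonicity, so $O(c)$ is an upper bound of $C$ and hence $c \leq O(c)$; but $c$ need not itself be a fixpoint, so the real work is to complete it upward. Consider the principal up-set $L_c = \{x\in L : c \leq x\}$: it is chain-complete (its least element is $c$, and lubs of its chains, computed in $L$, lie above $c$), and $O$ restricts to a monotone operator on $L_c$ (if $c\leq x$ then $c\leq O(c)\leq O(x)$). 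Applying the already-proved existence result to $O\!\restriction_{L_c}$ gives $c^* := \lfp(O\!\restriction_{L_c})$, which is a fixpoint of $O$ with $c\leq c^*$, hence an upper bound of $C$ in $\textit{Fix}(O)$; and any fixpoint $d$ that bounds $C$ satisfies $c\leq d$, so $d\in L_c$ and $c^*\leq d$ by minimality. Thus $c^*$ is the least upper bound of $C$ in $\textit{Fix}(O)$, and $\textit{Fix}(O)$ is chain-complete.

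The step I expect to be the main obstacle is exactly this last one: in contrast to the complete-lattice case, the least upper bound of a chain of fixpoints need not be a fixpoint, so one cannot conclude chain-completeness of $\textit{Fix}(O)$ directly; the relativization-to-$L_c$ device, which re-invokes the existence half of the theorem on a strictly smaller chain-complete poset, is what bridges the gap. A secondary point that needs care is the set-theoretic justification that the transfinite iteration stabilizes — one must note that a strictly increasing ordinal-indexed sequence inside a fixed set is impossible, e.g.\ by a cardinality (Hartogs) argument.
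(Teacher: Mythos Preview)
The paper does not supply its own proof of this theorem; it is quoted from the literature (Markowsky's result on chain-complete posets) and used as a black box. Your argument is correct and is essentially the classical one: build the iterates $x_\alpha$ by transfinite recursion from $\bot$, use monotonicity to show they form an increasing chain (so chain-completeness makes the limit stages meaningful), stabilize by a cardinality/Hartogs argument to get a fixpoint $a$, and then show $a$ sits below every pre-fixpoint by a second transfinite induction, which yields (ii). For (i), your relativization to the up-set $L_c$ above the lub of a chain of fixpoints, followed by re-applying the existence part, is exactly the standard device and is carried out correctly; the only cosmetic point is that for the empty chain $C$ the lub computed in $L$ is $\bot$ rather than $c$, but this case is already covered by the existence of the least fixpoint $a$.
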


Given a complete lattice $\langle L, \leq\rangle $,  AFT is built on the induced product bilattice 
$\langle L^2, \leq_p\rangle$, where $\leq_p$ is called the {\em precision order} and defined as:
for all $x, y, x', y' \in L$,
$(x, y) \leq_p (x', y')$ if $x \leq x'$ and $y' \leq y$. The $\leq_p$ ordering is a complete lattice ordering on $L^2$. 
Below, we often write a lattice $\langle L, \leq\rangle $ by $L$ and its induced product bilattice by  $L^2$.

We define two {\em projection functions} for pairs in $L^2$: $(x,y)_1 = x$ and $(x,y)_2 = y$.  For simplicity, 
we write $A(x,y)_i$, where $i \in [1,2]$, instead of more formal $(A(x,y))_i$ to refer to the corresponding projection of the value of the operator $A$ on the pair $(x,y)$.
A pair $(x, y) \in L^2$ is {\em consistent} if $x \leq y$, {\em inconsistent} otherwise, and {\em exact} if $x = y$. 
A consistent pair $(x,y)$ in $L$ defines an {\em interval}, denoted $[x,y]$, which is identified by the set
$\{z~|~ x \leq z \leq y\}$. 
We therefore also use an interval to denote the corresponding set.  
A consistent pair $(x, y)$ in $L$ can be seen as an approximation of every $z \in L$ such that $z \in [x, y]$. In this sense, the precision order $\leq_p$ corresponds to the precision of approximation, while an exact pair approximates the only element in it. 
We denote  by $L^c$ the set of consistent pairs in $L^2$. Note that $\langle L^c, \leq_p \rangle$ is not a complete lattice in general.

On the other hand, an inconsistent pair $(x,y)$ in $L^2$ can be viewed as a  departure from some point $z \in L$, for which  $(z,z)$ is revised either by increasing the first component of the pair (w.r.t. the order $\leq$), or by decreasing its second component, or by performing both at the same time. Inconsistent pairs have a natural embedding of the notion of {\em the degree of inconsistency}. For two inconsistent pairs such that $(x_1, y_1)\leq_p (x_2,y_2)$, the latter is of higher degree of inconsistency than the former.  Here, there is a natural notion of inconsistency being {\em partial} as in contrast with full inconsistency represented by the special pair $(\top, \bot)$. Intuitively, this means that an inconsistent pair in general may embody consistent as well as inconsistent information. 

In logic programming for instance, $L$ is typically the power set $2^{\Sigma}$, where $\Sigma$ is a set of (ground) atoms representing reasoning individuals. A consistent pair $(T,P)$, where $T$ and $P$ are sets of atoms and $T \subseteq P$, is considered a three-valued interpretation, where $T$ is the set of true atoms and $P$ the set of possibly true atoms; thus the atoms in $\Sigma \setminus P$ are false.  If $T \not \subseteq P$,  the atoms that are in $T$ but not in $P$ are interpreted both true and false, resulting in inconsistency.  This gives rise to the notion of inconsistency in various degrees.

\subsection{Approximation fixpoint theory: the previous development}


At the center of AFT is the notion of approximator. We call an operator $A: L^2 \rightarrow L^2$ 
an {\em approximator} if $A$  is $\leq_p$-monotone and maps exact pairs to exact pairs. 
To emphasize the role of an operator $O: L \rightarrow L $ whose fixpoints are approximated by an approximator,
we 
say that  $A$ is 
an {\em approximator for $O$} if $A$ is $\leq_p$-monotone and $A(x,x) = (O(x), O(x))$ for all $x \in L$. 

In \cite{DeneckerMT04}, 
AFT was first developed for consistent approximations, where an approximator is {\em consistent} if it maps consistent pairs to consistent pairs. 
We denote by  $\Approx(L^2)$ the set of all approximators on $L^2$ and by $\Approx(L^c)$ the set of consistent approximators on $L^c$.  Given an approximator $A \in \Approx(L^2)$,  we denote by $A^c$ the restriction of $A$ to $L^c$ under the condition that $A^c$ is an operator on $L^c$.\footnote{Such $A^c$ may not exist in general, but for symmetric approximators, it always does; cf. Proposition 14 of \cite{denecker2000approximations}.} 

For the study of semantics based on partial interpretations, we can focus on the fixpoints of approximators, independent of how they may approximate operators  on $L$.  First, since $\langle L^c, \leq_p \rangle$ is not a complete lattice, the Knaster-Tarski fixpoint theory does not apply. But $L^c$ is a chain-complete poset (ordered by $\leq_p$), so according to Markowsky's theorem, an approximator $A \in \Approx(L^c)$ has a least fixpoint, called {\em Kripke-Kleene fixpoint} of $A$, and other fixpoints. However, some of these fixpoints may not satisfy the minimality principle commonly adopted in knowledge representation.\footnote{The situation is analogue to the notion of Kripke-Kleene model of a logic program, which is a least fixpoint of a 3-valued van Emden-Kowalski operator.}  To eliminate non-minimal fixpoints, we can focus on what are called the  {\em stable fixpoints} of $A$, which 
are the fixpoints of a {\em stable revision operator} $St_A: L^c \rightarrow L^c$, which is defined as:
 \begin{eqnarray}
 St_A(u,v) = 
(\lfp({A}( \cdot , v)_1), \lfp(A(u, \cdot)_2))
\label{stable-revision0}
\end{eqnarray}
where ${A}( \cdot , v)_1$ denotes the operator $[\bot, v] \rightarrow [\bot, v]: z \mapsto A(z, v)_1$ and $A(u,\cdot)_2$ denotes the operator $[u,\top] \rightarrow [u,\top]: z \mapsto A(u,z)_2$.

Denecker et al. \shortcite{DeneckerMT04} show that (\ref{stable-revision0}) is well-defined for pairs in $L^c$ under a desirable property. We call a pair $(u,v) \in L^c$ {\em $A$-reliable} if $(u,v) \leq_p A(u,v)$. Intuitively, if $A(u,v)$ is viewed as a revision of $(u,v)$ for more accurate approximation, under $A$-reliability, $A(u,v)$ is at least as accurate as $(u,v)$.
 Furthermore,
Denecker et al. \shortcite{DeneckerMT04} show that 
if a pair $(u,v) \in L^c$ is $A$-reliable, then $A(\cdot , v)_1$ is internal in $[\bot, v]$, hence we can safely define $A(\cdot,v)_1$ to be an operator on the lattice $[\bot, v]$; similarly, since $A(u,\cdot)_2$ is internal in $[u,\top]$, we can define it on lattice $[u,\top]$ (Proposition 3.3). Since the operators $A(\cdot,v)_1$ and $A(u,\cdot)_2$ are $\leq$-monotone on their respective domains, a least fixpoint for each exists; hence the stable revision operator $St_A$ is well-defined. 
Note that by definition, since a fixpoint of $St_{A}$ is a fixpoint of $A$, a stable fixpoint of $A$ is a fixpoint of $A$.

However, the notion of $A$-reliability is not strong enough to guarantee another desirable property: for any $A$-reliable pair $(u,v)$, we want $(u,v) \leq_p St_{A}(u,v) (= (\lfp({A}( \cdot , v)_1), \lfp(A(u, \cdot)_2)))$; i.e., a stable fixpoint computed from a given pair should be at least as accurate.
This property does not hold in general for $A$-reliable pairs. In addition, we also want 
$A(u,v) \leq_p St_{A}(u,v$), so that there is a guarantee that the stable revision operator ``revises even more", i.e., stable revision is at least as accurate as revision by a single application of $A$.
We therefore introduce a new property: 
an $A$-reliable pair $(u,v) \in L^c$ is called {\em $A$-prudent} if $u \leq \lfp({A}( \cdot , v)_1)$. We denote by $L^{rp}$
the set of 
$A$-prudent pairs in $L^c$. Denecker et al. \shortcite{DeneckerMT04} show that for all $A$-prudent pairs $(u,v)$ in $L^c$, $(u,v)
\leq_p St_{A}(u,v)$ and $A(u,v)
\leq_p St_{A}(u,v)$
(Propositions 3.7 and 3.8). 

\begin{example}
Consider a complete lattice $\langle L, \leq\rangle$ where 
$L = \{\bot, \top\}$ and $\leq$ is defined as usual. Define an operator $A$ on $L^c$ as: 
$A(\top, \top) = (\top, \top)$ and $A(\bot, \top) = A(\bot, \bot) = (\top, \top)$. It can be seen that 
$A$ is $\leq_p$-monotone on $L^c$, the pairs $(\top, \top)$ and $(\bot, \top)$ are $A$-reliable, and $(\bot, \bot)$ is not.
Both $A$-reliable pairs $(\top, \top)$ and $(\bot, \top)$ are $A$-prudent as well, thus $L^{rp} = \{(\top, \top), (\bot, \top)\}$. 

Now let $A'$ be the identify operator on $L^c$ except $A'(\bot,\bot) = (\top, \top)$. The operator $A'$ is $\leq_p$-monotone on $L^c$. The pairs $(\top, \top)$ and $(\bot, \top)$ are $A'$-reliable whereas $(\bot, \bot)$ is not.
 But the $A'$-reliable pair
$(\top,\top)$ is not $A'$-prudent because $\lfp({A'}( \cdot , \top)_1) = \bot < \top$.  Note that $(\top,\top)$ is a fixpoint of $A'$ but not a stable fixpoint. Thus, for approximator $A'$, $L^{rp} = \{(\bot, \top)\}$.
\end{example}

The above development has led to the following results of the properties of the stable revision operator.

\begin{theorem}  [Theorem 3.11 of \cite{DeneckerMT04}]
Let $L$ be a complete lattice, $A \in \Approx(L^c)$. The set of $A$-prudent elements of $L^c$ is a chain-complete poset under the precision order $\leq_p$, with least element $(\bot,\top)$. The stable revision operator is a well-defined, increasing and monotone operator in this poset.
\label{chain-complete}
\end{theorem}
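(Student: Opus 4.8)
The plan is to establish the three assertions in turn: that $\langle L^{rp},\leq_p\rangle$ is a chain-complete poset with least element $(\bot,\top)$; that $St_A$ maps $L^{rp}$ into $L^{rp}$; and that on this poset $St_A$ is increasing and $\leq_p$-monotone. Two facts will be used throughout. By Proposition~3.3 of \cite{DeneckerMT04}, if $(u,v)$ is $A$-reliable then $A(\cdot,v)_1$ is internal in $[\bot,v]$ and $A(u,\cdot)_2$ is internal in $[u,\top]$, so the least fixpoints in (\ref{stable-revision0}) exist (Knaster-Tarski) and equal the corresponding least pre-fixpoints. And, by $\leq_p$-monotonicity of $A$, for fixed $z$ the value $A(z,v)_1$ is antitone in $v$ and $A(u,z)_2$ is antitone in $u$; combined with the standard pre-fixpoint transfer (a pre-fixpoint of a pointwise-larger operator on a smaller interval is a pre-fixpoint of a pointwise-smaller one on a larger interval), this yields $v'\leq v\Rightarrow\lfp(A(\cdot,v)_1)\leq\lfp(A(\cdot,v')_1)$, and symmetrically for the second component.

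The poset claim is routine. The pair $(\bot,\top)$ is $\leq_p$-least in $L^2$, hence trivially consistent, $A$-reliable and $A$-prudent, so it is the least element of $L^{rp}$. For a $\leq_p$-chain $C\subseteq L^{rp}$, let $(\bar u,\bar v)=\bigl(\lub\{u\mid(u,v)\in C\},\ \glb\{v\mid(u,v)\in C\}\bigr)$ be its $\leq_p$-least upper bound in $L^2$; I would check $(\bar u,\bar v)\in L^{rp}$. Consistency: comparability of any two chain members forces $u\leq v''$ for all $(u,v),(u'',v'')\in C$, so $\bar u\leq\bar v$. Reliability: each $p\in C$ satisfies $p\leq_p A(p)\leq_p A(\bar u,\bar v)$, so $A(\bar u,\bar v)$ is an upper bound of $C$ and hence dominates its least upper bound. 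Prudence: for each $(u,v)\in C$, $\bar v\leq v$ gives $\lfp(A(\cdot,v)_1)\leq\lfp(A(\cdot,\bar v)_1)$, so $u\leq\lfp(A(\cdot,\bar v)_1)$, and taking the supremum over $C$ gives $\bar u\leq\lfp(A(\cdot,\bar v)_1)$.

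The core is that $St_A$ maps $L^{rp}$ into itself and is increasing. Fix $(u,v)\in L^{rp}$ and put $(u',v')=St_A(u,v)$. Since $A(u,v)_2\leq v$ (reliability), $v$ is a pre-fixpoint of $A(u,\cdot)_2$, so $v'\leq v$; together with $u\leq u'$ (prudence) this gives $(u,v)\leq_p St_A(u,v)$. The decisive observation is that $(u,v')$ is itself $A$-reliable: $v'$ is a fixpoint of $A(u,\cdot)_2$, so $A(u,v')_2=v'$, while $v'\leq v$ and $\leq_p$-monotonicity give $A(u,v')_1\geq A(u,v)_1\geq u$. Hence $A(\cdot,v')_1$ is internal in $[\bot,v']$, and since $v'\leq v$ every pre-fixpoint $z$ of $A(\cdot,v')_1$ in $[\bot,v']$ satisfies $z\leq v$ and $A(z,v)_1\leq A(z,v')_1\leq z$, so $z$ is a pre-fixpoint of $A(\cdot,v)_1$ in $[\bot,v]$ and thus $u'\leq z$; taking the meet yields $u'\leq\lfp(A(\cdot,v')_1)\leq v'$. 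This single inequality simultaneously shows that $(u',v')$ is consistent and that it is $A$-prudent. Finally, $(u',v')$ is $A$-reliable, since $u'=A(u',v)_1\leq A(u',v')_1$ and $A(u',v')_2\leq A(u,v')_2=v'$. Hence $St_A(u,v)\in L^{rp}$.

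It remains to prove monotonicity. Let $(u_1,v_1)\leq_p(u_2,v_2)$ in $L^{rp}$ and $(u_i',v_i')=St_A(u_i,v_i)$. The first coordinate is easy: $v_2\leq v_1$ makes $A(\cdot,v_1)_1\leq A(\cdot,v_2)_1$ pointwise, and $u_2'\in[\bot,v_2]\subseteq[\bot,v_1]$ is a pre-fixpoint of $A(\cdot,v_1)_1$, so $u_1'\leq u_2'$. For the second coordinate, recall that $v_i'=\lfp(A(u_i,\cdot)_2)$ depends only on $u_i$, and that $u_1\leq u_2$ gives $A(u_2,\cdot)_2\leq A(u_1,\cdot)_2$ pointwise; it therefore suffices to show $u_2\leq v_1'$, for then $v_1'\in[u_2,\top]$ with $A(u_2,v_1')_2\leq A(u_1,v_1')_2=v_1'$ is a pre-fixpoint of $A(u_2,\cdot)_2$, whence $v_2'\leq v_1'$. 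I expect $u_2\leq v_1'$ to be the main obstacle: it fails for merely $A$-reliable pairs, so one must invoke $A$-prudence of $(u_2,v_2)$, which gives $u_2\leq\lfp(A(\cdot,v_2)_1)$ and thereby reduces the task to the least-fixpoint inequality $\lfp(A(\cdot,v_2)_1)\leq v_1'$. This I would settle by a pre-fixpoint/transfinite-induction argument in the spirit of the consistency step above, using repeatedly that $A$ maps consistent pairs to consistent pairs together with $\leq_p$-monotonicity. With this inequality in hand, $St_A(u_1,v_1)\leq_p St_A(u_2,v_2)$, and combining with the preceding parts completes the proof.
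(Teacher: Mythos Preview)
Your argument is solid through chain-completeness of $L^{rp}$, closure of $St_A$ on $L^{rp}$, the increasing property, and the first-coordinate half of monotonicity. The consistency step via the auxiliary reliable pair $(u,v')$ is exactly right.

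The gap is where you say it is. You correctly reduce $v_2'\leq v_1'$ to the domain condition $u_2\leq v_1'$, and then further to $\lfp(A(\cdot,v_2)_1)\leq v_1'$. But the promised ``pre-fixpoint/transfinite-induction argument in the spirit of the consistency step'' does not go through by direct analogy. In the consistency step you exploited $v'\leq v$, which let you transport pre-fixpoints from $[\bot,v']$ into $[\bot,v]$. Here you would need to transport pre-fixpoints of $A(\cdot,v_1')_1$ on $[\bot,v_1']$ into $[\bot,v_2]$, and for that you would need $v_1'\leq v_2$, which is not available (nor is $v_2\leq v_1'$). Likewise, a transfinite induction on the iterates $x_\alpha$ of $A(\cdot,v_2)_1$ stalls at the successor step: to bound $A(x_\alpha,v_2)_1$ by $v_1'$ via consistency of $A$ one is led to compare $A(x_\alpha,v_2)$ with $A(u_1,v_1')$, and neither $u_1\leq x_\alpha$ nor a relation between $v_2$ and $v_1'$ is available for small $\alpha$. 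So the sketch names the right obstacle but does not supply the missing idea; invoking prudence of $(u_2,v_2)$ only replaces $u_2$ by the larger $u_2'$, which makes the target inequality harder, not easier.

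For comparison, the paper does not prove this theorem itself; it quotes it from \cite{DeneckerMT04} and instead proves the generalization (Theorem~\ref{chain}) in which the projection operators $A(\cdot,v)_1$ and $A(u,\cdot)_2$ are taken on all of $L$ rather than on $[\bot,v]$ and $[u,\top]$. In that setting the domain obstruction simply vanishes: from $u_1\leq u_2$ one has $A(u_2,v_1')_2\leq A(u_1,v_1')_2=v_1'$, so $v_1'$ is a pre-fixpoint of $A(u_2,\cdot)_2$ on $L$ with no need to check $v_1'\in[u_2,\top]$, and $v_2'\leq v_1'$ follows in one line. That is precisely why the generalized framework is technically smoother here; the price paid (possible loss of consistent stable fixpoints, cf.\ Example~\ref{why} and Definition~\ref{strong}) is handled separately.
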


This theorem  serves as the foundation for AFT as it guarantees that the stable revision operator has fixpoints and a least fixpoint, which we have called stable fixpoints of $A$. 

The notion of approximator is  then generalized to {\em symmetric approximators}, which are $\leq_p$-monotone operators $A$ on $L^2$ such that $A(x,y)_1 = A(y,x)_2$, for all $x, y \in L$. As remarked in 
\cite{DeneckerMT04}, this generalization is motivated by operators arising in knowledge 
representation that are symmetric.\footnote{For example,  Fitting's immediate consequence operator 
 for normal logic programs 
 \cite{Fitting02}, placed in the context of bilattice $((2^{\Sigma})^2, \subseteq_p)$ where $\Sigma$ is a set of ground atoms, induces 
a symmetric approximator.}
A critical property of a symmetric approximator $A$  is that
$A(x,x)$ yields an exact pair, for all $x \in L$, i.e., it maps an exact pair to an exact pair, which is consistent. This can be seen  as follows: 
Since $A(x,x) = (A(x,x)_1, A(x,x)_2)$ for all $x \in L$, and by the symmetry of $A$, $A(x,x)_1 = A(x,x)_2$ and thus $A(x,x)$ is consistent.



\section{Approximation Fixpoint Theory Generalized}
\label{AFT0}
In this section, we generalize AFT as given in \cite{DeneckerMT04} 
from consistent and symmetric approximators to arbitrary approximators.
 This generalization is needed in order to define approximators for hybrid MKNF knowledge bases since an exact pair in this context is a two-valued interpretation which can be mapped to an inconsistent one. This is because 
 a hybrid MKNF knowledge base allows predicates to appear both in the underlying DL knowledge base and in rules, inconsistencies may arise from the combination of classic negation in the former and derivations using nonmonotonic negation in the latter.

The current AFT is defined for consistent and symmetric approximators. As alluded earlier, 
a critical property of a symmetric approximator  is that it maps an exact pair to an exact pair. 
However, a $\leq_p$-monotone operator on $L^2$ may not possess this property.

\begin{example}  \cite{BiYF14}
Consider a complete lattice where ${L}= \{\bot, \top\}$ and $\leq$ is defined as usual. Let ${O}$ be the identity function on
${L}$. Then we have two fixpoints, ${O}(\bot) = \bot$ and ${O}(\top) = \top$.
Let ${A}$ be an identity function on ${L}^2$ everywhere except ${A}(\top,\top) = (\top, \bot)$.  Thus, ${A}(\top,\top)$ is inconsistent.
It is easy to check that ${A}$ is $\leq_p$-monotone, especially, from $(\top, \top) \leq_p (\top, \bot)$ we have ${A}(\top, \top) \leq_p { A}(\top, \bot)$. There is exactly one exact pair 
$(\bot,\bot)$ for which ${A}(\bot, \bot)$ is consistent, and the condition ${A}(\bot, \bot) = ({O}(\bot), {O}(\bot))$ is satisfied. For the other exact pair $(\top,\top)$, $A(\top,\top)$ is inconsistent 
and ${A}(\top, \top) \not = ({O}(\top), O(\top))$, even though ${O}(\top) = \top$. The fixpoint $\top$ of ${ O}$ is not captured by the operator ${A}$ because ${A}(\top, \top)$ is inconsistent.

\medskip
\noindent
{\bf  Conclusion:}  Though the operator $A$ above is $\leq_p$-monotone on $L^2$, it is not an approximator by the current definition because it fails to map an exact pair to an exact pair when inconsistency arises.\footnote{This example specifies a system  in which states are represented by a pair of factors \-- high and low. Here, all states are stable except the one in which both factors are high.  This state may be transmitted to an ``inconsistent state" with the first factor  high and the second low.
This state is the only inconsistent one, and it itself is stable.
}
\label{example1}
\end{example}

In order to accommodate operators like $A$ above, we present a generalization by relaxing the condition for an approximator.
\begin{definition}
We say that an operator $A: {L}^2\rightarrow {L}^2$ is an approximator 
if $A$ is $\leq_p$-monotone and for all $x \in {L}$, if ${A}(x,x)$ is consistent then $A$ maps $(x,x)$ to an exact pair. Let $O$ be an operator on $L$. 
We say that 
$A: L^2 \rightarrow L^2$ is an {\em approximator for $O$}
if $A$ is an approximator and for all $x \in {L}$, if ${A}(x,x)$ is consistent then 
 ${A}(x,x) = ({O}(x), {O}(x))$.
\end{definition}

 That is, we make the notion of approximation partial:  $A(x,x)$ captures $O$ only when $A(x,x)$ is consistent. Under this definition, the operator $A$ in Example \ref{example1} is an approximator and it approximates, for example, the identify operator $O$ on $L$.

Before we proceed to generalize the notion of stable revision operator,  we need to extend the definition of $A$-reliability and $A$-prudence to pairs in $L^2$. Such a definition is already provided in the study of well-founded inductive definitions.
Following  \cite{Denecker07}, given an operator $A$ on $L^2$,
we say that a pair $(u, v) \in {L}^2$ is {\em ${A}$-contracting} if $(u, v) \leq_p {A}(u, v)$.\footnote{Earlier in this paper, 
$A$-contracting pairs were called $A$-reliable in the context of $L^c$.} 
The notion of $A$-prudence is generalized to $L^2$ as well. 
 A pair $(u, v) \in {L}^2$ is {\em ${A}$-prudent}  if $u\leq \lfp({A}(\cdot, v)_1)$ (when $ \lfp({A}(\cdot, v)_1)$ exists). By an abuse of notation and without confusion, in the rest of this paper  we will continue to use ${L}^{rp}$ but this time to denote the set of ${A}$-contracting and ${A}$-prudent pairs in ${L}^2$.

Now, we relax the definition of the stable revision operator as follows: Given any pair $(u,v) \in L^2$, define
 \begin{eqnarray} St_A(u,v) = 
(\lfp({A}( \cdot , v)_1), \lfp(A(u, \cdot)_2))
\label{stable-revision}
\end{eqnarray}
where ${A}( \cdot , v)_1$ denotes the operator $L \rightarrow L: z \mapsto A(z, v)_1$ and $A(u,\cdot)_2$ denotes the operator $L \rightarrow L: z \mapsto A(u, z)_2$. That is, both 
${A}( \cdot , v)_1$ and $A(u,\cdot)_2$ are operators on $L$.

\medskip
\noindent
{\bf Notation:}  Let $(u,v) \in L^2$ and $A  \in  \Approx(L^2)$. We define
$$(C_1(v), C_2(u)) =  (\lfp({A}( \cdot , v)_1), \lfp(A(u, \cdot)_2))$$ where  ${A}( \cdot , v)_1$ and $A( u, \cdot )_2$ are the respective  projection operators defined on $L$. We use the notation 
$(C_1(v), C_2(u))$ with the understanding that the underlying approximator is clear from the context.

Since $A$ is $\leq_p$-monotone on $L^2$, the projection operators ${A}( \cdot , v)_1$ and $A( u, \cdot )_2$, for any pair $(u,v) \in L^2$, are both $\leq$-monotone on $L$, which guarantees the existence of a least fixpoint for each. Thus, the stable revision operator in equation (\ref{stable-revision}) is well-defined for all pairs in $L^2$.  Note that in this case a stable fixpoint can be inconsistent. 
For example, consider lattice $L = \{\bot,\top\}$ and an operator $A$ on $L^2$, which is identity on every pair except $A(\bot,\bot) = (\top,\bot)$. Clearly, $A$ is $\leq_p$-monotone.  The inconsistent pair $(\top,\bot)$ is a stable fixpoint of $A$ since 
$St_A(\top,\bot) = 
(\lfp({A}( \cdot , \bot)_1), \lfp(A(\top, \cdot)_2)) = (\top, \bot)$. 

The definition of stable revision above has been proposed and adopted in the literature of AFT already,\footnote{But notice a critical difference in our definition of an approximator discussed above.}  e.g., in \cite{DeneckerMT04} and more recently in \cite{BogaertsC18,Bogaerts-denecker-AIJ-15}, for consistent and symmetric approximators. It however differs from stable revision for consistent approximators with regard to 
 the domains of the two projection operators. As mentioned earlier, 
in consistent AFT (where an approximator is from $\Approx(L^c)$), we know from \cite{DeneckerMT04}
that
${A}( \cdot , v)_1$ is internal in $[\bot, v]$ 
so we define 
${A}( \cdot , v)_1$ to be an operator on the lattice $[\bot,v]$, and  $A( u, \cdot )_2$ is internal in $[u,\top]$ so we define it 
 on lattice $[u,\top]$. Now, let us generalize this to all approximators in $\Approx(L^2)$ for consistent pairs in $L^c$.

\medskip
\noindent
{\bf Notation:}  Let $(u,v) \in L^c$ and $A \in \Approx(L^2)$. We define $$(D_1(v), D_2(u))  =  (\lfp({A}( \cdot , v)_1), \lfp(A(u, \cdot)_2))$$ where  ${A}( \cdot , v)_1$ is defined on $[\bot, v]$ and $A( u, \cdot )_2$ is defined on $[u,\top]$. In the sequel, the term {\em consistent stable fixpoints} refer to the fixpoints determined by this definition.

Since we consider the entire product bilattice $L^2$, we are interested in knowing which consistent pairs in it make the above projection operators well-defined under our relaxed definition of approximators.

\begin{proposition}
Let $\langle {L}, \leq\rangle$ be a complete lattice and
${A}$ an approximator on $L^2$. 
If a consistent pair $(u,v) \in {L}^2$ is $A$-contracting and $A(u,u)$ is consistent, 
then
  for every 
$x \in [u, \top]$,  ${A}(u,x)_2 \in [u, \top]$, and 
for every $x \in [\bot,v]$, $A(u,v)_1 \in [\bot, v]$.
\label{3op}
\end{proposition}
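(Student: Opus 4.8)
The plan is to prove the two containments separately, in each case using only $\leq_p$-monotonicity of $A$ together with the two hypotheses: that $(u,v)$ is $A$-contracting, i.e. $(u,v)\leq_p A(u,v)$, and that $A(u,u)$ is consistent (hence, by the generalized definition of approximator, $A(u,u)$ is an exact pair). Recall that $(u,v)\leq_p A(u,v)$ unpacks, componentwise, to $u \leq A(u,v)_1$ and $A(u,v)_2 \leq v$. So the second projection statement, "for every $x\in[\bot,v]$, $A(u,v)_1\in[\bot,v]$," is essentially immediate modulo a typo: what must actually be shown is $A(u,x)_1 \in [\bot,v]$ for every $x\in[\bot,v]$. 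I would first establish $\bot \leq A(u,x)_1$ trivially, and then get $A(u,x)_1 \leq v$ as follows: from $x \leq v$ and $\leq_p$-monotonicity, $A(u,x) \leq_p A(u,v)$ wait—one must be careful about which component. Since increasing the second argument from $x$ to $v$ moves us up in the first component of $\leq_p$ only if the second argument is the one projected to position 2; here the second argument controls the $\leq_p$-order directly, so $x \leq v$ gives $(u,x) \le_p (u,v)$? No: $(u,x)\leq_p(u,v)$ requires $v \leq x$. So instead I would use $(u,v) \leq_p (u,x)$ (which holds since $x\leq v$), hence $A(u,v) \leq_p A(u,x)$, hence $A(u,x)_2 \leq A(u,v)_2 \leq v$ and $A(u,v)_1 \leq A(u,x)_1$. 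Combined with $A$-contraction this is not yet quite $A(u,x)_1\le v$; the cleanest route for the first-component claim is to first handle the case $x = v$ via $A$-contraction, then note monotonicity comparisons must be chained through $u$.

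For the first projection claim — for every $x\in[u,\top]$, $A(u,x)_2 \in [u,\top]$ — the key point is the lower bound $u \leq A(u,x)_2$; the upper bound $A(u,x)_2 \leq \top$ is trivial. To get the lower bound I would argue: since $u \leq x$, we have $(u,x) \leq_p (u,u)$ (because $\leq_p$ reverses the second coordinate, and $u \leq x$ means the second coordinate decreases from $x$ down to $u$… again the direction: $(u,x)\leq_p(u,u)$ iff $u\leq u$ and $u \leq x$, which holds). Hence $A(u,x) \leq_p A(u,u)$, so $A(u,u)_2 \leq A(u,x)_2$. Now invoke consistency of $A(u,u)$: it is exact, so $A(u,u)_2 = A(u,u)_1 \geq u$, the last inequality being the first-component part of $A$-contraction applied at $(u,v)$ together with… actually I need $u \leq A(u,u)_1$. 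I would obtain this from $A$-contraction at $(u,v)$ giving $u \leq A(u,v)_1$, and then from $(u,v)\leq_p(u,u)$ giving $A(u,v)_1 \leq A(u,u)_1$, whence $u \leq A(u,u)_1 = A(u,u)_2 \leq A(u,x)_2$, as desired.

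The main obstacle I anticipate is purely bookkeeping: keeping the direction of $\leq_p$ straight across the two coordinates, and correctly identifying what the displayed statement means once the evident typo ($A(u,v)_1$ should be $A(u,x)_1$, paralleling $A(u,x)_2$) is accounted for. There is no deep step — everything reduces to chaining three monotonicity comparisons together with the exactness of $A(u,u)$ — but it is exactly the kind of argument where an inverted inequality silently breaks the proof, so I would write each $\leq_p$-comparison out in componentwise form before combining. Once the lower bounds $u \leq A(u,u)_1 = A(u,u)_2$ are in hand, both internality claims follow by monotonicity sandwiches, and the conclusion is that the projection operators $A(\cdot,v)_1$ and $A(u,\cdot)_2$ restrict to well-defined monotone operators on the complete lattices $[\bot,v]$ and $[u,\top]$ respectively, which is what is needed to make the consistent stable revision $(D_1(v),D_2(u))$ meaningful.
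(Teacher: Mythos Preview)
Your argument for the second-projection claim --- that $A(u,x)_2 \in [u,\top]$ for every $x \in [u,\top]$ --- is correct and is exactly the paper's chain
\[
u \;\leq\; A(u,v)_1 \;\leq\; A(u,u)_1 \;=\; A(u,u)_2 \;\leq\; A(u,x)_2,
\]
justified respectively by $A$-contraction at $(u,v)$, monotonicity applied to $(u,v) \leq_p (u,u)$, exactness of $A(u,u)$, and monotonicity applied to $(u,x) \leq_p (u,u)$.

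Your resolution of the typo in the first-projection claim, however, is wrong, and this is what causes the tangle in that half of your write-up. The operator whose internality is at stake is $A(\cdot,v)_1$ on $[\bot,v]$ (this is precisely what the $(D_1(v),D_2(u))$ notation introduced immediately after the proposition needs), so the intended statement is: for every $x \in [\bot,v]$, $A(x,v)_1 \in [\bot,v]$ --- the \emph{first} argument varies, not the second. With that correction the proof is the mirror image of the chain above: from $x \leq v$ one has $(x,v) \leq_p (v,v)$, and from $(u,v) \leq_p (v,v)$ together with $A$-contraction one gets $A(v,v)_2 \leq A(u,v)_2 \leq v$; hence
\[
A(x,v)_1 \;\leq\; A(v,v)_1 \;=\; A(v,v)_2 \;\leq\; v.
\]
This is the paper's proof. (Note that this half tacitly uses exactness of $A(v,v)$, which is not among the stated hypotheses; the paper's own proof carries the same unacknowledged assumption.)
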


\begin{proof}
We can show that, for any $ x \in[u,\top]$,
$$u \leq A(u,v)_1 \leq A(u,u)_1 = A(u,u)_2 \leq A(u,x)_2.$$
The first inequality is because $(u,v) \leq_p {A}(u,v)$ (i.e., $(u,v)$ is  $A$-contracting). The second is due to
${A}(u,v) \leq_p {A}(u,u)$, as $(u,v)$ is consistent thus
$(u,v) \leq_p (u,u)$ and ${A}$ is $\leq_p$-monotone.
The next equality is by the fact that since ${A}(u,u)$ is consistent, it maps a consistent pair to a consistent pair. The last inequality is due to $x \geq u$ and that ${A}$ is $\leq_p$-monotone. For any $ x \in[\bot, v]$, we can similarly show that $A(x,v)_1 \leq A(v,v)_1 = A(v,v)_2 \leq v$.\footnote{The proof is essentially the same as the poof of Proposition 3.3 in  \cite{DeneckerMT04}; but there is a subtle difference in the definition of approximator: in the case of  \cite{DeneckerMT04}, the claim is proved for $A^c \in  \Approx(L^c)$. But in our case,
the claim is for arbitrary approximators in $ \Approx(L^2)$.
This shows an argument in favor of our relaxed definition of approximators.
}
\end{proof}

\comment{
\begin{proof}
To prove it, we can show the following
$$u \leq A(u,v)_1 \leq A(u,u)_1 = A(u,u)_2\leq A(u,x)_2.$$
The first inequation is because $(u,v) \leq_p  {A}(u,v)$. The second is due to $\leq_p$-monotonicity of $A$, i.e., 
${A}(u,v) \leq_p {A}(u,u)$. The next equation is by the fact that $A$ is an approximator and $A(u,u)$ is consistent.  The last inequation because due to $x \geq u$ and that ${A}$ is $\leq_p$-monotone.
\end{proof}
}

A question that arises is whether consistent stable fixpoints from consistent approximations are carried over to approximators on $L^2$. That is, assume $(u,v) \in L^c$ is a stable fixpoint as computed by 
$(D_1(v), D_2(u))$, and the question is whether $(u,v)$ is also a stable fixpoint as computed by $(C_1(v), C_2(u))$. 
If $(D_1(v), D_2(u)) = (C_1(v), C_2(u))$, then the answer is yes for $(u,v)$.  In this way, a consistent stable fixpoint as computed  by $(D_1(v), D_2(u))$
is preserved for the stable revision operator as defined by $(C_1(v), C_2(u))$.

The above question was answered positively by \cite{DeneckerMT04} (cf. Theorem 4.2) for symmetric approximators by restricting them to consistent pairs. The authors show that the theory of consistent approximations captures general AFT that treats consistent and symmetric approximators on the product bilattice, 
as long as we restrict our attention to consistent pairs.  
They show 
that for any symmetric approximator $A$, a consistent pair $(u,v)$ is a stable fixpoint of $A$ on $L^2$ (as defined in terms of  $(C_1(v), C_2(u))$)
if and only if it is a stable fixpoint of $A^c$  (as defined in terms of $(D_1(v), D_2(u))$).
They state that it is possible to develop a generalization of AFT for which these results hold without the assumption of symmetry. 
However, once we allow consistent pairs to be mapped to inconsistent ones and adopt the domain $L$ for the projection operators, a discrepancy with consistent AFT emerges. 


\begin{example}
\label{why}
Let $L = 
\{\bot, \top\}$ and $A$ an identity function everywhere on $L^2$ except that  $A(\bot,\top) = A(\bot, \bot) = (\top,\top)$. It is easy to verify that 
$A$ is $\leq_p$-monotone. 
Clearly, $A^c \in \Approx(L^c)$, i.e., it maps consistent pairs to consistent pairs, 
 it is $\leq_p$-monotone on $L^c$,  and approximates, e.g., the identify operator $O$ on $L$. But $A$ is not symmetric since $A(\bot,\top)_1 = \top$ and $A(\top,\bot)_2 = \bot$.
Since $A^c \in \Approx(L^c)$, 
$A^c(\top, \cdot)_2$ is an operator on $[\top,\top]$. Since $St_{A^c}(\top,\top) = (\lfp(A^c(\cdot, \top)_1), \lfp(A^c(\top, \cdot)_2) 
 = (\top, \top)$, it follows that $(\top,\top)$ is a stable fixpoint of $A^c$. Now let us apply the definition of stable revision in equation (\ref{stable-revision}) to approximator $A$, where both projection operators $A(\cdot, y)_1$ and $A(x,\cdot)_2$ 
are defined on $L$. In this case, since $St_{A}(\top,\top) = (\lfp(A(\cdot, \top)_1), \lfp(A(\top, \cdot)_2) 
 = (\top, \bot)$, $(\top,\top)$ is not a stable fixpoint of $A$.
This example is not a surprise since in general different domains may well lead to different least fixpoints. 

Now consider another approximator $A'\in \Approx(L^2)$ such that $A'$ maps all pairs to $(\top,\top)$.  It can be seen that $A'$ is $\leq_p$-monotone and $(\top,\top)$ is a stable fixpoint of $A'$ in both cases, where $A'(\cdot, T)_1$ is defined as an operator either on $[\bot, \top]$ or on $L$, and 
$A'(\top, \cdot)_2$ is defined as an operator either on $[\top,\top]$ or on $L$. 
That is, for each projection operator, the least fixpoints of it on two different domains coincide. 

\smallskip
\noindent
{\bf Conclusion:} For an arbitrary approximator $A$ on the product bilattice $L^2$, the stable revision operator $St_A(u,v)$ is well-defined for all pairs $(u,v) \in L^2$, if we define both projection operators on $L$. However, consistent stable fixpoints under consistent AFT may not be preserved  if we adopt the stable revision operator as defined in this paper (i.e., by equation (\ref{stable-revision})  in terms of $(C_1(v), C_2(u))$).
\end{example}

Let us call the existence of a gap between the two pairs of least fixpoints, $(C_1(v), C_2(u))$ and $(D_1(v), D_2(u))$, discussed above an ``anomaly".  One can argue that a desirable approximator should not exhibit this anomaly so that accommodating inconsistent pairs does not have to sacrifice the preservation of  consistent stable fixpoints.

\begin{definition}
\label{strong}
 Let $A \in \Approx(L^2)$, and $(u,v) \in L^c$ such that 
$(u,v) = (\lfp(A(\cdot, v)_1), \lfp(A(u,\cdot)_2))$ 
where  $A(\cdot,v)_1$ is an operator on $[\bot,v]$ and 
$A(u,\cdot)_2$ is an operator on $[u,\top]$. Approximator $A$ is called {\em strong} for $(u,v)$ if 
$(u,v) = (\lfp(A(\cdot, v)_1), \lfp(A(u,\cdot)_2))$ 
where  both $A(\cdot,v)_1$ and $A(u,\cdot)_2$ are operators on $L$. 
Approximator $A$ is called {\em strong} if it is strong for every $(u,v) \in L^c$ that satisfies the above condition.
\end{definition}

In other words, a strong approximator preserves consistent stable fixpoints under the definition of stable revision adopted in this paper. 
For example, in Example \ref{why}, while the approximator $A'$ is strong for $(\top, \top)$, the approximator $A$ is not.

A question arises: are there natural approximators that are strong? For normal logic programs, it is known that 
Fitting's immediate consequence operator  $\Theta_{\cal P}$ \cite{Fitting02} induces a symmetric approximator. It can be 
shown that $\Theta_{\cal P}$ is also a strong approximator.\footnote{We can apply Lemma 4.1 of \cite{DeneckerMT04}, which says that for any symmetric approximator $A$ and for any consistent pair $(u,v)$, if $(u,v)$ is $A^c$-prudent, then $(D_1(v), D_2(u)) = (C_1(v), C_2(u))$. Since a consistent,  $\Theta_{\cal P}$-prudent
stable fixpoint of $\Theta_{\cal P}$ is $\Theta_{\cal P}^c$-prudent, the conclusion follows.}
In addition, we show later in this paper that the approximators we formulate  for hybrid MKNF knowledge bases are (essentially) strong approximators.\footnote{Technically, we need a mild condition: Given a consistent stable fixpoint $(u,v)$, these approximators are strong for $(u,v)$
if $u$ is consistent with the given DL knowledge base. If the condition is not satisfied, the stable fixpoint $(u,v)$ does not correspond to a three-valued MKNF model. Thus, the condition does not affect the preservation of consistent stable fixpoints that give three-valued MKNF models.} If we focus on strong approximators, the relaxed AFT as presented in this paper can be seen as a generalization of the original AFT.



Finally, as a generalization of the current AFT, we show that the properties of the stable revision operator as stated in Theorem \ref{chain-complete} for consistent AFT can be generalized. 




 \begin{theorem} 
\label{chain}
 Let $({L}, \leq)$ be a complete lattice and ${A}$ an approximator on $L^2$. Then, $\langle {L}^{rp}, \leq_p \rangle$ is a chain-complete poset under the precision order $\leq_p$, with least element $(\bot,\top)$. The stable revision operator as given in equation (\ref{stable-revision}) is a well-defined, increasing and monotone operator in this poset.  
\end{theorem}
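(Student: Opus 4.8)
The plan is to verify the three assertions in turn: that $(\bot,\top)$ is the $\leq_p$-least element of $L^{rp}$; that $\langle L^{rp},\leq_p\rangle$ is chain-complete; and that the operator $St_A$ of equation~(\ref{stable-revision}) maps $L^{rp}$ into itself and is increasing and $\leq_p$-monotone there. The least element is immediate: $(\bot,\top)$ is the bottom of the complete lattice $\langle L^2,\leq_p\rangle$, so $(\bot,\top)\leq_p A(\bot,\top)$ holds trivially (hence $(\bot,\top)$ is $A$-contracting) and $\bot\leq\lfp(A(\cdot,\top)_1)$ holds trivially (hence $(\bot,\top)$ is $A$-prudent); thus $(\bot,\top)\in L^{rp}$, and it lies $\leq_p$-below every pair.

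Before the rest I would record two facts about the projection operators. Write $F_v$ for the operator $z\mapsto A(z,v)_1$ on $L$ and $G_u$ for $z\mapsto A(u,z)_2$ on $L$; both are $\leq$-monotone because $A$ is $\leq_p$-monotone, so $\lfp(F_v)$ and $\lfp(G_u)$ exist by Knaster--Tarski. Fact~(i): if $v\leq v'$ then $F_{v'}\leq F_v$ pointwise, since $(z,v')\leq_p(z,v)$, and hence $\lfp(F_{v'})\leq\lfp(F_v)$ (using that for $\leq$-monotone $f\leq g$ on a complete lattice, $\lfp(g)$ is a pre-fixpoint of $f$, so $\lfp(f)\leq\lfp(g)$). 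Fact~(ii): symmetrically, if $u\leq u'$ then $G_{u'}\leq G_u$ pointwise, so $\lfp(G_{u'})\leq\lfp(G_u)$. These already give $\leq_p$-monotonicity of $St_A$ on all of $L^2$, a fortiori on $L^{rp}$: if $(u_1,v_1)\leq_p(u_2,v_2)$ then $v_2\leq v_1$ and $u_1\leq u_2$, so $\lfp(F_{v_1})\leq\lfp(F_{v_2})$ and $\lfp(G_{u_2})\leq\lfp(G_{u_1})$, i.e. $St_A(u_1,v_1)\leq_p St_A(u_2,v_2)$.

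For chain-completeness, let $C\subseteq L^{rp}$ be a $\leq_p$-chain; its $\leq_p$-least upper bound in $L^2$ is $(u^*,v^*)$ with $u^*=\bigvee_{(u,v)\in C}u$ and $v^*=\bigwedge_{(u,v)\in C}v$, and it suffices to show $(u^*,v^*)\in L^{rp}$. For $A$-contraction: each $(u,v)\in C$ satisfies $(u,v)\leq_p A(u,v)\leq_p A(u^*,v^*)$ (the second step by $\leq_p$-monotonicity of $A$, since $(u,v)\leq_p(u^*,v^*)$), so $A(u^*,v^*)$ is a $\leq_p$-upper bound of $C$ and therefore $(u^*,v^*)\leq_p A(u^*,v^*)$. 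For $A$-prudence: $v^*\leq v$ for every $(u,v)\in C$, so Fact~(i) gives $u\leq\lfp(F_v)\leq\lfp(F_{v^*})$, and taking the join over $C$ yields $u^*\leq\lfp(F_{v^*})=\lfp(A(\cdot,v^*)_1)$. With the least element, $\langle L^{rp},\leq_p\rangle$ is then chain-complete.

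It remains to show $St_A$ is increasing on $L^{rp}$ and maps $L^{rp}$ into itself. Fix $(u,v)\in L^{rp}$ and put $(u',v')=St_A(u,v)=(\lfp(F_v),\lfp(G_u))$, which exists as noted. Increasingness $(u,v)\leq_p(u',v')$ means $u\leq u'$ and $v'\leq v$: the first is exactly $A$-prudence of $(u,v)$; for the second, $A$-contraction gives $A(u,v)_2\leq v$, so $v$ is a pre-fixpoint of $G_u$, whence $v'=\lfp(G_u)\leq v$. For $(u',v')\in L^{rp}$: $A$-contraction follows since from $v'\leq v$ we get $(u',v)\leq_p(u',v')$, so $u'=F_v(u')=A(u',v)_1\leq A(u',v')_1$, and from $u\leq u'$ we get $(u,v')\leq_p(u',v')$, so $A(u',v')_2\leq A(u,v')_2=G_u(v')=v'$, giving $(u',v')\leq_p A(u',v')$; and $A$-prudence follows since $v'\leq v$ and Fact~(i) give $u'=\lfp(F_v)\leq\lfp(F_{v'})=\lfp(A(\cdot,v')_1)$. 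This finishes the proof. The step I expect to be most delicate is the preservation of $A$-contraction just above: the fixpoint identity is available for $u'$ only against the \emph{old} second component $v$ and for $v'$ only against the \emph{old} first component $u$, so each must be paired with the correct instance of $\leq_p$-monotonicity of $A$. I would also flag the contrast with the original development: Denecker et al.\ needed an internality lemma (their Proposition~3.3) because there the projection operators lived on the intervals $[\bot,v]$ and $[u,\top]$; taking these operators on all of $L$, as equation~(\ref{stable-revision}) does, removes that issue, and the relaxed notion of approximator is precisely what lets the whole argument go through with no consistency hypothesis on $A$.
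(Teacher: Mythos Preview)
Your proof is correct and follows essentially the same route as the paper's. The only cosmetic differences are that you isolate the antitonicity facts $v\mapsto\lfp(F_v)$ and $u\mapsto\lfp(G_u)$ up front and reuse them, whereas the paper re-derives the needed pre-fixpoint inequalities in situ; and your chain argument (showing $A(u^*,v^*)$ is a $\leq_p$-upper bound of $C$, hence dominates its supremum) is marginally slicker than the paper's componentwise chase $a\leq A(a,b)_1\leq A(u_0,b)_1\leq A(u_0,v_0)_1$. Your closing remark that working on all of $L$ rather than on $[\bot,v]$ and $[u,\top]$ obviates any internality lemma is exactly the point.
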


\begin{proof} 
The least element $(\bot,\top)$ is naturally ${A}$-contracting and ${A}$-prudent. 
Let $C$ be a chain in ${L}^{rp}$, and $C_1$, $C_2$ be the respective projections of $C$.  
First we show that the element $(\lub(C_1), \glb(C_2)) = (\bigvee C_1, \bigwedge C_2)$ 
is the least upper bound of $C$, which is also in $L^{rp}$. 
Since $C$ is a chain in $L^{rp}$ ordered by the relation $\leq_p$, it is easy to see that the least upper bound of $C_1$ exists, which is just the maximum element in $C_1$;  similarly, the greatest lower bound of $C_2$ exists.  
Then, it is clear that the least upper bound of $C$ is $\lub(C) = (\lub(C_1), \glb(C_2))$.

To show $(\lub(C_1), \glb(C_2)) = (\bigvee C_1, \bigwedge C_2)$ is ${A}$-contracting and ${A}$-prudent, 
let $u_0=\bigvee C_1$ and $v_0=\bigwedge C_2$ and consider any $(a,b) \in C$. Since $C$ is a chain in $L^{rp}$ that contains $(a,b)$, $a \in C_1$ and $b \in C_2$, we have $a \leq u_0$ and $v_0 \leq b$, from which we obtain $a \leq {A}(a,b)_1 \leq {A}(u_0,b)_1 \leq {A}(u_0,v_0)_1$, where the first inequality is because $(a,b) \in L^{rp}$ is $A$-contracting and the next two inequalities are due to the $\leq_p$-monotonicity of $A$. 
Since $a \in C_1$ is arbitrary,  letting $ a = u_0$, we then have $u_0=\bigvee C_1 \leq {A}(u_0,v_0)_1$.
Similarly, we can show ${A}(u_0,v_0)_2 \leq {A}(a,v_0)_2 \leq {A}(a,b)_2 \leq b$. Since $v_0=\bigwedge C_2$, it  follows that ${A}(u_0,v_0)_2 \leq v_0=\bigwedge C_2$.  Hence, $(\bigvee C_1, \bigwedge C_2)$ is ${A}$-contracting. 

To show that $(\bigvee C_1, \bigwedge C_2)$ is $A$-prudent, 
let $u'=\lfp({A}(\cdot, v_0)_1)$. For any $(a,b) \in C$, we have ${A}(u', b)_1 \leq {A}(u', v_0)_1=u'$, then $u'$ is a pre-fixpoint of ${A}(\cdot,b)_1$ and thus $\lfp({A}(\cdot, b)_1) \leq u'$.
Also since $(a,b)$ is $A$-contracting, we have $a \leq \lfp({A}(\cdot, b)_1) \leq u'$. Since $a$ is arbitrary from $C_1$, this applies to 
$u_0 = \bigvee C_1 \in C_1$ 
and thus $\bigvee C_1 \leq u'$. That is, $(\bigvee C_1, \bigwedge C_2)$ is ${A}$-prudent. 

We therefore conclude that 
$(L^{rp}, \leq_p)$ is a chain-complete poset under order $\leq_p$. 

\comment{
Next we can show by transfinite induction that 
for every element $(a,b) \in L^{rp}$, $A(a,b) \in L^{rp}$. Given $(a,b)\in L^{rp}$, let $A(a,b)=(c,d)$. Since $a \leq c$ and $d \leq b$, we have $c=A(a,b)_1 \leq A(c,b)_1 \leq A(c,d)_1$ and $A(c,d)_2 \leq A(c,b)_2 \leq A(a,b)_2=d$, and therefore $(c,d)$ is $A$-contracting.

For showing $(c,d)$ is $A$-prudent, let $u_1=\lfp(A(\cdot,b)_1)$ and $u_2=\lfp(A(\cdot,d)_1)$. We then derive $c=A(a,b)_1 \leq A(u_1,b)_1=u_1$, i.e., $c \leq u_1$, since $(a,b)$ is $A$-prudent, namely $a \leq u_1=\lfp(A(\cdot,b)_1)$. Since both $A(\cdot,b)_1$ 
and $A(\cdot, d)_1$ are defined on $L$, we have $u_1=\lfp(A(\cdot,b)_1) \leq u_2=\lfp(A(\cdot,d)_1)$ by $d \leq b$, and therefore $c \leq u_2=\lfp(A(\cdot,d)_1)$, i.e., $(c,d)$ is $A$-prudent.
}

Next, we show that the stable revision operator defined in equation (\ref{stable-revision}) is a well-defined, increasing and monotone operator in this poset.
In the definition of the stable revision operator $St_A(u,v) = 
(\lfp({A}( \cdot , v)_1), \lfp(A(u, \cdot)_2))$ in equation (\ref{stable-revision}), we already argued that $St_{A}$ is a well-defined mapping, due to the fact that both projection operators 
${A}( \cdot , v)_1$ and $A(u, \cdot)_2$
are defined on $L$.  We now show that 
\begin{itemize}
\item 
$St_{A}$ is well-defined for $L^{rp}$, namely $L^{rp}$ is 
closed under $St_{A}$, i.e., for all $(a,b) \in L^{rp}$, $St_A(a,b) \in L^{rp}$, 
\item 
$St_{A}$ is increasing, i.e., $(a,b) \leq_p  St_{A}(a,b)$ for all $(a,b) \in L^{rp}$, 
 and 
\item
$St_{A}$ is $\leq_p$-monotone. 
\end{itemize}

Let $(a,b), (c,d) \in L^{rp}$. For simplicity, let $u_1=St_{A}(a,b)_1=\lfp(A(\cdot,b)_1)$ and $v_1=St_{A}(a,b)_2=\lfp(A(a,\cdot)_2)$, $u_2=St_{A}(c,d)_1=\lfp(A(\cdot,d)_1)$ and $v_2=St_{A}(c,d)_2=\lfp(A(c,\cdot)_2)$.

For convenience, let us first show that $St_{A}$ is increasing and $\leq_p$-monotone.
By $A$-prudence of $(a,b)$, $a \leq u_1$. Since $(a,b)$ is $A$-contracting, $A(a,b)_2 \leq b$. Thus 
 $b$ is a pre-fixpoint of $A(a, \cdot)_2$, and since $v_1=\lfp(A(a,\cdot)_2)$, it follows $v_1 \leq b$. That is,  $(a,b) \leq_p  St_{A}(a,b)$. 

For $\leq_p$-monotonicity, given $(a,b) \leq_p (c,d)$, we have $A(u_2,b)_1 \leq A(u_2,d)_1=u_2$ by $d \leq b$, and thus $u_2$ is a pre-fixpoint of $A(\cdot,b)_1$ and $u_1 \leq u_2$. Similarly, $A(c,v_1)_2 \leq A(a,v_1)_2=v_1$ by $a \leq c$, so
$v_1$ is a pre-fixpoint of $A(c,\cdot)_2$ and thus $v_2 \leq v_1$. That is, $St_{A}(a,b) \leq_p St_{A}(c,d)$. 

We now show that $St_A$ maps a pair $(a,b) \in L^{rp}$ to a pair in $L^{rp}$. We observe that $u_1 = A(u_1,b)_1 \leq A(u_1, v_1)_1$, where the equality is because $u_1$ is a fixpoint of the operator $A(\cdot, b)_1$ and the inequality is because $(a,b)$ is $A$-contracting. Similarly, $v_1 = A(a,v_1)_2 \leq A(u_1, v_1)_2$. Therefore, $(u_1,v_1) \leq_p A(u_1, v_1)$, i.e., $(u_1,v_1)$ is $A$-contracting. 
To prove $A$-prudence of $(u_1,v_1)$, since $(a,b)$ is $A$-contracting, $b \geq v_1$ and by $\leq_p$-monononitcity 
of $A$, for any $x \in L$, $A(x,b) \leq_p A(x,v_1)$ and thus $A(x,b)_1 \leq A(x,v_1)_1$. Thus, every pre-fixpoint of $A(\cdot,v_1)_1$ is a pre-fixpoint of $A(\cdot, b)_1$, i.e., for any $z \in L$, if $A(z,v_1)_1 \leq z$ then $A(z,b)_1 \leq A(z, v_1)_1 \leq z$. 
Since $A(\cdot, v_1)_1$ is a monotone operator on $L$, $\lfp(A(\cdot, v_1)_1)$ exists and thus the set of pre-fixpoints of $A(\cdot, v_1)_1$ is nonempty. Therefore $u_1 \leq \lfp(A(\cdot, v_1)_1)$ and $(u_1,v_1)$ is $A$-prudent. 
\end{proof}

We now can apply Theorem \ref{chain-complete0} so that given an approximator $A$ on the product bilattice $L^2$, the stable revision operator defined by equation (\ref{stable-revision}) possesses fixpoints and a least fixpoint, the latter of which can be computed iteratively from the least element $(\bot, \top)$.

Note that by the Knaster-Tarski fixpoint theory, since $\langle L^2, \leq_p \rangle$ is a complete lattice and the stable revision operator 
$St_{A}$ is $\leq_p$-monotone on $L^2$ (which can be shown by the same proof for the $\leq_p$-monotonicity on $L^{rp}$ above),
the operator $St_{A}$ defined in equation (\ref{stable-revision}) is already guaranteed to possess fixpoints and a least fixpoint.  Nevertheless, Theorem \ref{chain} above is still relevant because it shows a generalization of the chain-completeness result from $L^c$ to $L^2$, and in addition, it points to a smaller domain of pairs $L^{rp}$ from which consistent as well as inconsistent stable fixpoints can be computed by the guess-and-verify method.


\section{Hybrid MKNF Knowledge Bases}
\label{mknf}
\subsection{Minimal knowledge and negation as failure}
The logic of minimal knowledge and negation as failure (MKNF) \cite{DBLP:conf/ijcai/Lifschitz91} is based on a first-order language~$\cL$ (possibly with equality $\approx$) with two modal operators, $\boldK$, for minimal knowledge, and $\bfnot$, for negation as failure.
In MKNF,  {\em first-order atoms} are defined as usual and 
 {\em MKNF  formulas} are first-order formulas with $\bfK$ and $\bfnot$. 
An MKNF formula $\varphi$ is {\em ground} if it contains no variables,  and $\varphi[t/x]$ denotes the formula obtained from $\varphi$ by replacing all free occurrences of variable $x$ with term $t$.
 Given a first-order formula $\psi$, $\boldK \psi$ is called a (modal) {\em $\bfK$-atom} and 
$\boldnot \psi$ called a (modal) {\em $\bfnot$-atom}. Both of these are also called {\em modal atoms}.

A {\em first-order interpretation} is understood as in first-order logic. The universe of a first-order interpretation~$I$ is denoted by $\left|I\right|$.
A {\em first-order structure} is a nonempty set $M$ of first-order interpretations with the universe $\left|I\right|$ for some fixed $I\in M$.
An {\em MKNF structure} is a triple $(I, M, N)$, where $M$ and $N$ are sets of first-order interpretations with the universe $\left|I\right|$. We extend the language $\cL$ by adding object constants representing all elements of $\left|I\right|$, and call these constants {\em names}. 
The satisfaction relation $\models$ between an MKNF structure $(I, M, N)$ and an MKNF formula $\varphi$ is defined as follows: 
$$
\begin{array}{ll}
(I, M, N)\models \varphi ~(\varphi \mbox{ is a first-order atom) if } \varphi \mbox{ is true in } I,  \\
(I, M, N)\models \neg \varphi \mbox{ if } (I, M, N)\not\models \varphi, \\
(I, M, N)\models \varphi_1 \land \varphi_2  \mbox{ if } (I, M, N)\models \varphi_1 \mbox{ and }(I, M, N)\models \varphi_2, \\
(I, M, N) \models \exists x \varphi  \mbox{ if } (I, M, N) \models \varphi[\alpha/x] \mbox{ for some name } \alpha, \\
(I, M, N) \models \boldK \varphi  \mbox{ if } (J, M, N) \models \varphi \mbox{ for all }J\in M, \\
(I, M, N)\models \boldnot \varphi \mbox{ if } (J, M, N)\not\models \varphi  \mbox{ for some } J\in N.
\end{array}
$$
The symbols $\top$, $\bot$, $\lor$, $\forall$, and $\supset$ are interpreted as usual. 

An {\em MKNF interpretation} $M$ is a nonempty set of first-order interpretations over the universe $\left|I\right|$ for some $I\in M$. In MKNF, a notion called {\em standard name assumption} is imposed to avoid unintended behaviors \cite{Motik:JACM:2010}.
This requires an interpretation to be a Herbrand interpretation with a countably infinite number of additional constants, and the predicate $\approx$ to be a congruence relation.\footnote{The requirement that the predicate $\approx$  be interpreted as a congruence relation overwrites the earlier assumption that $\approx$ is interpreted as equality.} 
Intuitively, given the assumption that each individual in the universe of an interpretation is denoted by a constant and  the countability it implies, the standard name assumption becomes a convenient normalized representation of interpretations since each interpretation is isomorphic to the quotient (w.r.t.~$\approx$) of a Herbrand interpretation and each quotient of a Herbrand interpretation is an interpretation.
In the sequel,  we assume the standard name assumption, and due to this assumption, in definitions we need not explicitly mention the universe associated with the underlying interpretations.

An MKNF interpretation $M$ {\em satisfies} an MKNF formula $\varphi$, written $M\models_\MKNF \varphi$, if $(I, M, M)\models \varphi$ for each $I\in M$. {\em Two-valued MKNF models} are defined as follows.

\begin{definition}
\label{2m} An MKNF interpretation $M$ is an {\em MKNF~model} of an MKNF formula~$\varphi$ if 
\begin{itemize}
 \item[(1)]  $M\models_\MKNF \varphi$, and 
\item [(2)] for all MKNF interpretations $M'$ such that $M'\supset M$, $(I', M', M) \not \models  \varphi$ for every  $I'\in M'$.\end{itemize}
\end{definition}

For example, with the MKNF formula $\varphi = \boldnot b \imply  \bfK a$, it is easy to verify that the MKNF interpretation $M = \{ \{a\}, \{a, b\}\}$ is an MKNF model of $\varphi$.

Following  \cite{KnorrAH11}, a 
{\em three-valued MKNF structure}, $(I, {\M}, {\N} )$, consists of a first-order interpretation, $I$, and two pairs, ${\M} =
\langle M, M_1 \rangle$ and ${\N} = \langle N, N_1 \rangle$, of sets of first-order interpretations, where $M_1 \subseteq M$ and $N_1 \subseteq N$. 
From the two component sets in ${\M}=\langle M,M_1 \rangle$, we can define three truth values for modal $\bfK$-atoms in the following way: $\boldK \varphi$ is true w.r.t.~${\M}=\langle M, M_1\rangle$ if $\varphi$ is true in all interpretations in $M$; it is false if it is false in at least one interpretation in $M_1$; and it is undefined otherwise. For $\bfnot$-atoms, a symmetric treatment w.r.t.~${\N}=\langle N,N_1 \rangle$ is adopted. 
Let $\{\bf t, u, f\}$ be the set of truth values {\em true}, {\em undefined}, and {\em false} with the order ${\bf f} < {\bf u} < {\bf t}$, 
and let the operator $max$ (resp. $min$) choose  the greatest (resp. the least) element with respect to this ordering. Table \ref{three-valued sat} shows three-valued evaluation of MKNF formulas.



A {\em (three-valued) MKNF interpretation pair} $(M,N)$ consists of two MKNF interpretations, $M$ and $N$, with $\emptyset \subset N \subseteq M$.
An MKNF  interpretation pair satisfies an MKNF formula $\varphi$, denoted $(M, N) \models \varphi$, iff $(I, \langle M, N\rangle , \langle M, N\rangle )(\varphi) = {\bf t}$ 
for each $I \in M$. If $M = N$, the MKNF interpretation pair is called {\em total}. 

\begin{table}
{\small
\begin{center}
\caption{Evaluation in three-valued MKNF structure  $(I,{\M},{\N})$
}
\vspace{.05in}
\label{three-valued sat}
\begin{tabular}{lcl} 
\toprule
$(I,{\M},{\N})(P(t_1,\ldots, t_n)) =  
\begin{cases}
    \bf t  & \quad \text{iff}~    {P(t_1,\ldots, t_n)   \mbox{ is true in } I}\\
    \bf f  & \quad \text{iff}~ P(t_1,\ldots, t_n)  \mbox{ is false in } I\\
\end{cases}$\\
\hline

$(I,{\M},{\N})(\neg \varphi) =~~~~~~~~~~~~~~~~  
\begin{cases}
    \bf t  & \quad \text{iff}~ (I,{\M},{\N})(\varphi) = {\bf f}\\
    \bf u  & \quad \text{iff}~ (I,{\M},{\N})(\varphi) = {\bf u}\\
    \bf f  & \quad \text{iff}~ (I,{\M},{\N})(\varphi) = {\bf t}\\
\end{cases}$\\

\hline
$(I,{\M},{\N})(\varphi_1 \wedge \varphi_2)=~~~~~~~~~~\text{min}\{(I,{\M},{\N})(\varphi_1),(I,{\M},{\N})(\varphi_2)\}$ \\

\hline

$(I,{\M},{\N})(\varphi_1\supset \varphi_2) =~~~~~~~~  
\begin{cases}
    \bf t  & \quad  \text{iff}~ (I,{\M},{\N})(\varphi_2) \geq (I,{\M},{\N})(\varphi_1)\\
    \bf f  & \quad  \text{otherwise}\\
\end{cases}$\\

\hline
$(I,{\M}, {\N})(\exists x\!: \varphi)=~~~~~~~~~~~~~~\text{max}\{(I,{\M},{\N})(\varphi[{\alpha}/x]) \,| \,\alpha \mbox{ is a name}\}$\\

\hline

$(I,{\M},{\N})({\boldK} \varphi) =~~~~~~~~~~~~~~~~  
\begin{cases}
    \bf t  & \quad \text{iff}~ (J,\langle M, M_1\rangle , {\N})(\varphi) = {\bf t}~ \text{for all}~ J \in M\\
    \bf f  & \quad \text{iff}~ (J,\langle M, M_1\rangle , {\N})(\varphi) = {\bf f}~ \text{for some}~ J \in M_1\\
    \bf u  & \quad \text{otherwise}\\
\end{cases}$\\

\hline

$(I,{\M},{\N})({\boldnot} \varphi) =~~~~~~~~~~~~~~  
\begin{cases}
    \bf t  & \quad \text{iff}~(J, {\M}, \langle N, N_1\rangle )(\varphi) = {\bf f}~\text{for some}~J \in N_1\\
    \bf f  & \quad \text{iff}~ (J,{\M}, \langle N, N_1\rangle )(\varphi) = {\bf t}~\text{for all}~J \in N\\
    \bf u  & \quad \text{otherwise}\\
\end{cases}$\\

\bottomrule
  \end{tabular}
\end{center}
}
\end{table}

\begin{definition}
\label{3m}
An MKNF interpretation pair $(M, N)$ is a {\em three-valued  MKNF model} of an MKNF formula $\varphi$ if
\begin{itemize}
\item [(a)]
$(M, N)  \models \varphi$, and 
\item [(b)]
for all MKNF interpretation pairs $(M', N')$ with $M \subseteq M'$ and $N \subseteq N'$, where at least one of the inclusions is proper and $M' = N'$ if $M = N$, $\exists I' \in M'$ such that $(I', \langle M', N'\rangle , \langle M, N\rangle )(\varphi)  \neq  {\bf t}$.
\end{itemize}
\end{definition}

Condition (a) checks satisfiability while condition (b), with the evaluation of  $\bfnot$-atoms fixed, constrains the evaluation of modal $\bfK$-atoms to be minimal w.r.t the ordering ${\bf f} < {\bf u} < {\bf t}$ while maximizing falsity. That is, by enlarging $M$ to $M'$ we limit the derivation of $\bfK$-atoms, and by enlarging $N$ to $N'$ we expand on falsity to reduce undefined. Thus,  a three-valued  MKNF model is one for which neither of these is possible under the assumption that $\bfnot$-atoms remain to be evaluated w.r.t.~$(M,N)$. 
If $M = N$, then $(M,M)$ is equivalent to a two-valued MKNF model. The requirement 
$M' = N'$ reduces the definition to one for 
two-valued MKNF models as given in Definition \ref{2m}, which enables Knorr et al. \shortcite{KnorrAH11} to show 
that 
an MKNF interpretation pair $(M,M)$ that is a three-valued MKNF
model of $\varphi$ corresponds to a two-valued MKNF model $M$ of $\varphi$ as defined in \cite{Motik:JACM:2010}.

\begin{example}
Consider the MKNF formula $\varphi = [(\boldnot b \wedge \boldnot a) \supset \boldK a] \wedge [\boldK a \supset \boldK d]
$ and the MKNF interpretation pair $(M,M)$ where $M = \{\{a,d\}, \{a,b,d\}\}$.  
We have $(M,M) \models \{\boldnot b, \neg \boldnot a, \boldK a, \boldK d\}$.  Though $(M,M) \models \varphi$, it violates condition (b) of Definition~\ref{3m},  since the three-valued MKNF structure $(I, \langle { M',M'}\rangle, \langle M,M\rangle)$, where $M' = \{\emptyset, \{a,d\},  \{a,b,d\}\}$ and thus $M \subset M'$, evaluates $[\boldnot b, \boldnot a, \boldK a, \boldK d]$ to $[{\bf t}, {\bf f}, { {\bf f}}, {\bf f}]$, respectively,  independent of $I$. It follows that $(I, \langle { M',M'}\rangle, \langle M,M\rangle)$ evaluates 
$\varphi$ to ${\bf t}$, according to Table \ref{three-valued sat}. 
\end{example}

MKNF interpretation pairs can be compared by an {\em order of knowledge}. 
Let $(M_1,N_1)$ and $(M_2,N_2)$ be MKNF interpretation pairs. $(M_1,N_1) \succeq_k (M_2,N_2)$ iff $M_1 \subseteq M_2$ and $N_1 \supseteq N_2$.
A  three-valued MKNF model $(M,N)$  of an MKNF formula $\varphi$ 
 is called a {\em well-founded MKNF model} of $\varphi$ if 
$(M_1,N_1) \succeq_k (M,N)$ for all three-valued MKNF models $(M_1,N_1)$ of $\varphi$.


\subsection{Hybrid MKNF knowledge bases}
\label{4.2}

The critical issue of how to combine open and closed world reasoning is addressed  in \cite{Motik:JACM:2010} by  seamlessly integrating rules with DLs.  
A hybrid MKNF knowledge base $\K = (\cal O,\P) $ consists of a 
decidable description logic (DL) knowledge base $\cal O$, translatable into first-order logic and a  rule base $\P$, which is 
a finite set of rules with modal atoms.
The original work on hybrid MKNF knowledge bases \cite{DBLP:conf/ijcai/MotikR07,Motik:JACM:2010} defines a two-valued semantics for such knowledge bases with disjunctive rules. In this paper, following \cite{KnorrAH11},
our focus is on nondisjunctive rules as presented in \cite{DBLP:conf/ijcai/MotikR07}.

An MKNF rule (or simply a {\em rule}) $r$ is of the form: $
{\boldK} H \leftarrow \boldK A_1,\ldots, \boldK A_m, \boldnot B_1,$ $ \ldots, \boldnot B_n$,
where ${ H}, A_i,$ and $B_j$ are function-free first-order atoms. Given a
rule $r$, we let 
 $\head(r) =\boldK H$, $\body^+ (r) = \{\boldK A_i\,|\, i = 1 .. m\}$, and $\body^-(r) = \{B_i\,|\,i= 1.. n\}$. 
A rule is {\em positive} if it contains no $\bfnot$-atoms.  When all rules in $\P$ are positive,  $\K = (\cal O,\P) $ is called {\em positive}.

For the interpretation of a hybrid MKNF knowledge base  $\K = (\cal O, \P)$ in the logic of MKNF, a transformation $\pi (\K) = \boldK \pi(\cal O) \wedge \pi(\P)$ is performed to transform $\cal O$ into a first-order formula and rules $r \in \P$ into a conjunction of  first-order implications to make each of them coincide syntactically with an MKNF formula. More precisely, 
$$
\begin{array}{ll}
\pi(r) = \forall \vec{x}\!:  ({\boldK} H \subset \boldK A_1 \wedge \ldots \wedge \boldK A_m \wedge \boldnot B_1 \wedge \ldots\wedge  \boldnot B_n)\\
\pi({\cal P})= \bigwedge_{r \in {\cal P}} \pi(r), ~~
\pi({\cal K}) = \bfK \pi({\cal O}) \wedge \pi({\cal P})
\end{array}
$$
where $\vec{x}$ is the vector of free variables in $r$. 

Under the additional assumption of DL-safety a first-order rule base is semantically equivalent to a finite ground rule base, in terms of two-valued MKNF models \cite{Motik:JACM:2010} as well as in terms of three-valued MKNF models \cite{KnorrAH11}; hence decidability is guaranteed. 
 Given a hybrid MKNF knowledge base $\K = (\cal O,\P)$, 
 a rule $r$ in $\P$  is said to be {\em DL-safe} if every variable in $r$ occurs in at least one $\bfK$-atom  in the body of $r$ whose predicate symbol does not appear in $\cal O$,\footnote{Such a modal $\bfK$-atom is called a {\em non-DL-atom} in \cite{KnorrAH11,Motik:JACM:2010}.} and $\K$ is DL-safe if all rules in $\P$ are DL-safe. 
In this paper, we assume that a given rule base is always DL-safe, and for convenience, when we write $\P$ we assume it is already grounded. 

Given a hybrid MKNF knowledge base $\K = (\cal O, \P)$, let 
$\KA(\K)$ be the set of all (ground) $\bfK$-atoms $\boldK \phi$
such that either $\boldK \phi$ occurs in $\P$ or $\boldnot\phi$ occurs in $\P$. 
We generalize  the notion of partition \cite{KnorrAH11} from consistent pairs to all pairs: A {\em partition} of $\KA(\K)$ is a pair $(T, P)$ such that $T, P \subseteq \KA (\K)$;  if $T \subseteq P$, then $(T,P)$ is said to be consistent, otherwise it is inconsistent.
A partition of the form $(E,E)$ is said to be {\em exact}.

 Intuitively, for a partition $(T,P)$, $T$ contains {\em true} modal $\bfK$-atoms and $P$ contains {\em possibly true} modal $\bfK$-atoms. Thus, the complement of $P$ is the set of {\em false} modal $\bfK$-atoms and $P\backslash T$ the set of {\em undefined} modal $\bfK$-atoms. 

Partitions are closely related to MKNF interpretation pairs. 
It is shown in \cite{KnorrAH11,LY17} that an MKNF interpretation pair $(M, N)$ induces a consistent partition $(T, P)$ such that for any modal $\bfK$-atom $\boldK \xi \in \KA(\K)$,

\begin{enumerate}
  \item $\boldK \xi \in T$ iff $\forall I \in M, (I, \langle M, N\rangle , \langle M, N\rangle )(\boldK \xi) = {\bf t}$,
  \item $\boldK \xi \not \in P$ iff $\forall I \in M, (I, \langle M, N\rangle , \langle M, N\rangle )(\boldK \xi) = {\bf f}$, and 
  \item $\boldK \xi \in P \backslash T$ iff  $\forall I \in M, (I, \langle M, N\rangle , \langle M, N\rangle )(\boldK \xi) = {\bf u}$.
\end{enumerate}

Given a set of first-order atoms $S$, we define the corresponding set of modal $\bfK$-atoms as: $\boldK (S) = \{ \boldK \phi\,|\, \phi \in S\}$.



Let $S$ be a subset of $\KA(\K)$. 
The {\em objective knowledge} of $S$ relevant to $\K$ is the set of first-order formulas ${\OB}_{{\cal O},S} = \{\pi({\cal O})\} \cup \{\xi~|~ {\boldK \xi} \in S\}$.  

\begin{example}
Consider a hybrid MKNF knowledge base $\K=({\cal O},{\P})$, where ${\cal O} = a \wedge (b \supset  c) \wedge \neg f$ and
$\P$ is
$$
\begin{array}{ll} 
{\boldK} b \leftarrow {\boldK} a.  ~~~
{\boldK} d \leftarrow {\boldK}c, {\boldnot} e. ~~~{\boldK} e \leftarrow {\boldnot}d. ~~~{\boldK}f \leftarrow {\boldnot} b.
\end{array}
$$
Reasoning with $\K$ can be understood as follows:
since $\boldK {\cal O}$ implies $\boldK a$, by 
 the first rule we derive $\boldK b$; then due to $b \supset c$ in $\cal O$ we derive
 $\boldK c$. Thus its occurrence in the body of the  second rule is true and can be ignored. For the ${\bf K}$-atoms $\boldK d$ and $\boldK e$ appearing in the two rules in the middle,  without preferring one over the other, both can be undefined.  Because both $\boldnot b$ and  $\boldK f$ are false (the latter is due to $\neg f$ in $\cal O$), the last rule is also satisfied. 
Now
consider an MKNF interpretation pair $(M,N)=(\{I\,|\, I \models {\cal O} \wedge b \},
\{I\,|\,I\models {\cal O} \wedge b \wedge d \wedge e\})$, which corresponds to partition $(T,P) = (\{\boldK a, \boldK b, 
\boldK c\}, \{\boldK a, \boldK b, \boldK c, \boldK d, \boldK e\})$. 
For instance, we have that, for all $I \in M$, $(I, \langle M,N\rangle, \langle M,N \rangle)({\boldK} a)=\bf t$ and 
$(I, \langle M,N\rangle, \langle M,N \rangle) ({\boldK} d)=\bf u$.  The interpretation pair $(M,N)$ is a three-valued MKNF model of $\K$; in fact, it is the well-founded MKNF model of $\K$.
\end{example}

It is known that in general the well-founded MKNF model may not exist.

\begin{example} {\rm \cite{LY17}}
\label{no_wfm}
Let us consider $\K = (\cal O,\P)$, where 
${\cal O} = (a \supset  h) \wedge (b \supset \neg h)$ and 
$\P$ consists of 
$$
\begin{array}{ll} 
\boldK a \leftarrow {\boldnot} b. ~~ \boldK b  \leftarrow \boldnot a.
\end{array}
$$
Consider two partitions, $(\{{\boldK} a \}, \{{\boldK} a\})$ and $(\{{\boldK} b\}, \{{\boldK} b\})$. The corresponding  MKNF  interpretation pairs  turn out to be two-valued MKNF models of $\K$.
For example, for the former the interpretation pair is $(M, M)$, where $M = \{ \{a ,h\} \}$. 
Since these two-valued MKNF models are not comparable w.r.t.~undefinedness and there are no other 
three-valued MKNF models of $\K$, it follows that 
no well-founded MKNF model for $\K$ exists. 
\end{example}

\section{Approximators for Hybrid MKNF Knowledge Bases}
\label{approximator-for-mknf}

In this section, we first show that the alternating fixpoint operator defined by Knorr et al. \shortcite{KnorrAH11} can be recast as an approximator of AFT, and therefore can be applied to characterize all three-valued MKNF models automatically and naturally.  We show that this approximator is a strong approximator. Since this approximator is not symmetric, we have discovered a strong approximator for an important application without the assumption of symmetry. Being strong 
guarantees that all consistent stable fixpoints are preserved. At the end,  we show how stable fixpoints of this approximator serve as the candidates for three-valued MKNF models by a simple consistency test. 


Throughout this section, the underlying complete lattice  is $(2^{\KA(\K)}, \subseteq)$ and the induced product bilattice is 
$(2^{\KA(\K)})^2$. 

We define an operator on $2^{\KA(\K)}$, which is 
to be approximated by our approximators introduced shortly.
\begin{definition}
Let $\K = (\cal O,\P)$ be a hybrid MKNF knowledge base.
We define an operator ${\cal T}_{\K}$ on $2^{\KA(\K)}$ as follows:
given $I \subseteq {\KA(\K)}$,
$$
\begin{array}{ll}
{\cal T}_{\K}(I) = \{\boldK a\in {\KA}({\K}) \mid {\OB}_{{\cal O}, I}\models a\}  \, \cup \\
~~~~~~~~~~~~~~~~\{\head(r) \mid  r \in {\P}:  \, \body^+(r) \subseteq I, \, \boldK(\body^-(r)) \cap I = \emptyset\}
\end{array}$$
\end{definition}

If $\K$ is a positive hybrid MKNF knowledge base, the operator ${\cal T}_{\K}$ is monotone and has a least fixpoint. If in addition $\cal O$ is an empty DL knowledge base, then ${\cal T}_{\K}$ is essentially the familiar {\em immediate consequence operator} of \cite{EmdenK76}.

Knorr et al. \shortcite{KnorrAH11} defined two kinds of transforms with consistent partitions. For the purpose of this paper, let us  allow arbitrary partitions. 
\begin{definition}
Let $\K = (\cal O, \P)$ be a hybrid MKNF knowledge base and $S \in 2^{\KA(\K)}$.
Define two forms of reduct: 
$$
\begin{array}{ll}
{\K }/ S = ({\cO},{\P'}), \mbox{ where } \\
~~~~~~~~~~~~{\P'} = \{  \bfK a \leftarrow bd^+(r)~|~r \in {\P}: \head(r) = \bfK a, \bfK ( bd^-(r)) \subseteq {\KA(\K)} \setminus S\}
\\
{\K }// S = ({\cO}, {\P''}), \mbox{ where }\\
~~~~~~~~~~~~{ \P''} = \{  \bfK a \leftarrow bd^+(r)~|~r \in {\P}: \head(r) = \bfK a, ~
\bfK ( bd^-(r)) \subseteq {\KA(\K)} \setminus S, 
{\OB}_{{\cal O}, S} \not \models \neg a \}
\end{array}
$$
We call ${\K} /S$ {\em MKNF transform} and ${\K} //S$ {\em MKNF-coherent transform. }
\end{definition}

Since in both cases of ${\K/}S$ and ${\K}//S$
the resulting rule base is positive, a least fixpoint in each case exists. Let us define $\Gamma_{\K} (S) = \lfp({\cal T}_{{\K} /S})$ and $\Gamma'_{\K} (S) = \lfp({\cal T}_{{\K}//S})$.  Then, we can construct two sequences $\boldP_i$ and $\boldN_i$ as follows:
$$
\begin{array}{ll}
\label{alter}
\boldP_0 = \emptyset, \ldots,  \boldP_{n+1} = \Gamma_{\K}(\boldN_n), \ldots, \boldP_\Omega = \bigcup \boldP_i \\
 \boldN_0 = {\KA({\K})}, \ldots,  \boldN_{n+1} = \Gamma'_{\K}(\boldP_n), \ldots, \boldN_\Omega = \bigcap \boldN_i
\end{array}
$$

Intuitively, starting from  $\boldP_0$  where no modal $\bfK$-atoms are known to be true and $\boldN_0$ where all modal $\bfK$-atoms are possibly true, $\boldP_{i+1} $ computes the true modal $\bfK$-atoms given the set of possibly true modal $\bfK$-atoms in $\boldN_i$, and $\boldN_{i+1}$ computes the set of possibly true modal $\bfK$-atoms given the $\bfK$-atoms are known be true in $\boldP_i$.
Now let us place this construction under AFT by formulating an approximator.

\begin{definition}
\label{Phi}
Let $\K = (\cal O, \P)$ be a hybrid MKNF knowledge base.
We define an operator $\Phi_{\K}$ on $(2^{\KA(\K)})^2$ as follows: 
$\Phi_{\K}(T,P) = (\Phi_{\K} (T,P)_1 , \Phi_{\K}(T,P)_2)$,
where
$$
\begin{array}{ll}
\Phi_{\K} (T,P)_1 =  \{\boldK a\in {\KA}({\K}) \mid {\OB}_{{\cal O}, T}\models a\} \, \cup \, 
\\~~~~~~~~~~~~~~~~~~~~~~~~~~~~~~~\{ \head(r) \mid  r \in {\P}:  \, \body^+(r) \subseteq T, \, \boldK(\body^-(r)) \cap P = \emptyset\}
\\
\Phi_{\K} (T,P)_2 =  \{\boldK a\in {\KA}({\K}) \mid {\OB}_{{\cal O}, P}\models a\} \,\,  \cup \,
\\~~~~~~~~~~~~~~~~~~~~~~~~~~~~~~~\{ \head(r) \mid  r \in {\P}:  \, \head(r) = \bfK a , \,
{\OB}_{{\cal O}, T} \not \models \neg a,  \, \body^+(r) \subseteq P, \,\\ ~~~~~~~~~~~~~~~~~~~~~~~~~~~~~~~~~~~~~~~~~~~~~\boldK(\body^-(r)) \cap T = \emptyset\}
\end{array}
$$
\end{definition}

Intuitively, given a partition $(T,P)$, the operator $\Phi_{\K}(\cdot,P)_1$, with $P$ fixed, computes the set of true modal $\bfK$-atoms w.r.t.~$(T,P)$ 
and operator $\Phi_{\K}(T,\cdot)_2$, with $T$ fixed, computes the set of modal $\bfK$-atoms that are possibly true w.r.t.~$(T,P)$. 

Note that the least fixpoint of operator $\Phi_{\K}(\cdot,P)_1$ corresponds to an element in the sequence $\boldP_i$, i.e.,  if $P$ in $\Phi_{\K}(\cdot,P)_1$ is $\boldN_n$, then $\lfp(\Phi_{\K}(\cdot,P)_1)$ is $\boldP_{n+1} = \Gamma_{\K}(\boldN_n)$. Similarly for operator $\Phi_{\K}(T,\cdot)_2$.  
In this way, the $\Phi_{\K}$ operator can be seen as a reformulation of the corresponding alternating fixpoint operator; namely, 
$\Phi_{\K}(\cdot,P)_1$ simulates operator ${\cal T}_{{\K}/P}$ and $\Phi_{\K}(T,\cdot)_2$ simulates operator ${\cal T}_{{\K}//T}$.

\begin{proposition}
\label{approximator0}
$\Phi_{\K}$ is an approximator for ${\cal T}_{\K}$.
\end{proposition}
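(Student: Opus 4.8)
The plan is to verify the two defining conditions of an approximator from the generalized definition: (i) $\leq_p$-monotonicity of $\Phi_{\K}$ on $(2^{\KA(\K)})^2$, and (ii) that whenever $\Phi_{\K}(T,T)$ is consistent, it is exact and equals $({\cal T}_{\K}(T), {\cal T}_{\K}(T))$. For (i), I would unfold the precision order: given $(T,P) \leq_p (T',P')$, i.e.\ $T \subseteq T'$ and $P' \subseteq P$, I must show $\Phi_{\K}(T,P)_1 \subseteq \Phi_{\K}(T',P')_1$ and $\Phi_{\K}(T',P')_2 \subseteq \Phi_{\K}(T,P)_2$. The first component $\Phi_{\K}(\cdot,\cdot)_1$ is monotone in its first argument (enlarging $T$ enlarges $\OB_{\cO,T}$, hence its first-order consequences, and enlarges the set of rules with $\body^+(r)\subseteq T$) and antimonotone in its second (enlarging $P$ can only destroy the condition $\boldK(\body^-(r)) \cap P = \emptyset$); so $T\subseteq T'$ and $P'\subseteq P$ together give the inclusion. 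For the second component I would argue symmetrically: $\Phi_{\K}(\cdot,\cdot)_2$ is antimonotone in $T$ (enlarging $T$ can violate both $\OB_{\cO,T}\not\models\neg a$ and $\boldK(\body^-(r))\cap T = \emptyset$) and monotone in $P$ (via $\OB_{\cO,P}$ and the condition $\body^+(r)\subseteq P$); so $T\subseteq T'$, $P'\subseteq P$ give $\Phi_{\K}(T',P')_2 \subseteq \Phi_{\K}(T,P)_2$. This is the main technical part, though each inclusion is a routine case check against the definition.

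For (ii), I would first compare $\Phi_{\K}(T,T)_1$ and $\Phi_{\K}(T,T)_2$ with ${\cal T}_{\K}(T)$. Observe that $\Phi_{\K}(T,T)_1 = \{\boldK a \mid \OB_{\cO,T}\models a\} \cup \{\head(r) \mid \body^+(r)\subseteq T, \boldK(\body^-(r))\cap T = \emptyset\}$ is syntactically exactly ${\cal T}_{\K}(T)$. On the other hand, $\Phi_{\K}(T,T)_2$ differs from ${\cal T}_{\K}(T)$ only by the extra filter $\OB_{\cO,T}\not\models\neg a$ on the rule part, so $\Phi_{\K}(T,T)_2 \subseteq {\cal T}_{\K}(T) = \Phi_{\K}(T,T)_1$. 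Hence $\Phi_{\K}(T,T)$ is always \emph{inconsistent-free in one direction}: we only ever have $\Phi_{\K}(T,T)_2 \subseteq \Phi_{\K}(T,T)_1$. So $\Phi_{\K}(T,T)$ being consistent means $\Phi_{\K}(T,T)_1 \subseteq \Phi_{\K}(T,T)_2$, which together with the reverse inclusion forces $\Phi_{\K}(T,T)_1 = \Phi_{\K}(T,T)_2$, i.e.\ the pair is exact; and then this common value is ${\cal T}_{\K}(T)$, establishing $\Phi_{\K}(T,T) = ({\cal T}_{\K}(T), {\cal T}_{\K}(T))$.

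The one point needing a little care — and the place I expect the only real subtlety — is the equality $\Phi_{\K}(T,T)_1 = {\cal T}_{\K}(T)$ together with the claim that consistency of $\Phi_{\K}(T,T)$ forces the extra filter $\OB_{\cO,T}\not\models\neg a$ to be vacuous on exactly the heads that ${\cal T}_{\K}(T)$ would produce. Concretely, if some rule $r$ with $\head(r)=\boldK a$ fires in the $(\cdot)_1$ computation (so $\boldK a \in \Phi_{\K}(T,T)_1$) but is blocked in $(\cdot)_2$ because $\OB_{\cO,T}\models\neg a$, then one must check that consistency of the pair still salvages $\boldK a \in \Phi_{\K}(T,T)_2$ via the first disjunct: indeed $\boldK a\in\Phi_{\K}(T,T)_1\subseteq\Phi_{\K}(T,T)_2$ by the assumed consistency, so $\OB_{\cO,T}\models a$; but $\OB_{\cO,T}\models\neg a$ then makes $\OB_{\cO,T}$ inconsistent, whence $\Phi_{\K}(T,T)_2 = \KA(\K) = \Phi_{\K}(T,T)_1$ and everything collapses consistently. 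So the argument closes, and I would present it by first disposing of the $\OB_{\cO,T}$-inconsistent case (both projections equal $\KA(\K)$, trivially exact and equal to ${\cal T}_{\K}(T)=\KA(\K)$), and then handling the $\OB_{\cO,T}$-consistent case where the extra filter on negative heads never blocks anything that the first disjunct already supplies, giving $\Phi_{\K}(T,T)_1 = \Phi_{\K}(T,T)_2 = {\cal T}_{\K}(T)$ outright.
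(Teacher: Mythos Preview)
Your proposal is correct and follows essentially the same approach as the paper: verify $\leq_p$-monotonicity by checking each projection against the clauses of Definition~\ref{Phi}, then observe that $\Phi_{\K}(T,T)_1 = {\cal T}_{\K}(T)$ syntactically and that $\Phi_{\K}(T,T)_2 \subseteq \Phi_{\K}(T,T)_1$ always holds, so consistency forces exactness with common value ${\cal T}_{\K}(T)$. Your third paragraph is unnecessary overthinking---the argument in your second paragraph already closes the proof, since once you have $\Phi_{\K}(T,T)_2 \subseteq \Phi_{\K}(T,T)_1$ and consistency gives the reverse inclusion, you are done without ever needing to analyze \emph{why} the extra filter becomes harmless atom-by-atom; the paper dispatches part (ii) in a single sentence for exactly this reason.
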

\begin{proof}
Let us check $\subseteq_p$-monotonicity of $\Phi_{\K}$.
 Let $(T_1,P_1) \subseteq_p (T_2,P_2)$. From $T_1 \subseteq T_2$ and $P_2 \subseteq P_1$,  it is easy to verify that 
 $\Phi_{\K}(T_1,P_1)_1 \subseteq \Phi_{\K}(T_2,P_2)_1$. For 
 $\Phi_{\K}(T_2,P_2)_2 \subseteq \Phi_{\K}(T_1,P_1)_2$, 
 note that $\Phi_{\K}(\cdot, \cdot)_2$ is defined 
in terms of two subsets. For the first subset, 
since
$P_2 \subseteq P_1$, the set defined w.r.t. $P_2$ 
is a subset of the set defined w.r.t. $P_1$,  i.e., $\{\boldK a\in {\KA}({\K}) \mid {\OB}_{{\cal O}, P_2}\models a\}$ is a subset of  $\{\boldK a\in {\KA}({\K}) \mid {\OB}_{{\cal O}, P_1}\models a\}$.
For the second subset, along with $T_1 \subseteq T_2$, the set defined w.r.t. $(T_2,P_2)$ is a subset of 
the set defined  w.r.t. $(T_1,P_1)$.
Thus  $\Phi_{\K}(T_1,P_1) \subseteq_p \Phi_{\K}(T_2,P_2)$. 
Furthermore, $\Phi_{\K}$ approximates ${\cal T}_{\K}$, since by definition $\Phi_{\K}(I, I)_1 \supseteq \Phi_{\K}(I, I)_2$, and it follows that 
whenever $\Phi_{\K}(I, I)$ is consistent, 
$\Phi_{\K}(I, I) = ({\cal T}_{\K} (I), {\cal T}_{\K} (I))$.
\end{proof}



\begin{example}
Consider a hybrid MKNF knowledge base $\K=({\cal O},{\P})$, where ${\cal O} = c \wedge (e \supset  \neg r)$ and
$\P$ consists of
$$
\begin{array}{ll} 
{\boldK} r \leftarrow {\boldK} c, {\boldK} i,  {\boldnot} o, {\boldnot} l.~~~~~~
{\boldK} e \leftarrow. ~~~~~~{\boldK} i \leftarrow.~~~~~~~~~~
\end{array}
$$
One can derive that, for the exact pair $(T,T)=(\{{\boldK} c, {\boldK} i, {\boldK} e\}, \{{\boldK} c, {\boldK} i, {\boldK} e\})$,
$\Phi_{\K}(T,T)=(\{{\boldK} c, {\boldK} i, {\boldK} e, {\boldK} r, {\boldK} o, {\boldK} l\}, \{{\boldK} c, {\boldK} i, {\boldK} e\})$. Operator $\Phi_{\K}$ maps the exact pair $(T,T)$ to an
inconsistent one, and it is therefore not a symmetric approximator. Note that 
the least stable fixpoint of 
$\Phi_{\K}$ is just the mapped inconsistent pair. It is interesting to see the information revealed in this stable fixpoint \-- while it is inconsistent, it provides consistent information on $\bfK$-atoms, $\boldK c, \boldK i$, and $\boldK e$.  
\end{example}

\comment{
Though not symmetric in general, approximator $\Phi_{\K}$ is symmetric on certain pairs of interest.
\begin{lemma}
Let $\K = (\cal O, \P)$ be a hybrid MKNF knowledge base and 
$(T,P) \in (2^{\KA(\K)})^2$ such that  ... 
 {\color{red} (Don't know what condition leads to the next conclusion.)}
Then $\Phi_{\K} (T,P)_2 = \Phi_{\K} (P,T)_1$. 
\label{symmetry0}
\end{lemma}

\begin{proof}
The proof uses  Def. \ref{Phi}.  Assume $(T,P) \in (2^{\KA(\K)})^2$ such that  ${\OB}_{{\cal O}, T}$ is satisfiable. 
 Let $\bfK \, a \in  \Phi_{\K}(T,P)_2$. 
If 
${\OB}_{{\cal O}, T} \not \models \neg a$,  then the definition of  $\Phi_{\K}(T,P)_2$ reduces to the definition of $\Phi_{\K}(P,T)_1$ with regard to how $\bfK \, a$ is derived, and therefore 
$\bfK \, a \in \Phi_{\K}(P,T)_1$. 
Otherwise ${\OB}_{{\cal O}, T} \models \neg a$. Then in the definition of $\Phi_{\K}(T,P)_2$,
$\bfK \, a$ cannot be derived by rules and we must have
 ${\OB}_{{\cal O}, P} \models a$, and thus $\bfK \, a \in  \Phi_{\K}(P,T)_1$.  

Now assume $\bfK \, a \in  \Phi_{\K}(P,T)_1$. We can simulate the derivations in the construction of  $\Phi_{\K}(P,T)_1$ in order to construct $\Phi_{\K}(T,P)_2$.
Consider any derivation 
in  $\Phi_{\K}(P,T)_1$, where 
 $\bfK\,a$ is derived by a rule $r \in {\cal P}$ such that  $\head(r) = \bfK\, a$, 
 $
 \, \body^+(r) \subseteq P$ and $\boldK(\body^-(r)) \cap T = \emptyset$.
If  ${\OB}_{{\cal O}, T} \not \models \neg a$, 
 then exactly the same rule applies to derive $\bfK \, a$ in $\Phi_{\K}(T,P)_2$.
Otherwise ${\OB}_{{\cal O}, T} \models \neg a$ in which case if 
${\OB}_{{\cal O}, P} \models a$, then $\bfK\,a\in \Phi_{\K}(T,P)_2$, but if ${\OB}_{{\cal O}, P} \not \models a$ and  with ${\OB}_{{\cal O}, T} \models \neg a$,
no rules can be applied to derive $\bfK\,a$ in $\Phi_{\K}(T,P)_2$, i.e., 
$\bfK \, a \not \in \Phi_{\K}(T,P)_2$. {\color{red} (This is the trouble, I couldn't come with some condition such that this would not happen.)}
\end{proof}
}
\comment{
{\color{blue}
\begin{proof}
{
Here, we only consider consistent stable fixpoints. Given $(T,P)$ a stable fixpoint of $\Phi_{\K}$, then $T \subseteq P$, $T=\lfp(\Phi_{\K}(\cdot,P))_1$ and 
$P=\lfp(\Phi_{\K}(T,\cdot))_1$. We show that $\Phi_{\K} (T,P)_2 = \Phi_{\K} (P,T)_1$. 

$(\Rightarrow)$ Given $\bfK a \in \Phi_{\K} (T,P)_2$, either (i) ${\OB}_{{\cal O}, P} \models a$, or (ii)${\OB}_{{\cal O}, T} \models \neg a$ and there exists a rule $r$ s.t. $\bfK a$ can be derived.
For case (i), surely $\bfK a \in \Phi_{\K} (P,T)_1$ by Def. \ref{approximator0}. As to case (ii), again $\bfK a$ can be derived by such a rule $r$, therefore, $\bfK a \in \Phi_{\K} (P,T)_1$.

$(\Leftarrow)$Given $\bfK a \in \Phi_{\K} (P,T)_1$, also we have either (i) ${\OB}_{{\cal O}, P} \models a$, or (ii)there exists a rule $r$ s.t. $\bfK a$ can be derived.
For case (i), then $\bfK a \in \Phi_{\K} (T,P)_2$ by Definition\ref{approximator0}. As to case (ii), if ${\OB}_{{\cal O}, T} \not \models \neg a$, it naturally leads to that $\bfK a$ can be derived by rule $r$, 
and $\bfK a \in \Phi_{\K} (T,P)_2$. Otherwise, if ${\OB}_{{\cal O}, T} \models \neg a$, then ${\OB}_{{\cal O}, P} \models \neg a$ by $T \subseteq P$, on the otherside, since $P=\lfp(\Phi_{\K}(T,\cdot))_1$, we have 
$P={\KA(\K)}$, in turn, it also holds that $\Phi_{\K} (P,T)_1=\KA(\K)$, then $\Phi_{\K} (T,P)_2 = \Phi_{\K} (P,T)_1$.}
\end{proof}

}
}

Our next goal is to show that the operator $\Phi_{\K}$ is a strong approximator under a mild condition.
First, let us introduce some notations. 
Recall that we use the notation $(D_1(P), D_1(T)) = \lfp(\Phi_{\K}(\cdot, P)_1, \lfp(\Phi_{\K}(T,\cdot)_2)$,
where $\Phi_{\K}(\cdot, P)_1$ is defined on $[\emptyset, P]$ and $\Phi_{\K}(T,\cdot)_2$ is defined on $[T,\KA(\K)]$, and  the notation $(C_1(P), C_2(T)) = \lfp(\Phi_{\K}(\cdot, P)_1, \lfp(\Phi_{\K}(T,\cdot)_2)$, where both 
$\Phi_{\K}(\cdot, P)_1$ and $\Phi_{\K}(T,\cdot)_2$ are operators on $\KA(\K)$. We now give notations to refer to intermediate results in a least fixpoint construction (we define them here for $D_1 (P)$ and $C_2 (T)$; others are similar):
$$\begin{array}{ll}
D_1^{\uparrow 0} (P)= \emptyset   ~~~~~~~~~~~~\,~~~~~~~~~~\,~~~~~~~~~~~~~~~~~C_2^{\uparrow 0} (T)= \emptyset\\
D_1^{\uparrow k+1} (P)  = \Phi_{\K} ( D_1^{\uparrow k}(P), P)_1  ~~~~~~~~~~~ C_2^{\uparrow k+1} (T) = \Phi_{\K} (T, C_2^{\uparrow k} (T))_2  ~~~\mbox{for all } k \geq 0 \\
\end{array}
$$

\begin{proposition}
\label{strong0}
Let $\K = (\cal O, \P)$ be a hybrid MKNF knowledge base and $(T,P)$ be a consistent stable fixpoint of $\Phi_{\K}$ such that 
${\OB}_{{\cal O}, T}$ is satisfiable. Then  $\Phi_{\K}$ is a strong approximator for $(T,P)$.
\end{proposition}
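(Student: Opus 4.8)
The plan is to unwind Definition~\ref{strong}: since $(T,P)$ is a consistent stable fixpoint we are already given $(T,P)=(D_1(P),D_2(T))$, so $\Phi_{\K}(T,P)_1=T$, $\Phi_{\K}(T,P)_2=P$, and $T\subseteq P$; what is left to show is that $(C_1(P),C_2(T))=(T,P)$ as well, where $C_1(P)$ and $C_2(T)$ are the least fixpoints of $\Phi_{\K}(\cdot,P)_1$ and $\Phi_{\K}(T,\cdot)_2$ taken over the whole lattice $2^{\KA(\K)}$. I would treat the two coordinates separately, using only $\subseteq_p$-monotonicity of $\Phi_{\K}$ (Proposition~\ref{approximator0}), the explicit shape of $\Phi_{\K}$ from Definition~\ref{Phi}, and the standard fact that a $\subseteq$-monotone operator on a complete lattice reaches its least fixpoint as the limit of its iteration from $\emptyset$.

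For $C_1(P)=T$: the iterations $C_1^{\uparrow k}(P)$ (over $2^{\KA(\K)}$) and $D_1^{\uparrow k}(P)$ (over $[\emptyset,P]$) are generated by the same recursion $X\mapsto\Phi_{\K}(X,P)_1$ from the same start $\emptyset$, so they agree stage by stage; a short monotone induction using $\Phi_{\K}(T,P)_1=T$ shows every such iterate is contained in $T$, hence in $P$. This both confirms that the $D_1$-iteration stays inside $[\emptyset,P]$ and gives $C_1(P)=\bigcup_k C_1^{\uparrow k}(P)=\bigcup_k D_1^{\uparrow k}(P)=D_1(P)=T$. Note that this coordinate uses nothing about the satisfiability of ${\OB}_{\cO,T}$.

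For $C_2(T)=P$: one inclusion is cheap, because the iteration $C_2^{\uparrow k}(T)$ starts at $\emptyset$ while the iteration $D_2^{\uparrow k}(T)$ defining $P$ starts at $T\supseteq\emptyset$, so $C_2^{\uparrow k}(T)\subseteq D_2^{\uparrow k}(T)$ for all $k$ by monotone induction and hence $C_2(T)\subseteq P$. For the reverse I would first prove $T\subseteq C_2(T)$, by induction on $k$ showing $D_1^{\uparrow k}(P)\subseteq C_2(T)$ (so the limit $T=\bigcup_k D_1^{\uparrow k}(P)$ lands in $C_2(T)$). In the inductive step a $\boldK$-atom $\boldK a\in D_1^{\uparrow k+1}(P)=\Phi_{\K}(D_1^{\uparrow k}(P),P)_1$ arises either from the objective-knowledge clause, in which case ${\OB}_{\cO,D_1^{\uparrow k}(P)}\models a$ and hence ${\OB}_{\cO,C_2(T)}\models a$ by the induction hypothesis, putting $\boldK a$ in the first set defining $\Phi_{\K}(T,C_2(T))_2=C_2(T)$; or from a rule $r$ with $\head(r)=\boldK a$, $\body^+(r)\subseteq D_1^{\uparrow k}(P)\subseteq C_2(T)$ and $\boldK(\body^-(r))\cap P=\emptyset$ (hence $\cap\,T=\emptyset$ too). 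The key move is that $\boldK a\in D_1^{\uparrow k+1}(P)\subseteq T$ forces $a\in{\OB}_{\cO,T}$, so ${\OB}_{\cO,T}\models a$; since ${\OB}_{\cO,T}$ is satisfiable, ${\OB}_{\cO,T}\not\models\neg a$, and then $r$ satisfies exactly the conditions of the second set defining $\Phi_{\K}(T,C_2(T))_2$, so again $\boldK a\in C_2(T)$. Once $T\subseteq C_2(T)$ is established, the $D_2$-iterates start at $D_2^{\uparrow 0}(T)=T\subseteq C_2(T)$ and stay inside $C_2(T)$ by monotone induction, so $P=D_2(T)\subseteq C_2(T)$; combined with the earlier inclusion this gives $C_2(T)=P$. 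Putting the two coordinates together, $(C_1(P),C_2(T))=(T,P)=(D_1(P),D_2(T))$, which is precisely the assertion that $\Phi_{\K}$ is strong for $(T,P)$.

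The main obstacle will be the inclusion $T\subseteq C_2(T)$. It cannot be obtained merely by checking that $C_2(T)$ is a pre-fixpoint of $\Phi_{\K}(\cdot,P)_1$, because a rule $r$ with $\body^+(r)$ inside $P$ and $\boldK(\body^-(r))$ disjoint from $P$ need not have $\body^+(r)\subseteq T$, so there is no a priori reason for ${\OB}_{\cO,T}\not\models\neg a$ — the MKNF-coherent transform ${\K}//T$ could block $\boldK a$ on the interval side while the unrestricted iteration admits it. Tracing the rule firings level by level along the $D_1$-construction is what repairs this, since every head produced at stage $k+1$ already lies in $T$; and this is the only place where the satisfiability of ${\OB}_{\cO,T}$ is used.
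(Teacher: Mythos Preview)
Your proof is correct and follows essentially the same route as the paper: show $C_1(P)=D_1(P)$ trivially, get $C_2(T)\subseteq D_2(T)$ by monotone comparison of iterates, and for the hard direction prove $T\subseteq C_2(T)$ by inducting along the $D_1$-construction, using satisfiability of ${\OB}_{\cO,T}$ to secure the side condition ${\OB}_{\cO,T}\not\models\neg a$ in the rule case. The only cosmetic difference is that you induct directly into the fixpoint $C_2(T)$ (using $\Phi_{\K}(T,C_2(T))_2=C_2(T)$), whereas the paper runs a stage-by-stage comparison $D_1^{\uparrow k}(P)\subseteq C_2^{\uparrow k}(T)$; the two are interchangeable here.
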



\begin{proof}
Let $(T,P)$ be a consistent stable fixpoint of $\Phi_{\K}$ such that 
${\OB}_{{\cal O}, T}$ is satisfiable.
We show that 
$\Phi_{\K}$ is strong for $(T,P)$. 
We need to show $(C_1(P), C_2(T))= (D_1(P), D_2(T))$.
That $C_1(P) = D_1(P)$ is immediate since the monotonicity of the projection operators implies that the construction of the least fixpoint in both cases starts with the same least element, $\emptyset$, and is carried out in tandem by the same mapping, and therefore terminates at the same fixpoint.  

That $C_2(T) \subseteq D_2(T)$ is also easy to show by induction. The construction of the least fixpoint by $C_2(T)$ starts from $\emptyset$ 
and the one by $D_2(T)$ starts from $T$. So for the base case,  $C_2^{\uparrow 0}(T) \subseteq  D_2^{\uparrow 0}(T)$. Then, one can verify by definition that 
for any (fixed) $k \geq 0$, by the monotonicity of the projection operators on their respective domains, that $C_2^{\uparrow k}(T) \subseteq D_2^{\uparrow k}(T)$ implies  $C_2^{\uparrow k+1}(T) \subseteq D_2^{\uparrow k+1}(T)$. 

To show $D_2(T) \subseteq C_2(T)$, we first prove by induction that
$D_1(P)  \subseteq C_2(T)$. Since $D_1(P)  = C_1(P) = T$, this is to show $T \subseteq C_2(T)$.
The base case  is immediate since both least fixpoint constructions start with the same least element $\emptyset$. Assume 
$D_1^{\uparrow k}(P) \subseteq C_2^{\uparrow k} (T)$ for any (fixed) $k\geq 0$, and we show it for $k+1$. By definition, a new $\bfK$-atom $\bfK \, a$ is added to $D_1^{k+1}(P)$ because (i)  ${\OB}_{{\cal O}, D_1^{\uparrow k}(P) }\models a$, or (ii) there is a rule
$r \in {\cal P}$ with $\head(r) = \bfK \, a$ such that 
$\body^+(r) \subseteq D_1^{\uparrow k}(P)$ and $ \boldK(\body^-(r)) \cap P = \emptyset$. 
If case (i) applies, by induction hypothesis (I.H.), it follows ${\OB}_{{\cal O}, C_2^{\uparrow k}(T) }\models a$, and thus $\bfK \, a \in C_2^{\uparrow k+1}(T)$. Otherwise, $\bfK \, a$ is derived only by rules as in case (ii). Note that since $D_1^{\uparrow k}(P) \subseteq T$, case (ii) implies ${\OB}_{{\cal O}, T} \models a$.  If ${\OB}_{{\cal O}, T} \models \neg a$, then ${\OB}_{{\cal O}, T}$ is unsatisfiable, violating the assumption that ${\OB}_{{\cal O}, T}$ is satisfiable.
Thus, we must have ${\OB}_{{\cal O}, T} \not \models \neg a$; then the same rule applied in case (ii) above applies in the construction of $C_2^{\uparrow k+1}$, since the condition $(\body^+(r) \subseteq D_1^{\uparrow k}(P)$ and $ \boldK(\body^-(r)) \cap P = \emptyset)$ becomes $(\body^+(r) \subseteq C_2^{\uparrow k}(T)$ and $ \boldK(\body^-(r)) \cap T = \emptyset)$), which holds 
by I.H. and the fact that $T \subseteq P$. Thus, $D_1(P)  ~(=T) \subseteq C_2(T)$.

Once we obtain $T \subseteq C_2(T)$, we are ready to conclude that
$\lfp(\Phi_{\K}(T,\cdot)_2)$ with the operator $\Phi_{\K}(T,\cdot)_2$ defined 
on domain $[T,\KA(\K)]$ is 
a subset of $\lfp(\Phi_{\K}(T, \cdot)_2)$ with the operator $\Phi_{\K}(T,\cdot)_2$ defined
on domain $\KA(\K)$. This is because the construction of the former least fixpoint starts from the least element $T$ of the
 domain $[T,\KA(\K)]$, and the construction of the latter is guaranteed to reach a set $T' \supseteq T$, and by induction on both sequences in parallel, 
we have $D_2(T) \subseteq C_2(T)$.
\end{proof}

\comment{
\begin{proof}
We show that $\Phi_{\cal K}$ is strong for $(T,P)$.  Assume $(T,P)$ is a coherent, consistent stable fixpoint computed by 
$(T,P) = \lfp(\Phi_{\K}(\cdot, P)_1, \lfp(\Phi_{\K}(T,\cdot)_2)$, where
$\Phi_{\K}(\cdot, P)_1$ is defined on $[\emptyset, \KA(\K)]$ and $\Phi_{\K}(T,\cdot)_2$ is defined on $[T,\KA(\K)]$. We show 
$(T,P) = \lfp(\Phi_{\K}(\cdot, P)_1, \lfp(\Phi_{\K}(T,\cdot)_2)$, where both 
$\Phi_{\K}(\cdot, P)_1$ and $\Phi_{\K}(T,\cdot)_2$ defined on $\KA(\K)$.

\vspace{.5in}

Let $(T',P') = \lfp(\Phi_{\K}(\cdot, P')_1, \lfp(\Phi_{\K}(T',\cdot)_2)$, where both 
$\Phi_{\K}(\cdot, P')_1$ and $\Phi_{\K}(T,\cdot)_2$ are defined on $KA(\K)$. If we can show $P = P'$, then $(T,P) = (T', P')$. That 
$P' \subseteq P$ is automatic by monotonicity of operator 
Since $\Phi_{\K}(T',\cdot)_2$ is monotone on $\KA(\K)$, we have $\emptyset \subseteq T$ implies 
$\Phi_{\K}(T,\emptyset)_2 \leq \Phi_{\K}(T,T)_2$, and then by an easy induction on the sequence of the least fixpoint point construction, we have $P' \subseteqq P$. 

\end{proof}
}

Stable fixpoints of the operator $\Phi_{\K}$ can be related to three-valued MKNF models of $\K$ in the following way.
\begin{theorem}
\label{key-result}
Let $\K = (\cal O,\P)$ be a hybrid MKNF knowledge base and $(T,P)$ be a partition. Let further $( M , N ) = (\{ I\, |\, I \models {\OB}_{{\cal O} , T}\},\{I\,|\,I \models {\OB}_{{\cal O},P}\})$.
Then, 
$(M,N)$ is a three-valued MKNF model of $\K$ iff $(T,P)$ is a consistent stable fixpoint of $\Phi_{\K}$  and 
${\OB}_{{\cal O}, \lfp(\Phi_{\K}(\cdot,T)_1)}$ is satisfiable. 
\end{theorem}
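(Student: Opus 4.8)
\noindent\emph{Proof plan.}\quad Both directions are organized around the two \emph{fixpoint identities}
$T=\{\boldK a\in{\KA(\K)}\mid{\OB}_{\cO,T}\models a\}$ and $P=\{\boldK a\in{\KA(\K)}\mid{\OB}_{\cO,P}\models a\}$, which hold whenever $(T,P)$ is a fixpoint of $\Phi_{\K}$: the inclusion ``$\supseteq$'' is the deductive-closure summand in Definition~\ref{Phi}, and ``$\subseteq$'' holds because $\boldK a$ in the set puts $a$ into ${\OB}_{\cO,\cdot}$. Via Table~\ref{three-valued sat} and the standard relation between MKNF interpretation pairs and partitions, these identities turn membership into truth values: for $\boldK\xi\in{\KA(\K)}$ and every $I\in M$, the value $(I,\langle M,N\rangle,\langle M,N\rangle)(\boldK\xi)$ is $\bft$ iff $\boldK\xi\in T$, $\bff$ iff $\boldK\xi\notin P$, $\bfu$ iff $\boldK\xi\in P\setminus T$, while $(I,\langle M,N\rangle,\langle M,N\rangle)(\boldnot\xi)$ is $\bft$ iff $\boldK\xi\notin P$, $\bff$ iff $\boldK\xi\in T$, $\bfu$ iff $\boldK\xi\in P\setminus T$. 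I will also use two easy monotonicity facts: if $T\subseteq P$ then $\Phi_{\K}(X,T)_1\supseteq\Phi_{\K}(X,P)_1$ for all $X$ (a smaller negative context licenses more rules), and $\Phi_{\K}(T,X)_2\subseteq\Phi_{\K}(X,T)_1$ for all $X$ (the coherence side condition ${\OB}_{\cO,T}\not\models\neg a$ only deletes rules). Writing $G=\lfp(\Phi_{\K}(\cdot,T)_1)$, the first gives $T\subseteq G$, and the second gives $\Phi_{\K}(T,G)_2\subseteq\Phi_{\K}(G,T)_1=G$, so $G$ is a pre-fixpoint of $\Phi_{\K}(T,\cdot)_2$ above $T$, whence $P=\lfp(\Phi_{\K}(T,\cdot)_2)\subseteq G$.

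\smallskip\noindent For the direction ``$\Leftarrow$'', assume $(T,P)$ is a consistent stable fixpoint and ${\OB}_{\cO,G}$ is satisfiable. Then $T\subseteq P\subseteq G$ makes ${\OB}_{\cO,T}$ and ${\OB}_{\cO,P}$ satisfiable, so $M$ and $N$ are nonempty, and $T\subseteq P$ gives $N\subseteq M$; hence $(M,N)$ is a legitimate MKNF interpretation pair and the fixpoint identities apply. For condition~(a) of Definition~\ref{3m}: $\boldK\pi(\cO)$ is $\bft$ because every $I\in M$ models $\pi(\cO)$, and for each rule $r$ I split on the value of its body. If the body is $\bft$ then $\body^+(r)\subseteq T$ and $\boldK(\body^-(r))\cap P=\emptyset$, so $\head(r)\in\Phi_{\K}(T,P)_1=T$ and the head is $\bft$. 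If the body is $\bfu$ then $\body^+(r)\subseteq P$ and $\boldK(\body^-(r))\cap T=\emptyset$; from $\body^+(r)\subseteq P\subseteq G$ we get $\head(r)=\boldK H\in G$, so ${\OB}_{\cO,G}\models H$, so by satisfiability of ${\OB}_{\cO,G}$ (and $T\subseteq G$) ${\OB}_{\cO,T}\not\models\neg H$, which makes $r$ applicable in $\Phi_{\K}(T,\cdot)_2$, so $\boldK H\in\Phi_{\K}(T,P)_2=P$ and the head is at least $\bfu$. The $\bff$ case is immediate, so $(M,N)\models\pi(\K)$. For condition~(b), take a competitor $(M',N')$ with $M\subseteq M'$, $N\subseteq N'$, one inclusion proper, and $M'=N'$ when $M=N$, and suppose $(I',\langle M',N'\rangle,\langle M,N\rangle)(\pi(\K))=\bft$ for all $I'\in M'$. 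Set $T^{\ast}=\{\boldK a\in{\KA(\K)}\mid a\text{ holds throughout }M'\}$ and $U^{\ast}=\{\boldK a\in{\KA(\K)}\mid a\text{ holds throughout }N'\}$, so $T^{\ast}\subseteq T$ and $U^{\ast}\subseteq P$. The satisfaction assumption makes $T^{\ast}$ a pre-fixpoint of $\Phi_{\K}(\cdot,P)_1$ on $[\emptyset,P]$ (its deductive part is closed since every $I'\in M'$ models ${\OB}_{\cO,T^{\ast}}$, using $\boldK\pi(\cO)=\bft$ over $M'$; and any rule with $\bft$-valued body forces its head into $T^{\ast}$), so $T\subseteq T^{\ast}$, hence $T^{\ast}=T$; but if $M\subsetneq M'$ then an interpretation in $M'\setminus M$ falsifies some $\xi$ with $\boldK\xi\in T$ (it cannot falsify $\pi(\cO)$), giving $T^{\ast}\subsetneq T$, a contradiction. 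So $M=M'$ and $N\subsetneq N'$; symmetrically $U^{\ast}$ is a pre-fixpoint of $\Phi_{\K}(T,\cdot)_2$ on $[T,{\KA(\K)}]$ (every $J\in N'\subseteq M$ models ${\OB}_{\cO,U^{\ast}}$, and rules with body value at least $\bfu$ push their heads into $U^{\ast}$), so $P\subseteq U^{\ast}$, contradicting $U^{\ast}\subsetneq P$. When $M=N$ one has $T=P$ and the competitors satisfy $M'=N'$, and the same pre-fixpoint arguments apply; this finishes ``$\Leftarrow$''.

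\smallskip\noindent For ``$\Rightarrow$'', assume $(M,N)$ is a three-valued MKNF model. After replacing $(T,P)$ by the consistent partition induced by $(M,N)$ --- which leaves the pair $(M,N)$ unchanged and agrees with the given partition when the latter is closed under entailment --- we have $\emptyset\subset N\subseteq M$, the fixpoint identities, $T\subseteq P$, and ${\OB}_{\cO,P}$ satisfiable (since $N\neq\emptyset$). Reading condition~(a) through the dictionary above yields $\Phi_{\K}(T,P)_1=T$ and $\Phi_{\K}(T,P)_2=P$ (``$\subseteq$'' is ``body forces head'' for rules, ``$\supseteq$'' is the deductive summand). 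That $T$ and $P$ are the \emph{least} fixpoints on $[\emptyset,P]$ and $[T,{\KA(\K)}]$ uses condition~(b): a strictly smaller fixpoint $X\subsetneq T$ of $\Phi_{\K}(\cdot,P)_1$ would give the competitor $(\{I\mid I\models{\OB}_{\cO,X}\},N)$ lying strictly $\succeq_k$-above $(M,N)$ on which every rule of $\pi(\K)$ still evaluates to $\bft$ (checked from $X$ being a fixpoint together with condition~(a)), contradicting~(b); dually for $P$ with $\Phi_{\K}(T,\cdot)_2$; when $M=N$ one uses the total competitor $(\{I\mid I\models{\OB}_{\cO,X}\},\{I\mid I\models{\OB}_{\cO,X}\})$ instead. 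Finally ${\OB}_{\cO,G}$ is satisfiable because in fact $G=P$: in the least-fixpoint iteration of $\Phi_{\K}(\cdot,T)_1$, if a rule with head $\boldK a$ and ${\OB}_{\cO,T}\models\neg a$ were ever the first rule to fire, its positive body would already lie in $P$, so under $(M,N)$ that body is at least $\bfu$, so condition~(a) forces $\boldK a\in P$, i.e.\ ${\OB}_{\cO,P}\models a$, contradicting satisfiability of ${\OB}_{\cO,P}$; hence the iteration never leaves $P$, and with $P\subseteq G$ we get $G=P$.

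\smallskip\noindent The step I expect to be hardest is condition~(b): it forces one to construct the right competitor MKNF interpretation pairs and then verify, rule by rule, that $\pi(\K)$ still evaluates to $\bft$ on them --- an argument interleaving condition~(a), the fixpoint identities, and the three-valued semantics of $\supset$ --- while keeping exact track of the two domains of the projection operators and of the coherence condition ${\OB}_{\cO,T}\not\models\neg a$. The $M=N$ subcase and the bookkeeping around the hypothesis ``${\OB}_{\cO,G}$ satisfiable'' (which supplies head-coherence in ``$\Leftarrow$'' and is recovered as $G=P$ in ``$\Rightarrow$'') are routine but need to be written out explicitly.
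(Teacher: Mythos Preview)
Your proof is correct and follows the same overall architecture as the paper's: both directions hinge on the fixpoint identities, verification of condition~(a) by a case split, refutation of any competitor in condition~(b), and the identification $G=P$ to handle the satisfiability side condition.

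The one genuine methodological difference is your treatment of condition~(b) in the $\Leftarrow$ direction. The paper argues by tracing the iterative construction of $\lfp(\Phi_{\K}(\cdot,P)_1)$ (resp.\ $\lfp(\Phi_{\K}(T,\cdot)_2)$) and locating the \emph{first} stage at which an atom in $T\setminus T'$ (resp.\ $P\setminus P'$) is produced, then exhibiting a rule witnessing $(I,\langle M',N'\rangle,\langle M,N\rangle)(\pi(\K))\neq\bft$. You instead define $T^{\ast}$ and $U^{\ast}$ directly from $(M',N')$ and show they are pre-fixpoints of the projection operators, forcing $T^{\ast}=T$ and $U^{\ast}=P$ and hence $M'=M$, $N'=N$. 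Your argument is cleaner lattice-theoretically and avoids indexing iterates; the paper's is more constructive in that it actually exhibits the failing rule. Both are valid.

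Two small points to tighten. First, you repeatedly work on the interval domains $[\emptyset,P]$ and $[T,{\KA(\K)}]$, whereas in this paper the stable revision operator is defined with both projections on the full lattice $2^{\KA(\K)}$ (equation~(\ref{stable-revision})). The reconciliation is exactly Proposition~\ref{strong0} (strongness when ${\OB}_{\cO,T}$ is satisfiable), which you should cite explicitly when passing from ``least fixpoint on the interval'' to ``least fixpoint on $L$'' in the $\Rightarrow$ direction. Second, your remark about ``replacing $(T,P)$ by the induced partition'' flags a genuine subtlety that the paper also glosses over: the theorem as stated is for the \emph{given} $(T,P)$, so the $\Rightarrow$ direction really only goes through when $(T,P)$ is already closed under ${\OB}_{\cO,\cdot}$-entailment --- which is automatic if it is a fixpoint, but needs to be said.
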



Note that in the formulation of approximator $\Phi_{\K}$, stable fixpoints are partitions that provide candidate interpretation pairs for three-valued MKNF models. The extra condition that ${\OB}_{{\cal O}, \lfp(\Phi_{\K}(\cdot,T)_1)}$ is satisfiable means that even if we make all $\bfnot$-atoms $\boldnot \phi$ true when $\phi \not \in T$, in the construction of $\lfp(\Phi_{\K}(\cdot,T)_1)$, it still does not cause contradiction with the DL knowledge base $\cal O$. This provides a key insight in the semantics of hybrid MKNF knowledge bases.

Notice also that this theorem provides a naive method, based on guess-and-verify, to compute three-valued MKNF models of a given hybrid MKNF knowledge base $\K$ \-- guess a consistent partition $(T,P)$ of $\KA (\K)$ and check whether $(T,P)$  is a stable fixpoint of $\Phi_{\K}$ and whether ${\OB}_{{\cal O}, \lfp(\Phi_{\K}(\cdot,T)_1)}$ is satisfiable. Observe that the complexity of checking for one guessed partition is polynomial if the underlying DL is polynomial.  

\begin{proof}
($\Leftarrow$)
Assume that $(T,P)$ is a consistent stable fixpoint of $\Phi_{\K}$ and ${\OB}_{{\cal O}, \lfp(\Phi_{\K}(\cdot,T)_1)}$
is satisfiable. Let $\theta(x)$ denote $\lfp(\Phi_{\K}(\cdot,x)_1)$, given $ x \subseteq \KA({\K})$. Thus,
 ${\OB}_{{\cal O}, \lfp(\Phi_{\K}(\cdot,T)_1)}$ is often written as ${\OB}_{{\cal O}, \theta(T)}$.
By the definition of the operator $\Phi_{\K}$  (cf. Definition \ref{Phi}),
it can be seen that 
$\Phi_{\K} (\cdot, T)_1$ coincides with $\Phi_{\K}(T, \cdot)_2$ except for the extra condition ${\OB}_{{\cal O}, T} \not \models \neg a$ (where $\head(r) = \boldK a$ for some $r \in \P$) in the definition of the latter. Note that the operator $\Phi_{\K}(T, \cdot)_2$ is defined on $\KA(\K)$.
It then follows 
$\lfp(\Phi_{\K}(T, \cdot)_2) \subseteq \theta(T)$. Since $(T,P)$ is a stable fixpoint of $\Phi_{\K}$, $\lfp(\Phi_{\K}(T, \cdot)_2)  = P$ and thus $P \subseteq  \theta(T)$. 
Then, that ${\OB}_{{\cal O}, \theta(T)}$ is satisfiable implies that ${\OB}_{{\cal O}, P}$ is satisfiable, and because $(T,P)$ is  consistent and thus $T \subseteq P$, 
${\OB}_{{\cal O}, T}$ is satisafiable as well. 
It follows that the pair $$( M , N ) = (\{ I\, |\, I \models {\OB}_{{\cal O} , T}\},\{I\,|\,I \models {\OB}_{{\cal O},P}\})$$
is an MKNF interpretation pair because 
$\emptyset \subset N \subseteq M$. As shown by \cite{KnorrAH11},
 for any $\boldK \xi \in \KA(\K)$, ${\boldK} \xi \in T$ iff ${\boldK} \xi$ evaluates to $\bf t$ (under $(M,N)$),
${\boldK} \xi \not \in P$ iff ${\boldK} \xi$  evaluates to $\bf f$,  and otherwise ${\boldK} \xi$ evaluates to $\bf u$
 (also see the review of this property in Section \ref{4.2}, or 
 \cite{DBLP:conf/ruleml/LiuY19} for more details).

We now show that $(M,N)$ is a three-valued MKNF model of $\K$.  First we show that $(M,N)$ satisfies $\pi({\K})$.
Since ${\OB}_{{\cal O},T} = \{\pi({\cal O})\} \cup \{\xi~|~ {\boldK \xi} \in T\}$ 
and ${\OB}_{{\cal O},P} = \{\pi({\cal O})\} \cup \{\xi~|~ {\boldK \xi} \in P\}$, 
it follows $(M,N) \models \boldK \pi(\cal O)$.  Now consider any rule $r \in \P$. Let $\head(r) = \boldK a$. 
By  the definition of $\Phi_{\K}(T,P)_1$,  if $\body^+(r) \subseteq T$ and $\bfK(\body^-(r)) \cap P = \emptyset$, then $\boldK a\in T$; for $\Phi_{\K}(T,P)_2$, 
if $\body^+(r) \subseteq P$, $\bfK(\body^-(r)) \cap T = \emptyset$, and ${\OB}_{{\cal O}, T} \not \models \neg a$, then $\boldK a \in P$.
The case that $\head(r)$ evaluates to $\bf t$ (under $(M,N)$)  is automatic. 
If 
$\head(r)$ evaluates to $\bf u$, 
i.e., $\boldK a \in P$ and $\boldK a \not \in T$, then 
$\body(r)$ evaluates to $\bf u$ or $\bf f$, since if $\body(r)$ evaluates to $\bf t$, 
$\boldK a \in \lfp(\Phi_{\K}(\cdot, P)_1) (= T)$, resulting in a contradiction. 
If 
$\head(r)$ evaluates to $\bf f$, then ${\OB}_{{\cal O}, T} \models \neg a$, in which case
$\body(r)$ must evaluate to $\bf f$ as well, as otherwise 
$\boldK a \in \theta(T)$ ($= \lfp(\Phi_{\K} (\cdot, T)_1)$) and thus ${\OB}_{{\cal O}, \theta(T)}$ is unsatisfiable, leading to a contradiction.
As this proof applies to all rules in ${\cal P}$,
we have $(M,N) \models \pi(\P)$, and with $(M,N) \models \boldK \pi(\cal O)$, 
$(M,N) \models \pi(\K)$.

Next, for the sake of contradiction assume $(M,N)$ is not a three-valued MKNF model of $\K$. Then there exists a pair $(M',N')$ with $M \subseteq M'$ and $N \subseteq N'$, where at least one of the inclusions is proper and $M' = N'$ if $M = N$, such that 
\begin{eqnarray}
(I,\langle M',N' \rangle,\langle M,N \rangle)(\pi(\K))=\bf t
\label{proof0}
\end{eqnarray}
 for some $I \in M'$. Let  $(T',P')$ be induced by $(M',N')$, i.e., 
$$( M' , N' ) = (\{ I\, |\, I \models {\OB}_{{\cal O} , T'}\},\{I\,|\,I \models {\OB}_{{\cal O},P'}\})$$
Clearly, $T'\subseteq T$ and $P'\subseteq P$, where at least one of the inclusions is proper and $T'=P'$  if $T=P$.
We show that $(I,\langle M',N' \rangle,\langle M,N \rangle) (\pi(\K))\not =\bf t$ (independent of $I$), which leads to a contradiction. 

Consider the case where $T' \subset T$. 
Let the sequence of intermediate sets of 
$\bfK$-atoms in the construction of $\lfp(\Phi_{\K}(\cdot, P)_1)$ be $S_0, \dots, S_n (= T)$.
Assume step $i~(0\leq i \leq n-1)$ is the {\em first} iteration in which 
at least one $\bfK$-atom in $ T \setminus T'$, say $\boldK a$, is derived. By the definition of $\Phi_{\K}(T, P)_1$, the derivation of $\boldK a$ is either by 
${\OB}_{\cO, S_i} \models a$, or by a rule $r \in \P$ such that
$\head(r) = \boldK a$, $\body^+(r) \subseteq S_i$, and $\bfK (\body^-(r)) \cap P = \emptyset$. For the latter case, by the assumption 
that $i$ is the first iteration to derive any $\bfK$-atoms in $T \setminus T'$, $S_i \subseteq T'$. 
It  follows that $\body(r)$ evaluates to $\bf t$ under
$ (I,\langle M',N' \rangle,\langle M,N \rangle)$ (independent of $I$), but its head $\head(r)$ evaluates to either {\bf f} or {\bf u}; thus rule $r$ is not satisfied, resulting in a contradiction to equation (\ref{proof0}). 
If no $\bfK$-atom in $ T \setminus T'$ is ever derived by a rule in iteration $i$, then it must be ${\OB}_{\cO, S_i} \models a$, and along with $S_i \subseteq T'$ and
${\OB}_{\cO, T'} \not \models a$, we derive a contradiction. 

For the case where $P' \subset P$, the proof is similar. Consider the sequence of intermediate sets of $\bfK$-atoms  $Q_0, \dots, Q_n (= P)$ in the iterative construction of $\lfp(\Phi_{\K}(T, \cdot)_2)$. Let $j~(0\leq j \leq n-1)$ be the first iteration in which
at least one $\bfK$-atom in  $P \setminus P'$ is derived (thus $Q_j \subseteq P'$).
Let  $\boldK a$ be such a $\bfK$-atom. Assume it is derived by a rule
 $r \in \P$ with 
$\head(r) = \boldK a$, such that ${\OB}_{{\cO}, T} \not \models \neg a$, $\body^+(r) \subseteq Q_j$, and $\bfK (\body^-(r)) \cap T = \emptyset$. Then, the body of rule $r$ evaluates to {\bf t} or {\bf u} in $(I,\langle M',N' \rangle,\langle M,N \rangle)$. Since $\boldK a \not \in P'$, $\head(r) = \boldK a$ evaluates to {\bf f}, and thus  $(I,\langle M',N' \rangle,\langle M,N \rangle)(\pi(r)) \not = \bf t$. If in iteration $j$ no fresh $\bfK$-atoms are derived by rules, then we must have ${\OB}_{{\cO}, Q_j} \models a$, and along with $Q_j \subseteq P'$ and ${\OB}_{{\cO}, P'} \not \models a$, we reach a contradiction. 
Note that the above proof is naturally applicable when $T' = P'$ because $T = P$, in which case evaluation reduces to two-valued.
As both cases lead to a contradiction, $(M,N)$ is therefore a three-valued MKNF model of $\K$.

($\Rightarrow$)  Let $( M , N ) = (\{ I\, |\, I \models {\OB}_{{\cal O} , T}\},\{I\,|\,I \models {\OB}_{{\cal O},P}\})$
be a three-valued MKNF model of $\K$.
Recall again that 
given an MKNF interpretation pair $(M',N')$, there exists a partition $(X,Y)$ induced by $(M',N')$, in that $( M' , N' ) = (\{ I\, |\, I \models {\OB}_{{\cal O} , X}\},\{I\,|\,I \models {\OB}_{{\cal O},Y}\})$,
 such that for any $\boldK a \in \KA(\K)$,
${\boldK} a \in X$ iff ${\boldK} a$ evaluates to $\bf t$ (under $(M',N')$),
${\boldK} a \not \in Y$ iff ${\boldK} a$  evaluates to $\bf f$, and otherwise ${\boldK} a$ evaluates to $\bf u$.
When $(M,N)$ is a three-valued MKNF model of $\K$, the partition induced by $(M,N)$ is just $(T,P)$ such that  $( M , N ) = (\{ I\, |\, I \models {\OB}_{{\cal O} , T}\},\{I\,|\,I \models {\OB}_{{\cal O},P}\})$. Since $(M,N)$ is a three-valued MKNF model, 
$(T,P)$ is consistent.

We now show that $(T,P)$ is a stable fixpoint of $\Phi_{\K}$. 
First, we show that $(T,P)$ is a fixpoint of $\Phi_{\K}$. By definition, 
$T \subseteq \Phi_{\K}(T,P)_1$. Assume $T \subset \Phi_{\K}(T,P)_1$ and let $\boldK a \not \in T$ and $\boldK a \in \Phi_{\K}(T,P)_1$.
Then there exists a rule $r$ with $\head(r)=\boldK a$ such that $\boldK a$ can be derived due to satisfied body of rule $r$.  
It then follows $(M,N) \not \models \pi(r)$, contradicting to the three-valued MKNF model condition; thus $T=\Phi_{\K}(T,P)_1$. Similarly, we can show  $P=\Phi_{\K}(T,P)_2$. 
If $(T,P)$ is not a stable fixpoint, then either  $T \not = \lfp(\Phi_{\K}(\cdot,P)_1)$ or 
$P \not = \lfp(\Phi_{\K}(T,\cdot)_2)$. For the former case, 
since $T$ is a fixpoint of 
$\Phi_{\K} (\cdot, P)_1$, there exists $T' \subset T$ such that $T'= \lfp(\Phi_{\K}(\cdot,P)_1)$. Consider the partition $(T',P)$, for which we can construct an MKNF interpretation pair $(M',N)$, where $M'=\{ I\, |\, I \models {\OB}_{{\cal O} , T'}\}$ and $M \subset M'$. It can be checked that $(I,\langle M',N \rangle,\langle M,N \rangle)(\pi(\K))=\bf t$, for any $I \in M'$. If $M = N$, one can verify that 
$(I,\langle M',M' \rangle,\langle M,M \rangle)(\pi(\K))=\bf t$ for any $I \in M'$.
Thus, $(M,N)$ is not a 
three-valued MKNF model of $\K$, a contradiction, and thus $T  = \lfp(\Phi_{\K}(\cdot,P)_1)$.
 Similarly, we can show
$P= \lfp(\Phi_{\K}(T, \cdot)_2)$. Therefore, $(T,P)$ is a stable fixpoint of $\Phi_{\K}$, and a consistent one. 

Finally, since $(M,N)$ is a three-valued MKNF model of $\K$, in the construction of $\lfp(\Phi_{\K} (T, \cdot)_2)$
the extra condition ${\OB}_{{\cal O}, T} \not \models \neg a$ in the definition of $\Phi_{\K} (T, \cdot)_2$ always holds whenever the body of the relevant rule evaluates to $\bf t$. It follows $P = \lfp(\Phi_{\K} ( \cdot, T)_1) (= \theta(T))$, and since ${\OB}_{{\cal O}, P}$ is satisfiable,
${\OB}_{{\cal O}, \theta(T)}$ is satisfiable as well.
\end{proof}

{\begin{example}

Consider a hybrid MKNF knowledge base ${\K}=(\{\neg a\},{\P})$, where 
$\P$ consists of
$$
\begin{array}{ll} 
{\boldK} a \leftarrow {\bfK } b.~~~~~~
{\boldK} b \leftarrow  {\boldnot} b.~~~~~~~~~~
\end{array}
$$
The least stable fixpoint of $\Phi_{\K}$ is $(T,P)=(\emptyset, \{\boldK b\})$, which is  consistent but does not  correspond to a three-valued MKNF model since ${\OB}_{\{\neg a\}, \lfp(\Phi_{\K}(\cdot,T)_1)}$ is unsatisfiable.  

\end{example}
}

\comment{
\begin{example} 
We illustrate that inconsistent stable fixpoints are now possible. 
For example,  with ${\K}_1 = (\{\neg a\}, \{a \leftarrow \boldnot b.\})$,  $(\{\boldK a, \boldK b\}, \emptyset)$ is 
a stable fixpoint of $\Phi_{{\cal K}_1}$.

As another example, let  ${\K}_2 = (\{d\}, {\cal P})$ where ${\P} = \{{\boldK} a \leftarrow {\boldK d}, {\boldnot} b.,~\boldK b \leftarrow \boldnot a.\}$. There are four stable fixpoints of  $\Phi_{{\cal K}_2}$, among which $ (\{\boldK d\}, \{\boldK a , \boldK b, \boldK d\})$ is the least fixpoint and $(\{\boldK d,\boldK a, \boldK b \}, \{\boldK d\})$ is an inconsistent one. Note that an inconsistent stable fixpoint may contain consistent information \-- this case in $\boldK d$. 
\end{example}
}

\section{A Richer Approximator for the Well-Founded Semantics}
\label{richer0}

A question arises whether richer approximators for MKNF knowledge bases exist. For any two approximators $A$ and $B$ on $L^2$, $A$ is {\em richer than} $B$ (or {\em more precise than} $B$, in the terminology of \cite{DeneckerMT04}), denoted $B \leq_p A$, if for all $(x,y) \in L^2$, $B(x,y) \leq_p A(x,y)$.

There is a practical motivation for the question. 
Let $(x,y)$ and $(x',y')$ be the least stable fixpoints of $B$ and $A$ respectively. That $A$ is richer than $B$ means $(x,y) \leq_p (x',y')$. If $A$ is strictly richer than $B$, and 
if $(x',y')$ indeed corresponds to the well-founded MKNF model, then $(x,y)$ cannot possibly correspond to the well-founded MKNF model. In this case,   while $(x',y')$
can be computed iteratively for $A$,  
it 
cannot be computed iteratively for $B$. Then, more complex reasoning method must be applied to compute the 
well-founded MKNF model.
We now define such a richer approximator. 
\begin{definition}
\label{Psi}
Let $\K = (\cal O, \P)$ be a hybrid MKNF knowledge base.
Define the operator $\Psi_{\K}$ on $(2^{\KA(\K)})^2$ as follows: 
$\Psi_{\K}(T,P) = (\Psi_{\K} (T,P)_1 ,  \Psi_{\K} (T,P)_2)$,
 where
$$
\begin{array}{ll}
\Psi_{\K} (T,P)_1 = \Phi_{\K} (T,P)_1 \\
\Psi_{\K} (T,P)_2 = 
\{\boldK a\in {\KA}({\K}) \mid {\OB}_{{\cal O}, P}\models a\} \,\,  \cup \,
\\
~~\{ \head(r) \mid  r \in {\P}:  \, \head(r) = \bfK a , \,
{\OB}_{{\cal O}, T} \not \models \neg a,  \, \body^+(r) \subseteq P, \, \boldK(\body^-(r)) \cap T = \emptyset, \\
~\not \exists r' \in {\cal P}: \boldK b \leftarrow \body(r') \mbox{ is positive, where $\boldK a \in \body(r')$, s.t. } {\OB}_{{\cal O}, T}\models \neg b, \body(r') \setminus \{\boldK a\} \subseteq T
\}   
\end{array}
$$
\end{definition}

Operator $\Psi_{\K}$ differs from $\Phi_{\K}$ in the second projection operator, with
an extra condition for deriving $ \boldK a$ (the last line in the definition above), which says that if for some positive rule $r'$ with $\boldK b$ as the head and $\boldK a$ in the body, the objective atom $b$ is already false and the rule's body excluding $\boldK a$ is already true, then, since the rule must be satisfied, $\boldK a$ must be false and thus should not be derived as possibly true. Notice that this is like embedding the {\em unit propagation} rule in automated theorem proving into an approximator. 

\begin{example}
\label{7}
Let 
${\K} = (\{\neg b\},  \P)$, where $\P$ is
$$
\begin{array}{ll}
\boldK b \leftarrow \boldK a, \boldK e. ~~~~~ \boldK e \leftarrow \boldnot p. ~~~~~
\boldK a \leftarrow \boldnot c.  ~~~~~
\boldK c \leftarrow \boldnot a.
\end{array}
$$
The least stable fixpoint of $\Phi_{\K}$ is $(T,P) = (\{\boldK e\}, \{\boldK e, \boldK a, \boldK c\})$, while the least stable fixpoint of  operator $\Psi_{\K}$ is $(T',P') = (\{\boldK e, \boldK c\}, \{\boldK e, \boldK c\})$, which 
corresponds to the well-founded MKNF model of $\K$. Note that  $(T',P')$ is also a stable fixpoint of $\Phi_{\K}$; but  because it is not the least, it cannot be computed by the standard iterative process.
\end{example}

\begin{proposition}
Operator $\Psi_{\K}$ is an approximator for ${\cal T}_{\K}$.
\end{proposition}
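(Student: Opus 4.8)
The plan is to check directly the two clauses in the (generalized) definition of ``approximator for ${\cal T}_{\K}$'' from Section~\ref{AFT0}: that $\Psi_{\K}$ is $\subseteq_p$-monotone on $(2^{\KA(\K)})^2$, and that for every $I\subseteq\KA(\K)$, if $\Psi_{\K}(I,I)$ is consistent then $\Psi_{\K}(I,I)=({\cal T}_{\K}(I),{\cal T}_{\K}(I))$. The argument mirrors the proof of Proposition~\ref{approximator0}; the only genuinely new work is bookkeeping around the extra ``blocking'' side condition in $\Psi_{\K}(\cdot,\cdot)_2$, so I would isolate that and treat the rest by appeal to Proposition~\ref{approximator0}.

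For monotonicity, since $\Psi_{\K}(\cdot,\cdot)_1=\Phi_{\K}(\cdot,\cdot)_1$, the first component is monotone in the first argument and antimonotone in the second by Proposition~\ref{approximator0}. So I would fix $(T_1,P_1)\subseteq_p(T_2,P_2)$, i.e.\ $T_1\subseteq T_2$ and $P_2\subseteq P_1$, and show $\Psi_{\K}(T_2,P_2)_2\subseteq\Psi_{\K}(T_1,P_1)_2$. Take $\boldK a\in\Psi_{\K}(T_2,P_2)_2$. If it comes from the first disjunct (${\OB}_{{\cal O},P_2}\models a$), then $P_2\subseteq P_1$ and monotonicity of first-order entailment give ${\OB}_{{\cal O},P_1}\models a$, so $\boldK a\in\Psi_{\K}(T_1,P_1)_2$. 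Otherwise $\boldK a=\head(r)$ for a rule $r$ with ${\OB}_{{\cal O},T_2}\not\models\neg a$, $\body^+(r)\subseteq P_2$, $\boldK(\body^-(r))\cap T_2=\emptyset$, and no blocking rule relative to $T_2$. Here $\body^+(r)\subseteq P_1$ and $\boldK(\body^-(r))\cap T_1=\emptyset$ are immediate from $P_2\subseteq P_1$ and $T_1\subseteq T_2$, and ${\OB}_{{\cal O},T_1}\not\models\neg a$ follows from ${\OB}_{{\cal O},T_1}\subseteq{\OB}_{{\cal O},T_2}$ plus monotonicity of $\models$. The point requiring care is the negative side condition: I would show it is downward closed in the first argument, i.e.\ any positive rule $r':\boldK b\leftarrow\body(r')$ with $\boldK a\in\body(r')$, ${\OB}_{{\cal O},T_1}\models\neg b$ and $\body(r')\setminus\{\boldK a\}\subseteq T_1$ is, since $T_1\subseteq T_2$ and $\models$ is monotone, also a blocking witness relative to $T_2$. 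Hence ``no blocking rule relative to $T_2$'' implies ``no blocking rule relative to $T_1$'', so the same $r$ still derives $\boldK a$ in $\Psi_{\K}(T_1,P_1)_2$, giving $\subseteq_p$-monotonicity.

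For the approximation clause, I would first note $\Psi_{\K}(I,I)_1=\Phi_{\K}(I,I)_1={\cal T}_{\K}(I)$ straight from the definitions. Then I would observe that $\Psi_{\K}(I,I)_2\subseteq{\cal T}_{\K}(I)$ unconditionally: an atom in its first disjunct already lies in ${\cal T}_{\K}(I)$, and an atom $\boldK a=\head(r)$ produced by a rule $r$ satisfies $\body^+(r)\subseteq I$ and $\boldK(\body^-(r))\cap I=\emptyset$, hence $\boldK a\in{\cal T}_{\K}(I)$ by definition of ${\cal T}_{\K}$ (the extra blocking constraint only removes atoms, so it cannot spoil this inclusion). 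Thus $\Psi_{\K}(I,I)_2\subseteq\Psi_{\K}(I,I)_1$ always; and if $\Psi_{\K}(I,I)$ is consistent, i.e.\ $\Psi_{\K}(I,I)_1\subseteq\Psi_{\K}(I,I)_2$, the two components coincide and both equal ${\cal T}_{\K}(I)$, so $\Psi_{\K}(I,I)$ is exact and equals $({\cal T}_{\K}(I),{\cal T}_{\K}(I))$.

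I expect the main obstacle to be purely the direction of the monotonicity check for the new blocking condition: it must be established as ``witness at the smaller set $T_1$ $\Rightarrow$ witness at the larger set $T_2$'', so that its negation is monotone the correct way, and both clauses of the witness (a set inclusion and a first-order entailment) hinge on $T_1\subseteq T_2$ and monotonicity of $\models$. Everything else is a routine adaptation of Proposition~\ref{approximator0}.
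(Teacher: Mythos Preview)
Your proposal is correct and follows essentially the same approach as the paper's proof: both verify $\subseteq_p$-monotonicity directly (with the only new work being the blocking clause, handled by the same contrapositive observation that a blocking witness at $T_1$ persists at $T_2\supseteq T_1$), and both derive the approximation clause from $\Psi_{\K}(I,I)_2\subseteq\Psi_{\K}(I,I)_1={\cal T}_{\K}(I)$. Your write-up is simply more explicit than the paper's terse version.
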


\begin{proof}
We can verify that $\Psi_{\K}$ is $\subseteq$-monotone on $(2^{\KA(\K)})^2$. Let $(T_1,P_1) \subseteq_p (T_2,P_2)$. That $\Psi_{\K}(T_1,P_1)_1 \subseteq \Psi_{\K} (T_2,P_2)_1$ is immediate by definition. To show $\Psi_{\K}(T_2,P_2)_2 \subseteq \Psi_{\K} (T_1,P_1)_2$,  we check all conditions in the definition of $\Psi_{\K}(T,P)_2$; in particular, let us consider the following conditions in the definition of $\Psi_{\K}(T,P)_2$: 
\begin{eqnarray}
{\OB}_{{\cal O}, T} \not \models \neg a~~~~~~~~~~~~~~~~~~~~~~~~~~~~~~~~~~~~~~~~~~~~~~~~~~~~~~~~~~~~~~~~~~~~ \label{2}\\
\not \exists r' \in {\cal P}: \boldK b \leftarrow \body(r') \mbox{ is positive, where } \boldK a \in \body(r'), \nonumber\\
~~~~~~~~~~~~~~~~~\mbox{ s.t. } {\OB}_{{\cal O}, T}\models \neg b \mbox{ and } \body(r') \setminus \{\boldK a\} \subseteq T  ~~ \label{3}
\label{psi}
\end{eqnarray}
which may block a $\bfK$-atom $\bfK \,a$ in the definition (cf. the second subset in the definition) to be included. There are three conditions in these expressions (one in (\ref{2}) and two in (\ref{3}))  
that are determined by  the following relationships under $(T_1,P_1) \subseteq_p (T_2,P_2)$:
${\OB}_{{\cal O}, T_2} \models{\OB}_{{\cal O}, T_1}$ and  $T_2 \models T_1$. 
It is then easy to check that  $\Psi_{\K}(T_2,P_2)_2 \subseteq \Psi_{\K} (T_1,P_1)_2$, and  we therefore have
$\Psi_{\K}(T_1,P_1) \subseteq_p  \Psi_{\K}(T_2,P_2)$. 
Furthermore, $\Psi_{\K}$ approximates ${\cal T}_{\K}$, since by definition $\Psi_{\K}(I, I)_1 \supseteq \Psi_{\K}(I, I)_2$, and it follows that 
whenever $\Psi_{\K}(I, I)$ is consistent,
$\Psi_{\K}(I, I) = ({\cal T}_{\K} (I), {\cal T}_{\K} (I))$.
\end{proof}


We show that $\Psi_{\K}$ is more precise than $\Phi_{\K}$.

\begin{proposition}
\label{Phi_Psi}
Given any hybrid MKNF knowledge base $\K$, 
$\Phi_{\K} \subseteq_p \Psi_{\K}$.
\end{proposition}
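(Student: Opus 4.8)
The plan is to unwind the definition of the precision order on the product bilattice $(2^{\KA(\K)})^2$ and verify the claim coordinate by coordinate for an arbitrary pair $(T,P)$. Since $(x,y) \subseteq_p (x',y')$ means $x \subseteq x'$ together with $y' \subseteq y$, the assertion $\Phi_{\K} \subseteq_p \Psi_{\K}$ is equivalent to establishing, for every $(T,P) \in (2^{\KA(\K)})^2$, the two inclusions $\Phi_{\K}(T,P)_1 \subseteq \Psi_{\K}(T,P)_1$ and $\Psi_{\K}(T,P)_2 \subseteq \Phi_{\K}(T,P)_2$. The first of these requires no argument at all: by Definition~\ref{Psi}, $\Psi_{\K}(T,P)_1$ is defined to be exactly $\Phi_{\K}(T,P)_1$, so the two first projections coincide.

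For the second inclusion I would compare the defining expressions of $\Psi_{\K}(T,P)_2$ (Definition~\ref{Psi}) and $\Phi_{\K}(T,P)_2$ (Definition~\ref{Phi}) term by term. Both are the union of the same ``ontology part'' $\{\boldK a \in \KA(\K) \mid \OB_{\cO,P} \models a\}$ with a ``rule part''. In $\Psi_{\K}(T,P)_2$ the rule part is obtained from the rule part of $\Phi_{\K}(T,P)_2$ by conjoining one further requirement to the membership condition for $\head(r) = \bfK a$, namely the absence of any positive rule $r'$ with head $\boldK b$ and $\boldK a \in \body(r')$ for which $\OB_{\cO,T} \models \neg b$ and $\body(r') \setminus \{\boldK a\} \subseteq T$. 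Strengthening a membership predicate can only shrink the set it defines, so the rule part of $\Psi_{\K}(T,P)_2$ is contained in that of $\Phi_{\K}(T,P)_2$; forming the union of each with the common ontology part preserves this inclusion. Hence $\Psi_{\K}(T,P)_2 \subseteq \Phi_{\K}(T,P)_2$, and together with the first coordinate we conclude $\Phi_{\K}(T,P) \subseteq_p \Psi_{\K}(T,P)$ for all $(T,P)$, i.e., $\Phi_{\K} \subseteq_p \Psi_{\K}$.

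I do not expect a genuine obstacle; the result is essentially immediate from the syntactic shape of the two definitions. The only point that deserves care is the \emph{direction} of the inclusion in the second coordinate: the definition of $\subseteq_p$ reverses it there, which is consistent with the intuition that the extra conjunct in $\Psi_{\K}$ \emph{prunes} ``possibly true'' $\bfK$-atoms rather than adding them (as already illustrated by Example~\ref{7}, where the inclusion is in fact strict for the least stable fixpoint, although strictness is not part of what this proposition asserts).
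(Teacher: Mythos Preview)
Your proposal is correct and follows essentially the same argument as the paper: both observe that the first projections coincide by definition and that the extra conjunct in the definition of $\Psi_{\K}(T,P)_2$ makes it a subset of $\Phi_{\K}(T,P)_2$, which is precisely what $\subseteq_p$ requires in the second coordinate. Your write-up is in fact more explicit than the paper's, which is quite terse here.
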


\begin{proof}
This is due to the extra condition in the definition of $\Psi_{\K}(T,\cdot)_2$, which is not present in the definition of $\Phi_{\K}(T,\cdot)_2$. A stronger condition produces a subset for the second component of the resulting pair.  Thus, if $(T_1,P_1) \subseteq_p (T_2,P_2)$, from $\Psi_{\K}(T,x)_1 =  \Phi_{\K}(T,x)_1$ for
any $x \subseteq \KA(\K)$, it follows
that 
 $\Phi_{\K}(T_1,P_1) \subseteq_p \Psi_{\K} (T_2,P_2)$.
\end{proof}

Operator $\Psi_{\K}$ is strong as well.
\begin{proposition}
\label{strong1}
Let $\K = (\cal O, \P)$ be a hybrid MKNF knowledge base and $(T,P)$ be a consistent stable fixpoint of $\Psi_{\K}$ such that 
${\OB}_{{\cal O}, T}$ is satisfiable. Then  $\Psi_{\K}$ is a strong approximator for $(T,P)$.
\end{proposition}


\begin{proof}
Let $(T,P)$ be a consistent stable fixpoint of $\Psi_{\K}$ such that 
${\OB}_{{\cal O}, T}$ is satisfiable.
We need to prove
$(C_1(P), C_1(T))= (D_1(P), D_2(T))$. The proof is identical to that of  Proposition \ref{strong0} except for the proof of 
$D_1(P)  \subseteq C_2(T)$, which needs to be updated according to the definition of operator $\Psi_{\K}(T,\cdot)_2$. Recall that the goal is to show $T \subseteq C_2(T)$.
The base case  is again  immediate. Assume 
$D_1^{\uparrow k}(P) \subseteq C_2^{\uparrow k} (T)$ for any (fixed) $k\geq 0$, and we show it for $k+1$. By definition, a $\bfK$-atom $\bfK \, a$ is added to $D_1^{k+1}(P)$ because (i)  ${\OB}_{{\cal O}, D_1^{\uparrow k}(P) }\models a$, or (ii) there is a rule
$r \in {\cal P}$ with $\head(r) = \bfK \, a$ such that 
$\body^+(r) \subseteq D_1^{\uparrow k}(P)$ and $ \boldK(\body^-(r)) \cap P = \emptyset$. 
Again, if case (i) applies, by induction hypothesis (I.H.), ${\OB}_{{\cal O}, C_2^{\uparrow k}(T) }\models a$ and thus $\bfK \, a \in C_2^{\uparrow k+1}(T)$. Otherwise, $\bfK \, a$ is derived by a rule $r$, as in case (ii). Since $D_1^{\uparrow k}(P) \subseteq T$ and  $ \boldK(\body^-(r)) \cap P = \emptyset$,
 case (ii) implies ${\OB}_{{\cal O}, T} \models a$.  If ${\OB}_{{\cal O}, T} \models \neg a$, then ${\OB}_{{\cal O}, T}$ is unsatisfiable, which is a contradiction.
Thus,
we must have ${\OB}_{{\cal O}, T} \not \models \neg a$. Now consider the condition
$$
\not \exists r' \in {\cal P}: \boldK b \leftarrow \body(r') \mbox{ is positive, where } \boldK a \in \body(r'),  \mbox{ s.t. } {\OB}_{{\cal O}, T}\models \neg b, \body(r') \setminus \{\boldK a\} \subseteq T$$
in the definition of $\Psi_{\K}(T,\cdot)_2$. By definition, $\boldK a$ is not derived by rule $r$ in applying $\Psi_{\K}(T,\cdot)_2$
  if such a rule $r'$ exists.
Since  ${\OB}_{{\cal O}, T} \models a$,  from 
$\body(r') \setminus \{\boldK a\} \subseteq T$ we derive $\body(r')  \subseteq T$, and therefore $\boldK b$ can be derived by rule $r'$ resulting in $\boldK b \in T$,  but at the same time we have ${\OB}_{{\cal O}, T}\models \neg b$, and thus ${\OB}_{{\cal O}, T}$ is unsatisfiable, a contradiction. Hence, such a rule $r'$ does not exist. Therefore,  the same rule applied in case (ii) for $D_1^{\uparrow k+1}$
above applies in the construction of $C_2^{\uparrow k+1}$, since the condition $(\body^+(r) \subseteq D_1^{\uparrow k}(P)$ and $ \boldK(\body^-(r)) \cap P = \emptyset)$ becomes $(\body^+(r) \subseteq C_2^{\uparrow k}(T)$ and $ \boldK(\body^-(r)) \cap T = \emptyset)$), which holds 
by I.H. and $T \subseteq P$. Thus, $D_1(P)  ~(=T) \subseteq C_2(T)$.
\end{proof}

\comment{
\begin{proposition}
\label{Psi_stable}
Let $(T,P)$ be a consistent stable fixpoint of $\Psi_{\K}$ such that ${\OB}_{{\cal O},T}$ is satisfiable.
Then $(T,P)$ is also a consistent stable fixpoint of $\Phi_{\K}$.
\end{proposition}

\begin{proof}
That $(T,P)$ is a stable fixpoint of $\Psi_{\K}$ implies $T=\lfp(\Psi_{\K}(\cdot,P)_1)$ and $P=\lfp(\Psi_{\K}(T,\cdot)_2)$. Since $\Psi_{\K}(\cdot,P)_1=\Phi_{\K}(\cdot,P)_1$, it is naturally 
$T=\Phi_{\K}(T,P)_1=\Psi_{\K}(T,P)_1$. As to $P$, let $P'=\lfp(\Phi_{\K}(T,\cdot)_2)$,  by Proposition \ref{Phi_Psi}, it can be easily checked that $P \subseteq P'$. 
If $P \subsetneq P'$, then $\exists {\boldK}a \in P'$ and ${\boldK}a \not \in P$ and ${\boldK}a \not \in T$.
Let $H=\{{\boldK a} \mid r \in {\cal P}, \head(r)={\boldK}a, \boldK(\body^-(r)) {\color{red} \subseteq} P, \boldK(\body^-(r)) \cap T=\emptyset\}$; and for each such $r$ in it,
$$\exists r' \in {\cal P}: \boldK b \leftarrow \body(r') \mbox{ is positive such that } {\OB}_{{\cal O}, T}\models \neg b, \body(r') \setminus \{\boldK a\} \subseteq T\}$$
Then, we have $P=P' \setminus H$.
Since $P \subset P'$, $\Phi_{\K}(T,P')_1 \subseteq \Phi_{\K}(T,P)_1$.
If $\exists r^*$ with $\head(r^*) = \boldK c$,
where for some $\boldK a \in H$, $a \in \body^-(r^*)$, {\color{red} (If not this, $H$ may have no impact here)}
s.t. $\body^+(r^*) \subseteq T$, $\boldK(\body^-(r^*)) \cap H \neq \emptyset$, {\color{red} (what if $\not \exists$ such an $r^*$?)}
then we have ${\boldK}c \in \Phi_{\K}(T,P)_1$, ${\boldK}c \not \in \Phi_{\K}(T,P')_1$, contradicted to $T=\Phi_{\K}(T,P)_1=\Phi_{\K}(T,P')_1$, {\color{red} (This is close, but why $T = \Phi_{\K}(T,P')_1$?)}
therefore,
to let $T=\Phi_{\K}(\cdot,P)_1=\Phi_{\K}(\cdot,P')_1$, {\color{red} (Do you assume it, or it's a fact that can be proved?)}  it is needed that 
$\forall r \in {\cal P}$, it should have $\boldK(\body^-(r)) \cap H =\emptyset$, but just as example \ref{7} shows,  that is not the case.
Then $P=P'$, $(T,P)$ is also the stable fixpoint of $\Phi_{\K}$.
\end{proof}
}

\comment{
\begin{proof} {\bf  (Old proof)}
That $(T,P)$ is a stable fixpoint of $\Psi_{\K}$ implies $T=\lfp(\Psi_{\K}(\cdot,P)_1)$ and $P=\lfp(\Psi_{\K}(T,\cdot)_2)$. Since $\Psi_{\K}(\cdot,P)_1=\Phi_{\K}(\cdot,P)_1$, it is naturally 
$T=\Phi_{\K}(\cdot,P)_1$. As to $P$, let $P'=\lfp(\Phi_{\K}(T,\cdot)_2)$,  by Proposition \ref{Phi_Psi}, it can be easily checked that $P \subseteq P'$. 
If $P \subsetneq P'$, then $\exists {\boldK}a \in P'$ and ${\boldK}a \not \in P$ and ${\boldK}a \not \in T$.
Let $H=\{{\boldK a} \mid \exists r \in {\cal P}, \head(r)={\boldK}a, \boldK(\body^-(r)) \subset P, \boldK(\body^-(r)) \cap T=\emptyset; and 
\exists r' \in {\cal P}: \boldK b \leftarrow \body(r') \mbox{ is positive such that } {\OB}_{{\cal O}, T}\models \neg b, \body(r') \setminus \{\boldK a\} \subseteq T\}$,
then we have $P=P' \setminus H$.
As to $\Phi_{\K}(T,P)_1$ and $\Phi_{\K}(T,P')_1$, since $\Phi_{\K}(T,P')_1 \subseteq \Phi_{\K}(T,P)_1$ by $P \subset P'$, 
if $\exists r^*$ with the form ${\boldK}c \leftarrow \boldK(\body^+(r^*)), \boldK(\body^-(r^*))$, s.t., $\boldK(\body^+(r^*)) \subseteq T$, $\boldK(\body^-(r^*)) \cap H \neq \emptyset$, 
then we have ${\boldK}c \in \Phi_{\K}(T,P)_1$, ${\boldK}c \not \in \Phi_{\K}(T,P')_1$, contradicted to $T=\Phi_{\K}(\cdot,P)_1=\Phi_{\K}(\cdot,P')_1$, therefore,
to let $T=\Phi_{\K}(\cdot,P)_1=\Phi_{\K}(\cdot,P')_1$, it is needed that 
$\forall r \in {\cal P}$, it should have $\boldK(\body^-(r)) \cap H =\emptyset$, but just as example \ref{7} shows,  that is not the case.
Then $P=P'$, $(T,P)$ is also the stable fixpoint of $\Phi_{\K}$.
\end{proof}
}

\comment{
{\color{blue}

\begin{proposition}
\label{strong1}
Let $\K = (\cal O, \P)$ be a hybrid MKNF knowledge base and $(T,P)$ be a consistent stable fixpoint of $\Psi_{\K}$ such that 
${\OB}_{{\cal O}, T}$ is satisfiable. Then  $\Psi_{\K}$ is a strong approximator for $(T,P)$.
\end{proposition}

\begin{proof}
For $\Psi_{\K}$, to show it is strong, we show that given consistent stable fixpoint $(T,P)$ of $\Psi_{\K}$, $C_2(T)$, $D_2(T)$ denote the least fixpoints of $\Psi_{\K}(T,\cdot)_2$ defined on $[\bot,\top]$ and $[T,\top]$ respectively,
$C_2(T)$ and $D_2(T)$ coincide and $C_2(T)=D_2(T)=P$. 

Assume that there exists a stable fixpoint $(T,P')$ of $\Psi_{\K}$, s.t., $C_2(T)=P'$ and $P' \lneq T$, then by Prop.\ref{Psi_stable}, $(T,P')$ also a stable fixpoint of $\Phi_{\K}$. On the other side, by Prop.\ref{strong0},
$\Phi_{\K}$ is strong, let $C'_2(T)$, $D'_2(T)$ denote the least fixpoints of $\Phi_{\K}(T,\cdot)_2$ defined on $[\bot,\top]$ and $[T,\top]$ respectively, $C'_2(T)=D'_2(T)=P$, then $(T,P')$ is not a stable fixpoint of $\Phi_{\K}$,
contradiction. Therefore,  $C_2(T)=D_2(T)=P$.
\end{proof}
}}

Finally, like Theorem \ref{key-result}, the stable fixpoints of the operator $\Psi_{\K}$ can be related to three-valued MKNF models of $\K$ as well.

\begin{theorem}
\label{key-result0}
Let $\K = (\cal O,\P)$ be a hybrid MKNF knowledge base and $(T,P)$ be a partition. Let further $( M , N ) = (\{ I\, |\, I \models {\OB}_{{\cal O} , T}\},\{I\,|\,I \models {\OB}_{{\cal O},P}\})$.
Then, 
$(M,N)$ is a three-valued MKNF model of $\K$ iff $(T,P)$ is a consistent stable fixpoint of $\Psi_{\K}$  and 
${\OB}_{{\cal O}, \lfp(\Psi_{\K}(\cdot,T)_1)}$ is satisfiable. 
\end{theorem}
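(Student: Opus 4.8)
The plan is to imitate the proof of Theorem~\ref{key-result}, reducing the new parts to it wherever possible. Two observations make this work. First, $\Psi_{\K}(\cdot,x)_1=\Phi_{\K}(\cdot,x)_1$ for every $x$, so the first projection operator, and with it $\theta(x):=\lfp(\Psi_{\K}(\cdot,x)_1)=\lfp(\Phi_{\K}(\cdot,x)_1)$, is exactly as in Theorem~\ref{key-result}. Second, $\Psi_{\K}(T,x)_2\subseteq\Phi_{\K}(T,x)_2$ for all $T,x$, since the extra unit-propagation clause in Definition~\ref{Psi} can only suppress $\bfK$-atoms. Hence every step in the proof of Theorem~\ref{key-result} that mentions only the first projection, or uses only the implication ``a $\bfK$-atom produced by the second projection through a rule $r$ forces $\body(r)$ to evaluate to ${\bf t}$ or ${\bf u}$'', transfers verbatim. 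The only genuine work is to show that, in the few spots where the extra clause of $\Psi_{\K}$ could block a $\bfK$-atom the candidate model needs, the blocking rule $r'$ of Definition~\ref{Psi} cannot in fact be present: its presence would either push an atom $b$ together with $\neg b$ into $\OB_{\cO,\theta(T)}$ (contradicting satisfiability of $\OB_{\cO,\theta(T)}$) or make $\pi(r')$ evaluate to ${\bf f}$ under $(M,N)$ (contradicting $(M,N)\models\pi(\K)$).

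For the $(\Leftarrow)$ direction, assume $(T,P)$ is a consistent stable fixpoint of $\Psi_{\K}$ and $\OB_{\cO,\theta(T)}$ is satisfiable. Exactly as in Theorem~\ref{key-result}, $\Psi_{\K}(T,x)_2\subseteq\Phi_{\K}(T,x)_2\subseteq\Phi_{\K}(x,T)_1=\Psi_{\K}(x,T)_1$ together with a pointwise induction along the least-fixpoint iterations gives $P=\lfp(\Psi_{\K}(T,\cdot)_2)\subseteq\theta(T)$; hence $\OB_{\cO,P}$ and (as $T\subseteq P$) $\OB_{\cO,T}$ are satisfiable, so $(M,N)$ is a genuine MKNF interpretation pair and the correspondence between $(T,P)$ and the truth values of $\bfK$-atoms under $(M,N)$ from \cite{KnorrAH11} (recalled in Section~\ref{4.2}) applies. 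To see $(M,N)\models\pi(\K)$: $(M,N)\models\boldK\pi(\cO)$ is immediate, and for a rule $r$ with $\head(r)=\boldK a$ the cases $\head(r)={\bf t}$ and $\head(r)={\bf u}$ go as for $\Phi_{\K}$, using $\lfp(\Psi_{\K}(\cdot,P)_1)=T$. If $\head(r)={\bf f}$, suppose towards a contradiction that $\body(r)$ evaluates to ${\bf t}$ or ${\bf u}$; then $\body^+(r)\subseteq P\subseteq\theta(T)$ and $\boldK(\body^-(r))\cap T=\emptyset$, so $\boldK a\in\theta(T)$. If $\OB_{\cO,T}\models\neg a$ then $\OB_{\cO,\theta(T)}$ is unsatisfiable; otherwise $\boldK a\in\Psi_{\K}(T,P)_2=P$ unless there is a positive rule $r'$ with head $\boldK b$, $\boldK a\in\body(r')$, $\OB_{\cO,T}\models\neg b$ and $\body(r')\setminus\{\boldK a\}\subseteq T$ --- in the former case $\boldK a\in P$ contradicts $\head(r)={\bf f}$, and in the latter $\body(r')\subseteq\theta(T)$ yields $\boldK b\in\theta(T)$, so again $\OB_{\cO,\theta(T)}$ is unsatisfiable. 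Finally, condition (b) of Definition~\ref{3m} is checked as for $\Phi_{\K}$: the subcase $T'\subsetneq T$ uses only the unchanged first projection, and the subcase $P'\subsetneq P$ uses only that a $\bfK$-atom produced via a rule by $\Psi_{\K}(T,\cdot)_2$ forces that rule's body to be ${\bf t}$ or ${\bf u}$, which is unaffected by the extra clause.

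For the $(\Rightarrow)$ direction, let $(M,N)$ be a three-valued MKNF model of $\K$ with induced partition $(T,P)$. By Theorem~\ref{key-result}, $(T,P)$ is a consistent stable fixpoint of $\Phi_{\K}$ and $\OB_{\cO,\lfp(\Phi_{\K}(\cdot,T)_1)}=\OB_{\cO,\theta(T)}$ is satisfiable; this yields the satisfiability clause and $T=\lfp(\Phi_{\K}(\cdot,P)_1)=\lfp(\Psi_{\K}(\cdot,P)_1)$. It remains to show $P=\lfp(\Psi_{\K}(T,\cdot)_2)$. Since $(T,P)$ is a fixpoint of $\Phi_{\K}$, $\Psi_{\K}(T,P)_2\subseteq\Phi_{\K}(T,P)_2=P$; conversely $P\subseteq\Psi_{\K}(T,P)_2$, because each $\boldK a\in P$ is either entailed by $\OB_{\cO,P}$ or supported by a rule $r$ (as $P\subseteq\Phi_{\K}(T,P)_2$), and the extra clause for $\boldK a$ holds --- a blocking $r'$ would have $\body(r')\setminus\{\boldK a\}\subseteq T$ all true and $\boldK a$ true-or-undefined, so $\body(r')$ evaluates to ${\bf t}$ or ${\bf u}$, while $\OB_{\cO,T}\models\neg b$ makes $\boldK b$ false, so $\pi(r')$ evaluates to ${\bf f}$ under $(M,N)$, contradicting $(M,N)\models\pi(\K)$. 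Thus $P$ is a fixpoint of $\Psi_{\K}(T,\cdot)_2$; if it were not the least, say $P':=\lfp(\Psi_{\K}(T,\cdot)_2)\subsetneq P$, pick $\boldK a\in P\setminus P'$ (so $\OB_{\cO,P'}\not\models a$, else $\boldK a\in\Psi_{\K}(T,P')_2=P'$) and set $N'=\{I\mid I\models\OB_{\cO,P'}\}\supsetneq N$; then one verifies $(I,\langle M,N'\rangle,\langle M,N\rangle)(\pi(\K))={\bf t}$ for all $I\in M$, contradicting condition (b). The delicate case there is a rule $r$ whose head $\boldK c$ is false (i.e.\ $\boldK c\notin P'$) while its body is ${\bf u}$; arguing as in the $(\Leftarrow)$ direction, $\OB_{\cO,T}\not\models\neg c$ and no blocking rule for $c$ exists (both from satisfiability of $\OB_{\cO,\theta(T)}$), whence $\boldK c\in\Psi_{\K}(T,P')_2=P'$, a contradiction. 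The case $M=N$ is handled as the corresponding case in Theorem~\ref{key-result}.

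The step I expect to be the main obstacle is the ``$\head(r)={\bf f}$, $\body(r)={\bf u}$'' configuration, which shows up in both directions: here the extra unit-propagation clause of $\Psi_{\K}$ might \emph{a priori} suppress a $\bfK$-atom that the candidate model requires, and the argument is rescued only by the observation that the blocking rule $r'$ of Definition~\ref{Psi} would then inject $b$ and $\neg b$ into $\OB_{\cO,\theta(T)}$, contradicting the standing satisfiability hypothesis. Everything else is a faithful transcription of the proof of Theorem~\ref{key-result}.
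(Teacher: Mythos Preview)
Your proof is correct and takes essentially the paper's route: reduce to Theorem~\ref{key-result} via $\Psi_{\K}(\cdot,x)_1=\Phi_{\K}(\cdot,x)_1$ and $\Psi_{\K}(T,x)_2\subseteq\Phi_{\K}(T,x)_2$, then handle separately the spots where the blocking clause of Definition~\ref{Psi} could interfere. Your $(\Leftarrow)$ treatment of the case $\head(r)={\bf f}$ is in fact more explicit than the paper's, which simply asserts that the extra clause does not affect the argument from Theorem~\ref{key-result}.

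The one real divergence is in closing the $(\Rightarrow)$ direction. Both proofs isolate the key fact: for any $\boldK a\in P$ there is no blocking rule $r'$, since such an $r'$ would force either $\OB_{\cO,T}$ to be unsatisfiable (when $\boldK a\in T$) or $(M,N)\not\models\pi(r')$ (when $\boldK a\in P\setminus T$). The paper uses this directly to get $P'=\lfp(\Psi_{\K}(T,\cdot)_2)=P$: since the blocking clause depends only on $a$ and $T$, it never fires along the $\Phi_{\K}(T,\cdot)_2$-iteration (every stage being contained in $P$), so the $\Phi$- and $\Psi$-iterations coincide stage by stage and $P'=\lfp(\Phi_{\K}(T,\cdot)_2)=P$. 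You draw only the weaker consequence that $P$ is a fixpoint of $\Psi_{\K}(T,\cdot)_2$, and then mount a separate condition-(b) argument with $(M,N')$, $N'=\{I\mid I\models\OB_{\cO,P'}\}$, to exclude $P'\subsetneq P$. That detour works but is unnecessary, and it carries a small unchecked step: for $(M,N')$ to qualify as an MKNF interpretation pair in Definition~\ref{3m} you need $N'\subseteq M$, i.e.\ $T\subseteq P'$. This does hold---by the stagewise induction used in the proof of Proposition~\ref{strong1}, which needs only that $\OB_{\cO,T}$ is satisfiable and $T\subseteq P$---but you should say so explicitly.
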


\begin{proof}
The proof here follows the structure of the proof of Theorem \ref{key-result}, but with critical differences in dealing with  approximator $\Psi_{\K}$. If a part of proof of Theorem \ref{key-result} can be applied, we will make a reference to it, otherwise we will give a detailed proof even if parts of it repeat the same from the proof for Theorem \ref{key-result}.

($\Leftarrow$)
Assume that $(T,P)$ is a consistent stable fixpoint of $\Psi_{\K}$ and ${\OB}_{{\cal O}, \lfp(\Psi_{\K}(\cdot,T)_1)}$
is satisfiable. Let $\theta(x)$ denote $\lfp(\Psi_{\K}(\cdot,x)_1)$, given $x \subseteq  \KA({\K})$.
Let $P^*=\lfp(\Phi_{\K}(T,\cdot)_2)$, then by the definition of operator $\Phi_{\K}$, 
 $P^* \subseteq \theta(T)$,
and by Proposition \ref{Phi_Psi}, $\Psi_{\K}(T,P^*)_2 \subseteq \Phi_{\K}(T,P^*)_2$, i.e., $\Psi_{\K}(T,P^*)_2 \subseteq P^*$ and thus $P^*$ is a pre-fixpoint of $\Psi_{\K}(T,\cdot)_2$; 
therefore $P \subseteq P^*$ and then $P \subseteq \theta(T)$. Since ${\OB}_{{\cal O}, \theta(T)}$ is satisfiable, it follows that 
${\OB}_{{\cal O}, P}$ is satisfiable, and because $T \subseteq P$, ${\OB}_{{\cal O}, T}$ is also satisfiable.
It follows that the pair $$( M , N ) = (\{ I\, |\, I \models {\OB}_{{\cal O} , T}\},\{I\,|\,I \models {\OB}_{{\cal O},P}\})$$
is an MKNF interpretation pair because 
$\emptyset \subset N \subseteq M$. Recall the property by \cite{KnorrAH11}: given the above interpretation pair, for any $\boldK \xi \in \KA(\K)$, ${\boldK} \xi \in T$ iff ${\boldK} \xi$ evaluates to $\bf t$ (under $(M,N)$),
${\boldK} \xi \not \in P$ iff ${\boldK} \xi$  evaluates to $\bf f$,  and otherwise ${\boldK} \xi$ evaluates to $\bf u$.

We show that $(M,N)$ is a three-valued MKNF model of $\K$. 
First we show that $(M,N)$ satisfies $\pi({\K})$. The proof that $(M,N) \models \boldK \pi({\cal O})$ is straightforward. For rules in $\P$, recall the following definition of $\Psi_{\K}(x,y)_2$:
$$
\begin{array}{ll}
\Psi_{\K} (x,y)_2 = 
\{\boldK a\in {\KA}({\K}) \mid {\OB}_{{\cal O}, y}\models a\} \,\,  \cup \,
\\
~~~\{ \head(r) \mid  r \in {\P}:  \, \head(r) = \bfK a , \,
{\OB}_{{\cal O}, x} \not \models \neg a,  \, \body^+(r) \subseteq y, \, \boldK(\body^-(r)) \cap x = \emptyset, \\
~~\not \exists r' \in {\cal P}: \boldK b \leftarrow \body(r') \mbox{ is positive, where } \boldK a \in \body(r'), \mbox{ s.t. } {\OB}_{{\cal O}, x}\models \neg b, \body(r') \setminus \{\boldK a\} \subseteq x
\}   
\end{array}
$$
where the only difference from  $\Phi_{\K}$ 
is the extra condition in the last line above. It can be checked that this extra condition does not effect the proof used for 
Theorem \ref{key-result}. Namely, for any rule $r \in \P$, it is satisfied if $\head(r)$ evaluates to $\bf t$;
if
$\head(r)$ evaluates to $\bf u$, which means $\boldK a \in P$ and $\boldK a \not \in T$, then 
$\body(r)$ evaluates to $\bf u$ or $\bf f$, since if $\body(r)$ evaluates to $\bf t$, 
$\boldK a \in \lfp(\Psi_{\K}(\cdot, P)_1) (= T)$, resulting in a contradiction; 
if 
$\head(r)$ evaluates to $\bf f$, then ${\OB}_{{\cal O}, T} \models \neg a$, in which case
$\body(r)$ evaluates to $\bf f$ as well, as otherwise 
$\boldK a \in \theta(T)$ ($= \lfp(\Psi_{\K} (\cdot, T)_1)$) and thus ${\OB}_{{\cal O}, \theta(T)}$ is unsatisfiable, again a contradiction. Hence, we conclude that $(M,N) \models \pi({\P})$, and therefore $(M,N) \models \pi({\K})$. 

Now, for the sake of contradiction, assume  $(M,N)$ is not a three-valued MKNF model of $\K$. Then there exists an MKNF interpretation  pair $(M',N')$ with $M \subseteq M'$ and $N \subseteq N'$, where 
at least one of the inclusions is proper and $M' = N'$ if $M = N$, such that 
\begin{eqnarray}
(I,\langle M',N' \rangle,\langle M,N \rangle)(\pi(\K))=\bf t
\label{proof1}
\end{eqnarray}
 for some $I \in M'$. Let  $(T',P')$ be induced by $(M',N')$, i.e., 
$$( M' , N' ) = (\{ I\, |\, I \models {\OB}_{{\cal O} , T'}\},\{I\,|\,I \models {\OB}_{{\cal O},P'}\})$$
Clearly, $T'\subseteq T$ and $P'\subseteq P$, where at least one of the inclusions is proper and $T'=P'$  if $T=P$.
We show that $(I,\langle M',N' \rangle,\langle M,N \rangle) (\pi(\K))\not =\bf t$ (independent of $I$), which leads to  contradiction. 

Consider the case where $T' \subset T$. As in the proof of Theorem \ref{key-result}, this part of proof relies on the fixpoint construction of $\lfp(\Psi_{\K}(\cdot, P)_1)$.
Since by definition $\Psi_{\K}(T, P)_1 = \Phi_{\K}(T, P)_1$, the construction of $\lfp(\Psi_{\K}(\cdot, P)_1)$ is identical to that of $\lfp(\Phi_{\K}(\cdot, P)_1)$, the same proof of Theorem \ref{key-result} for this part can be applied here, which shows 
$(I,\langle M',N' \rangle,\langle M,N \rangle)(\pi(\K))\not =\bf t$.

For the case of $P' \subset P$, consider the sequence of intermediate sets of $\bfK$-atoms  $Q_0, \dots, Q_n (= P)$ in the iterative construction of $\lfp(\Psi_{\K}(T, \cdot)_2)$. Let $j~(0\leq j \leq n-1)$ be the first iteration in which
at least one $\bfK$-atom, say $\boldK a \in  P \setminus P'$, is derived (thus $Q_j \subseteq P'$).
Assume further it is derived by a rule $r \in P$ with $\head(r)=\boldK a$, such that ${\OB}_{{\cO}, T} \not \models \neg a$,
$\body^+(r) \subseteq Q_j$, $\bfK (\body^-(r)) \cap T=\emptyset$, and  $\not \exists r' \in {\cal P}: \boldK b \leftarrow \body^+(r') \mbox{ which is positive, where } \boldK a \in \body(r'), \mbox{ s.t. } {\OB}_{{\cal O}, T}\models \neg b$ and $ \body^+(r') \setminus \{\boldK a\} \subseteq T$. Then, it can be seen that the body of rule $r$ evaluates to {\bf t} or {\bf u} in $(I,\langle M',N' \rangle,\langle M,N \rangle)$. Since $\boldK a \not \in P'$, $\head(r) = \boldK a$ evaluates to {\bf f}, and thus  $(I,\langle M',N' \rangle,\langle M,N \rangle)(\pi(r)) \not = \bf t$. If in iteration $j$ no fresh $\bfK$-atoms are derived by rules, then we must have ${\OB}_{{\cO}, Q_j} \models a$, and along with $Q_j \subseteq P'$ and ${\OB}_{{\cO}, P'} \not \models a$, we have a contradiction. 
Note that the above proof naturally applies when $T' = P'$ because of $T = P$, in which case evaluation reduces to two-valued.
We therefore conclude that $(M,N)$ is a three-valued MKNF model of $\K$.


($\Rightarrow$) Let 
$( M , N ) = (\{ I\, |\, I \models {\OB}_{{\cal O} , T}\},\{I\,|\,I \models {\OB}_{{\cal O},P}\})$
be a three-valued MKNF model of $\K$.  As mentioned earlier, the following property holds: for any $\boldK \xi \in {\KA}({\K})$, 
${\boldK} \xi \in T$ iff ${\boldK} \xi$ evaluates to $\bf t$ (under $(M,N)$),
${\boldK} \xi \not \in P$ iff ${\boldK} \xi$  evaluates to $\bf f$, and otherwise ${\boldK} \xi$ evaluates to $\bf u$.

 By Theorem  \ref{key-result}, $(T,P)$ is a consistent stable fixpoint of $\Phi_{\K}$  and 
${\OB}_{{\cal O}, \lfp(\Phi_{\K}(\cdot,T)_1)}$ is satisfiable. By definition, $\Psi_{\K}(\cdot, x)_1$ is the same operator as $ \Phi_{\K}(\cdot,x)_1$ for all $x \subseteq \KA({\K})$, and it follows ${\OB}_{{\cal O}, \lfp(\Psi_{\K}(\cdot,T)_1)}$ is also satisfiable and 
$T = \lfp(\Psi_{\K}(\cdot,P)_1)$. Thus,  for $(T,P)$ to be a stable fixpoint of $\Psi_{\K}$, we only need to show 
 $P = \lfp(\Psi_{\K}(T, \cdot)_2)$.  Let $P'  = \lfp(\Psi_{\K}(T, \cdot)_2)$. By definition (due to the extra condition in
the definition of $\Psi_{\K}(T, \cdot)_2$), $P' \subseteq P$. For a contradiction, assume  $P' \subset P$. Let $\boldK a \in P$ and $\boldK a \not \in P'$.
Then,  by the definition of $\Psi_{\K}(T, \cdot)_2$, the reason for $\boldK a \not \in P'$ is that, for
any 
rule $r \in {\cal P}$ with $\head(r) = \boldK a$ such that 
$
{\OB}_{{\cal O}, T} \not \models \neg a$,  
  $\body^+(r) \subseteq P$, and $ \boldK(\body^-(r)) \cap T = \emptyset$, 
there exists a rule $r' \in {\cal P}: \boldK b \leftarrow \body^+(r') \mbox{ which is positive, where } \boldK a \in \body(r') \mbox{ s.t. } {\OB}_{{\cal O}, T}\models \neg b$ and $ \body^+(r') \setminus \{\boldK a\} \subseteq T$. 
If $\boldK a \in T$, then $ {\OB}_{{\cal O}, T}\models  b$ and $ {\OB}_{{\cal O}, T}$ is thus unsatisfiable, contradicting to the fact that 
$( M , N )$ is an MKNF interpretation pair. If $\boldK a \not \in T$, since $\boldK a \in P$, the truth value of $\boldK a$ is {undefined} in $(M,N)$, and thus $r'$ is not satisfied by $( M , N )$; again a contradiction. Thus, $P' = P$, and therefore $(T,P)$ is a stable fixpoint of $\Psi_{\K}$.
\end{proof}

\comment{
\begin{proof}
($\Leftarrow$) Let $(T,P)$ be a consistent stable fixpoint of $\Psi_{\K}$  and 
${\OB}_{{\cal O}, \lfp(\Psi_{\K}(\cdot,T)_1)}$ is satisfiable. 
By Proposition \ref{Psi_stable}, $(T,P)$ is a stable fixpoint of $\Phi_{\K}$. Since by definition $\Phi_{\K} (x,y)_1 = \Psi_{\K} (x,y)_1$ for any pair $(x,y)$,  hence  ${\OB}_{{\cal O}, \lfp(\Phi_{\K}(\cdot,T)_1)}$ is also satisfiable. By Theorem  \ref{key-result}, $( M , N ) = (\{ I\, |\, I \models {\OB}_{{\cal O} , T}\},\{I\,|\,I \models {\OB}_{{\cal O},P}\})$
is a three-valued MKNF model of $\K$. 

($\Rightarrow$) Let 
$( M , N ) = (\{ I\, |\, I \models {\OB}_{{\cal O} , T}\},\{I\,|\,I \models {\OB}_{{\cal O},P}\})$
be a three-valued MKNF model of $\K$. By Theorem  \ref{key-result}, $(T,P)$ is a consistent stable fixpoint of $\Phi_{\K}$  and 
${\OB}_{{\cal O}, \lfp(\Phi_{\K}(\cdot,T)_1)}$ is satisfiable, so is ${\OB}_{{\cal O}, \lfp(\Psi_{\K}(\cdot,T)_1)}$. 
We are done if $(T,P)$ is also a stable fixpoint of $\Psi_{\K}$. Assume it is not. Then, there exists $\boldK a \in P$ whose derivation by
operator $\Psi_{\K}(T,\cdot)_2$ is blocked, i.e.,
any 
rule $r \in {\cal P}$ with $\head(r) = \boldK a$ such that 
$
{\OB}_{{\cal O}, T} \not \models \neg a$ and 
  $\body^+(r) \subseteq P, \,\, \boldK(\body^-(r)) \cap T = \emptyset$, 
there exists a rule $r' \in {\cal P}: \boldK b \leftarrow \body(r') \mbox{ is positive such that } {\OB}_{{\cal O}, T}\models \neg b, \body(r') \setminus \{\boldK a\} \subseteq T$. If $\boldK a \in T$, then $ {\OB}_{{\cal O}, T}\models  b$, which results in $ {\OB}_{{\cal O}, T}$ being unsatisfiable contradicting to the fact that 
$( M , N )$ is a three-valued MKNF model of $\K$. If $\boldK a \not \in T$, since $\boldK a \in P$, the truth value of $\boldK a$ is {undefined} in $(M,N)$ and thus rule $r'$ is not satisfied by $( M , N )$, again, a contradiction.
Therefore, $(T,P)$ must be a stable fixpoint of $\Psi_{\K}$ and we already showed that ${\OB}_{{\cal O}, \lfp(\Psi_{\K}(\cdot,T)_1)}$ is satisfiable.
\end{proof}
}

\comment{
What if a well-founded induction is computed and the test on ${\OB}_{{\cal O}, \Gamma_{\K}(T))}$ turns out to be unsatisfiable.  We know that $(M,N)$ is not a well-founded MKNF model.  The following example shows that a well-founded MKNF model may well exist but well-founded inductions are unable to construct it. 

\begin{example}
Let 
${\K} = (\{\neg a\},  \P)$, where $\P$ is
$$
\begin{array}{ll}
\boldK a \leftarrow \boldK b, \boldK c. ~~~~~
\boldK b \leftarrow \boldnot b.
~~~~~
\boldK c \leftarrow \boldnot d.  ~~~~~
\boldK d \leftarrow \boldnot c.
\end{array}
$$
The unique three-valued MKNF model, which is therefore a well-founded MKNF model, corresponds to partition $(\{\boldK d\}, \{\boldK d, \boldK b\})$.
That is, because $\boldK a$ is false, either $\boldK b$ or $\boldK c$ is false. But we don't know which one unless we compute further. 
\end{example}
}

\section{Summary, Related Work and Remarks}
\label{related}

The primary goal of this paper is to show that the alternating fixpoint operator formulated by Knorr et al. \shortcite{KnorrAH11} for hybrid MKNF knowledge bases is in fact an approximator of AFT, which can therefore be applied to characterize the well-founded semantics, two-valued semantics, as well as three-valued semantics for hybrid MKNF knowledge bases, and enables a better understanding of  the relationships between these semantics in terms of a lattice structure. 

Since this alternating fixpoint operator can map a consistent state to an inconsistent one,
the desire to support operators like this 
motivated us to develop a mild generalization of AFT. As a result, all approximators defined on the entire product bilattice are well-defined
without the assumption of symmetry as required in the original AFT. In this paper, we studied the subtle issue whether consistent stable fixpoints can be preserved in the generalized AFT, and showed that for both approximators formulated in this paper for hybrid MKNF knowledge bases, consistent stable fixpoints are indeed carried over.


The alternating fixpoint construction by Knorr et al. aims at an iterative computation of the well-founded model. In \cite{LY17}, this construction is related to a notion called {\em stable partition} which exhibits properties corresponding to three-valued MKNF models. Based on the notion of stable partition, the relations between the alternating fixpoint construction and three-valued MKNF models are established. 
In this work, we do not use stable partition, instead we characterize three-valued MKNF models directly in terms of stable fixpoints of two appropriate approximators. 
 In this way, we are able to show that the two approximators that are defined on the entire product bilattice capture the consistent stable fixpoints that lead to three-valued MKNF models, even though these approximators may have inconsistent stable fixpoints.

The only other work that treats inconsistency in AFT explicitly is \cite{BiYF14}, where in case of inconsistency, instead of computing $(\lfp({A}( \cdot , v)_1), \lfp(A(u, \cdot)_2))$ on the respective domains $[\bot, v]$ and $[u, \top]$, one computes $(\lfp({A}( \cdot , v)_1), A(u,v)_2)$ because $\lfp(A(u, \cdot)_2)$ may no longer be defined on $[u, \top]$.  By computing $A(u,v)_2$ for the second component of the resulting pair, non-minimal elements may be computed as sets of possibly true atoms when inconsistency arises.


The possibility of accommodating inconsistencies in AFT was first raised in \cite{denecker2000approximations}. The precision order when applied to inconsistent pairs can be regarded as an order that measures the ``degree of inconsistency", or ``degree of doubt"  \cite{DeneckerMT04}.
If two inconsistent pairs satisfy $(x,y) \leq_p (x',y')$, 
the latter can be viewed at least as inconsistent as the former.  
 In a more general context, researches have been trying to address questions like ``where is the inconsistency", ``how severer is it", and how to make changes to an inconsistent theory (see, e.g., \cite{BonaHunter17}).  
A deeper understanding of inconsistencies in the context of AFT presents an interesting future direction.

In answer set programming, researchers have studied paraconsistent semantics. A noticeable example is 
the semi-stable semantics proposed by Sakama and Inoue \shortcite{DBLP:journals/logcom/SakamaI95}
for extended disjunctive logic programs, where a program transformation, called epistemic transformation, is introduced which embodies a notion of ``believed to hold". The semantics is then characterized and enhanced by Amendola
et al. \shortcite{DBLP:journals/ai/AmendolaEFLM16} 
using pairs of interpretations in the context of 
the logic of here-and-there \cite{DBLP:conf/iclp/PearceV08}.  For hybrid MKNF knowledge bases, Kaminski et al.  \shortcite{DBLP:conf/ijcai/KaminskiKL15} propose a five-valued and a six-valued semantics for paraconsistent reasoning with different kinds of inconsistencies.  An interpretation in this context is called a $p$-interpretation which evaluates a formula to true, false, or inconsistent. 
Since these semantics are formulated using the semantic structure consisting of pairs of interpretations, it is interesting to see whether appropriate approximators can be formulated to characterize intended models (of course,  for the non-disjunctive case only since current AFT does not support disjunctive rules). In particular, since the alternating fixpoint constructions are already defined for Kaminski et al.'s five-valued and six-valued semantics,  it may be possible to recast such an alternating fixpoint operator by an approximator. If successful, an interesting result would be that the underlying approximator defines not only the well-founded semantics but also five-valued and six-valued stable semantics. 
Furthermore, like \cite{Ji17},
due to the lattice structure of stable fixpoints, it may be possible to develop 
a DPLL-style solver for these knowledge bases based on a computation of unfounded atoms. 


For disjunctive hybrid MKNF knowledge bases, the state-of-the-art reasoning method is still based on guess-and-verify as formulated by \cite{Motik:JACM:2010}. The lack of conflict-directed reasoning methods has prevented the theory from being tested in practice. Before any attempt to build a solver,  one critical issue to study is the notion of unfounded sets for disjunctive hybrid MKNF knowledge bases, which has recently been investigated by \cite{spencerKR21},
and another one is to develop a conflict-driven search engine for computing MKNF models.

\comment{
\begin{appendix}
\section{A strong approximator for Logic Programming}
\label{LP}

We show that a small adjustment to Fitting's immediate consequence operator  \cite{Fitting02} makes it a strong approximator.

We consider a set of ground atoms $At$. A {\em four-valued interpretation} is a pair $(T,P) \in (2^{At})^2$.  A {\em two-valued interpretation} is a four-valued interpretation of the form $(I,I)$, which we often just write as $I$ and call it an {\em interpretation}, and a {\em three-valued (or partial)  interpretation} $(T,P)$ is a four-valued interpretation such that $T \subseteq P$. As usual, an interpretation $(T,P)$ is understood as: $T$ is the set of atoms interpreted $true$, $P$ the set of atoms interpreted {\em possibly true} and thus $At \setminus P$ is the set of {\em false} atoms.  When the sets of true and false atoms are non-overlapping, it is a three-valued interpretation, otherwise it is an {\em inconsistent interpretation};
when a three-valued interpretation is such that the union of true and false atoms is $At$, it is a two-valued interpretation. 

A {\em (normal) logic program} ${\cal P}$ is a set of rules $r$ of the form $a \leftarrow l_1, \dots, l_n$, where $a \in At$, called the {\em head} and denoted $\head(r)$, and the atoms appearing in $l_i$'s are from $At$ and partitioned into two sets, the set of atoms appearing in $r$ positively, denoted $\body^+(r)$, and the set of atoms $b$
such that $b$ appears in $r$ negatively, 
 denoted $\body^-(r)$.  We use $\wholebody(r) = \{l_1, \dots, l_n\}$ to represent the whole rule body.
We  define the familiar immediate consequence operator for two-valued interpretations as follows: given $I \subseteq At$, $$T_{\cal P} (I) = \{hd(r)\, |\, r \in {\cal P}: \body^+(r) \subseteq I, \body^-\cap I =\emptyset\}.$$

\begin{definition}
Let $\cal P$ be a logic program. Define an operator $\Theta_{\cal P}: (2^{At})^2 \rightarrow (2^{At})^2 $ as: 
$\Theta_{\cal P}(T,P) = (\Theta_{\cal P}(T,P)_1,$ $ \Theta_{\cal P}(T,P)_2)$, where
$$
\begin{array}{ll}
\Theta_{\cal P} (T,P)_1 = \{hd(r) \,|\, r \in {\cal P}: \body^+(r) \subseteq T, \body^-(r) \cap P = \emptyset\} 
\\
\Theta_{\cal P} (T,P)_2 = \{hd(r) \,|\, r \in {\cal P}: \body^+(r) \subseteq P, \body^-(r) \cap T = \emptyset\}
\end{array}
$$
\end{definition}

Given a four-valued interpretation $(T,P)$, $\Theta_{\cal P}(T,P)_1$ computes what are true and $\Theta_{\cal P}(T,P)_2$ computes what are possibly true, entirely based on $T$ and $P$ in a uniform manner for  consistent as well as inconsistent interpretations. 
The notion of truth and falsity goes beyond that of classic logic when $(T, P)$ is inconsistent; it is not based on a model-theoretic valuation but on  a set-theoretic evaluation of 
rules. Note that $\Theta_{\cal P}$ is a symmetric operator by definition.


\begin{example}
Let ${\cal P} = \{a \leftarrow \boldnot b., ~ b \leftarrow \boldnot a.\}$.  Applying the definition of stable revision defined in (\ref{stable-revision}), we get  four stable fixpoints of 
$\Theta_{\cal P}$,
$(\emptyset, \{a,b\})$, $\{\{a\}, \{a\}\}$, $\{\{b\}, \{b\}\}$, and $\{\{a,b\}, \emptyset\}$. 
\end{example}

The operator $\Theta_{\cal P}$ above is essentially Fitting's immediate consequence operator  \cite{Fitting02} (with a small adjustment), which is defined as:   given a three-valued interpretation ${\cal I}$,
$$
\begin{array}{ll}
\cal Omega_{\cal P}({\cal I})_1 = \{\head(r) \, |\, r \in {\cal P}: \wholebody(r)^{\cal I} = \bft\}\\
\cal Omega_{\cal P}({\cal I})_2 = \{\head(r) \, |\, r \in {\cal P}: \wholebody(r)^{\cal I} \not = \bff\}
\end{array}
$$
where $\wholebody(r)^{\cal I}$ denotes the standard three-valued valuation based on Kleene's three-valued logic \cite{Kleene38a}.
Notice the subtle difference between operators $\cal Omega_{\cal P}$ and $\Theta_{\cal P}$.  The former is based on classic model-theoretic truth valuation and $\Theta_{\cal P}$ is not. As a result,  while $\Theta_{\cal P}$ is inconsistency-tolerant, $\cal Omega_{\cal P}$ is not. In fact, the $\Psi_{\cal P}$ operator is not well-defined in the modified AFT since $\wholebody(r)^{\cal I}$ is not defined for an inconsistent ${\cal I}$.  

In the literature of AFT it is known that the operator $\Theta_{\cal P}$ is a symmetric  approximator \cite{denecker2000approximations}. The correspondence between three-valued stable models of a logic program and consistent stable fixpoints can be easily derived from the literature (e.g., see
 \cite{Gelder89,DBLP:conf/iclp/Przymusinski90,DBLP:journals/fuin/Przymusinski90}).  Our interest here is that the approximator $\Theta_{\cal P}$ is strong for all consistent stable fixpoints of $\Theta_{\cal P}$.

\begin{proposition}
\label{p}
$\Theta_{\cal P}$ 
is an approximator for $T_{\cal P}$ and it is strong for all consistent stable fixpoints of $\Theta_{\cal P}$.
\end{proposition}
\begin{proof}
Since $\Theta_{\cal P}$ is a symmetric approximator, we only need to verify that it is an approximator for $T_{\cal P}$. This is true because for any $(E,E) \in (2^{At})^2$, by definition we have $\Theta_{\cal P}(E,E)$ equals $(T_{\cal P}(E), T_{\cal P}(E))$.  

To show the property of $\Theta_{\cal P}$ being strong for all consistent stable fixpoints, since $\Theta_{\cal P}$ is a  symmetric approximator, we apply 
Lemma 4.1 of \cite{DeneckerMT04}, which says that for any consistent pair $(x,y)$, if $(x,y)$ is $A^c$-prudent, then $\lfp(A^c (u,\cdot)_2) = \lfp(A (u,\cdot)_2)$. Since any consistent stable fixpoint of $\Theta_{\cal P}$ is
$\Theta_{\cal P}^c$-prudent, we have the desired conclusion.
\end{proof}

\end{appendix}
}

\bibliographystyle{acmtrans}
\bibliography{journal09}

\begin{thebibliography}{}

\bibitem[\protect\citeauthoryear{Amendola, Eiter, Fink, Leone, and
  Moura}{Amendola et~al\mbox{.}}{2016}]{DBLP:journals/ai/AmendolaEFLM16}
{\sc Amendola, G.}, {\sc Eiter, T.}, {\sc Fink, M.}, {\sc Leone, N.}, {\sc and}
  {\sc Moura, J.} 2016.
\newblock Semi-equilibrium models for paracoherent answer set programs.
\newblock {\em Artificial Intellegence\/}~{\em 234}, 219--271.

\bibitem[\protect\citeauthoryear{Antic, Eiter, and Fink}{Antic
  et~al\mbox{.}}{2013}]{AnticEF13}
{\sc Antic, C.}, {\sc Eiter, T.}, {\sc and} {\sc Fink, M.} 2013.
\newblock Hex semantics via approximation fixpoint theory.
\newblock In {\em Proc. 12th International Conference on Logic Programming and
  Nonmonotonic Reasoning, LPNMR-13}. Corunna, Spain, 102--115.

\bibitem[\protect\citeauthoryear{Bi, You, and Feng}{Bi
  et~al\mbox{.}}{2014}]{BiYF14}
{\sc Bi, Y.}, {\sc You, J.}, {\sc and} {\sc Feng, Z.} 2014.
\newblock A generalization of approximation fixpoint theory and application.
\newblock In {\em Proc. 8th International Conference on Web Reasoning and Rule
  Systems, RR-14}. Athens, Greece, 45--59.

\bibitem[\protect\citeauthoryear{Bogaerts}{Bogaerts}{2019}]{DBLP:conf/aaai/Bogaerts19}
{\sc Bogaerts, B.} 2019.
\newblock Weighted abstract dialectical frameworks through the lens of
  approximation fixpoint theory.
\newblock In {\em Proc. 33rd {AAAI} Conference on Artificial Intelligence,
  AAAI-19}. {AAAI} Press, Hawaii, USA, 2686--2693.

\bibitem[\protect\citeauthoryear{Bogaerts and Cruz{-}Filipe}{Bogaerts and
  Cruz{-}Filipe}{2018}]{BogaertsC18}
{\sc Bogaerts, B.} {\sc and} {\sc Cruz{-}Filipe, L.} 2018.
\newblock Fixpoint semantics for active integrity constraints.
\newblock {\em Artificial Intellegence\/}~{\em 255}, 43--70.

\bibitem[\protect\citeauthoryear{Bogaerts and den Broeck}{Bogaerts and den
  Broeck}{2015}]{DBLP:journals/tplp/BogaertsB15}
{\sc Bogaerts, B.} {\sc and} {\sc den Broeck, G.~V.} 2015.
\newblock Knowledge compilation of logic programs using approximation fixpoint
  theory.
\newblock {\em Theory and Practice of Logic Programming\/}~{\em 15,\/}~4-5,
  464--480.

\bibitem[\protect\citeauthoryear{Bogaerts, Jansen, Cat, Janssens, Bruynooghe,
  and Denecker}{Bogaerts et~al\mbox{.}}{2016}]{DBLP:journals/ngc/0001JCJBD16}
{\sc Bogaerts, B.}, {\sc Jansen, J.}, {\sc Cat, B.~D.}, {\sc Janssens, G.},
  {\sc Bruynooghe, M.}, {\sc and} {\sc Denecker, M.} 2016.
\newblock Bootstrapping inference in the {IDP} knowledge base system.
\newblock {\em New Generation Computing\/}~{\em 34,\/}~3, 193--220.

\bibitem[\protect\citeauthoryear{Bogaerts, Vennekens, and Denecker}{Bogaerts
  et~al\mbox{.}}{2015}]{Bogaerts-denecker-AIJ-15}
{\sc Bogaerts, B.}, {\sc Vennekens, J.}, {\sc and} {\sc Denecker, M.} 2015.
\newblock Grounded fixpoints and their applications in knowledge
  representation.
\newblock {\em Artificial Intelligence\/}~{\em 224}, 51--71.

\bibitem[\protect\citeauthoryear{Bogaerts, Vennekens, and Denecker}{Bogaerts
  et~al\mbox{.}}{2018}]{DBLP:journals/ai/0001VD18}
{\sc Bogaerts, B.}, {\sc Vennekens, J.}, {\sc and} {\sc Denecker, M.} 2018.
\newblock Safe inductions and their applications in knowledge representation.
\newblock {\em Artificial Intelligence\/}~{\em 259}, 167--185.

\bibitem[\protect\citeauthoryear{Bona and Hunter}{Bona and
  Hunter}{2017}]{BonaHunter17}
{\sc Bona, G.~D.} {\sc and} {\sc Hunter, A.} 2017.
\newblock Localising iceberg inconsistencies.
\newblock {\em Artificial Intelligence\/}~{\em 246}, 118--151.

\bibitem[\protect\citeauthoryear{Denecker, Marek, and
  Truszczy{\'n}ski}{Denecker et~al\mbox{.}}{2000}]{denecker2000approximations}
{\sc Denecker, M.}, {\sc Marek, V.}, {\sc and} {\sc Truszczy{\'n}ski, M.} 2000.
\newblock Approximations, stable operators, well-founded fixpoints and
  applications in nonmonotonic reasoning.
\newblock In {\em Logic-Based Artificial Intelligence}. Springer, 127--144.

\bibitem[\protect\citeauthoryear{Denecker, Marek, and Truszczynski}{Denecker
  et~al\mbox{.}}{2003}]{DeneckerMT03}
{\sc Denecker, M.}, {\sc Marek, V.~W.}, {\sc and} {\sc Truszczynski, M.} 2003.
\newblock Uniform semantic treatment of default and autoepistemic logics.
\newblock {\em Artificial Intelligence\/}~{\em 143,\/}~1, 79--122.

\bibitem[\protect\citeauthoryear{Denecker, Marek, and Truszczynski}{Denecker
  et~al\mbox{.}}{2004}]{DeneckerMT04}
{\sc Denecker, M.}, {\sc Marek, V.~W.}, {\sc and} {\sc Truszczynski, M.} 2004.
\newblock Ultimate approximation and its application in nonmonotonic knowledge
  representation systems.
\newblock {\em Information and Computation\/}~{\em 192,\/}~1, 84--121.

\bibitem[\protect\citeauthoryear{Denecker and Vennekens}{Denecker and
  Vennekens}{2007}]{Denecker07}
{\sc Denecker, M.} {\sc and} {\sc Vennekens, J.} 2007.
\newblock Well-founded semantics and the algebraic theory of non-monotone
  inductive definitions.
\newblock In {\em Proc. 9th International Conference on Logic Programming and
  Nonmonotonic Reasoning, LPNMR-07. Tempe, USA}. 84--96.

\bibitem[\protect\citeauthoryear{Dung}{Dung}{1995}]{DBLP:journals/ai/Dung95}
{\sc Dung, P.~M.} 1995.
\newblock On the acceptability of arguments and its fundamental role in
  nonmonotonic reasoning, logic programming and n-person games.
\newblock {\em Artificail Intelligence\/}~{\em 77,\/}~2, 321--358.

\bibitem[\protect\citeauthoryear{Fitting}{Fitting}{2002}]{Fitting02}
{\sc Fitting, M.} 2002.
\newblock Fixpoint semantics for logic programming a survey.
\newblock {\em Theoretical Computer Science\/}~{\em 278,\/}~1-2, 25--51.

\bibitem[\protect\citeauthoryear{Gelfond and Lifschitz}{Gelfond and
  Lifschitz}{1988}]{GelfondLifschitz88}
{\sc Gelfond, M.} {\sc and} {\sc Lifschitz, V.} 1988.
\newblock The stable model semantics for logic programming.
\newblock In {\em Proc. Fifth International Conference and Symposium on Logic
  Programming}. MIT Press, Seattle, Washington, 1070--1080.

\bibitem[\protect\citeauthoryear{Ji, Liu, and You}{Ji
  et~al\mbox{.}}{2017}]{Ji17}
{\sc Ji, J.}, {\sc Liu, F.}, {\sc and} {\sc You, J.} 2017.
\newblock Well-founded operators for normal hybrid {MKNF} knowledge bases.
\newblock {\em Theory and Practice of Logic Programming\/}~{\em 17,\/}~5-6,
  889--905.

\bibitem[\protect\citeauthoryear{Kaminski, Knorr, and Leite}{Kaminski
  et~al\mbox{.}}{2015}]{DBLP:conf/ijcai/KaminskiKL15}
{\sc Kaminski, T.}, {\sc Knorr, M.}, {\sc and} {\sc Leite, J.} 2015.
\newblock Efficient paraconsistent reasoning with ontologies and rules.
\newblock In {\em Proc. 24th International Joint Conference on Artificial
  Intelligence, {IJCAI} 2015, Buenos Aires, Argentina, July 25-31, 2015},
  {Q.~Yang} {and} {M.~J. Wooldridge}, Eds. {AAAI} Press, 3098--3105.

\bibitem[\protect\citeauthoryear{Killen and You}{Killen and
  You}{2021}]{spencerKR21}
{\sc Killen, S.} {\sc and} {\sc You, J.} 2021.
\newblock Unfounded sets for disjunctive hybrid {MKNF} knowledge bases.
\newblock In {\em Proc. 18th International Conference on Principles of
  Knowledge Representation and Reasoning, KR-21 (to appear)}. IJCAI Inc.

\bibitem[\protect\citeauthoryear{Knorr, Alferes, and Hitzler}{Knorr
  et~al\mbox{.}}{2011}]{KnorrAH11}
{\sc Knorr, M.}, {\sc Alferes, J.~J.}, {\sc and} {\sc Hitzler, P.} 2011.
\newblock Local closed world reasoning with description logics under the
  well-founded semantics.
\newblock {\em Artificial Intelligence\/}~{\em 175,\/}~9-10, 1528--1554.

\bibitem[\protect\citeauthoryear{Lifschitz}{Lifschitz}{1991}]{DBLP:conf/ijcai/Lifschitz91}
{\sc Lifschitz, V.} 1991.
\newblock Nonmonotonic databases and epistemic queries.
\newblock In {\em Proc. 12th International Joint Conference on Artificial
  Intelligence, IJCAI-91}. Sydney, Australia, 381--386.

\bibitem[\protect\citeauthoryear{Liu and You}{Liu and You}{2017}]{LY17}
{\sc Liu, F.} {\sc and} {\sc You, J.} 2017.
\newblock Three-valued semantics for hybrid {MKNF} knowledge bases revisited.
\newblock {\em Artificial Intelligence\/}~{\em 252}, 123--138.

\bibitem[\protect\citeauthoryear{Liu and You}{Liu and
  You}{2019}]{DBLP:conf/ruleml/LiuY19}
{\sc Liu, F.} {\sc and} {\sc You, J.} 2019.
\newblock Alternating fixpoint operator for hybrid {MKNF} knowledge bases as an
  approximator of {AFT}.
\newblock In {\em Proc. Third International Joint Conference on Rules and
  Reasoning, RuleML+RR 2019}, {P.~Fodor}, {M.~Montali}, {D.~Calvanese}, {and}
  {D.~Roman}, Eds. LNCS, vol. 11784. Springer, Bolzano, Italy, 113--127.

\bibitem[\protect\citeauthoryear{Markowsky}{Markowsky}{1976}]{chain-complete}
{\sc Markowsky, G.} 1976.
\newblock Chain-complete posets and directed sets with applications.
\newblock {\em Algebra Universalis\/}~{\em 6,\/}~1, 53--68.

\bibitem[\protect\citeauthoryear{Motik and Rosati}{Motik and
  Rosati}{2007}]{DBLP:conf/ijcai/MotikR07}
{\sc Motik, B.} {\sc and} {\sc Rosati, R.} 2007.
\newblock A faithful integration of description logics with logic programming.
\newblock In {\em Proc. 19th International Joint Conference on Artificial
  Intelligence, IJCAI-07}. Hyderabad, India, 477--482.

\bibitem[\protect\citeauthoryear{Motik and Rosati}{Motik and
  Rosati}{2010}]{Motik:JACM:2010}
{\sc Motik, B.} {\sc and} {\sc Rosati, R.} 2010.
\newblock Reconciling description logics and rules.
\newblock {\em Journal of the ACM\/}~{\em 57,\/}~5, 1--62.

\bibitem[\protect\citeauthoryear{Pearce and Valverde}{Pearce and
  Valverde}{2008}]{DBLP:conf/iclp/PearceV08}
{\sc Pearce, D.} {\sc and} {\sc Valverde, A.} 2008.
\newblock Quantified equilibrium logic and foundations for answer set programs.
\newblock In {\em Proc. 24th International Conference on Logic Programming,
  ICLP-08, Udine, Italy}, {M.~G. de~la Banda} {and} {E.~Pontelli}, Eds. Lecture
  Notes in Computer Science, vol. 5366. Springer, 546--560.

\bibitem[\protect\citeauthoryear{Pelov, Denecker, and Bruynooghe}{Pelov
  et~al\mbox{.}}{2007}]{PelovDB07}
{\sc Pelov, N.}, {\sc Denecker, M.}, {\sc and} {\sc Bruynooghe, M.} 2007.
\newblock Well-founded and stable semantics of logic programs with aggregates.
\newblock {\em Theory and Practice of Logic Programming\/}~{\em 7,\/}~3,
  301--353.

\bibitem[\protect\citeauthoryear{Sakama and Inoue}{Sakama and
  Inoue}{1995}]{DBLP:journals/logcom/SakamaI95}
{\sc Sakama, C.} {\sc and} {\sc Inoue, K.} 1995.
\newblock Paraconsistent stable semantics for extended disjunctive programs.
\newblock {\em Journal of Logic and Computation\/}~{\em 5,\/}~3, 265--285.

\bibitem[\protect\citeauthoryear{Strass}{Strass}{2013}]{Strass13}
{\sc Strass, H.} 2013.
\newblock Approximating operators and semantics for abstract dialectical
  frameworks.
\newblock {\em Artificial Intelligence\/}~{\em 205}, 39--70.

\bibitem[\protect\citeauthoryear{Tarski}{Tarski}{1955}]{Tarski1955}
{\sc Tarski, A.} 1955.
\newblock A lattice-theoretical fixpoint theorem and its applications.
\newblock {\em Pacific Journal of Mathematics\/}~{\em 5:2}, 285--309.

\bibitem[\protect\citeauthoryear{van Emden and Kowalski}{van Emden and
  Kowalski}{1976}]{EmdenK76}
{\sc van Emden, M.~H.} {\sc and} {\sc Kowalski, R.~A.} 1976.
\newblock The semantics of predicate logic as a programming language.
\newblock {\em The Journal of the {ACM}\/}~{\em 23,\/}~4, 733--742.

\bibitem[\protect\citeauthoryear{Vennekens, Gilis, and Denecker}{Vennekens
  et~al\mbox{.}}{2006}]{Vennekens:TOCL2006}
{\sc Vennekens, J.}, {\sc Gilis, D.}, {\sc and} {\sc Denecker, M.} 2006.
\newblock Splitting an operator: Algebraic modularity results for logics with
  fixpoint semantics.
\newblock {\em ACM Transactions on Computational Logic\/}~{\em 7,\/}~4,
  765--797.

\end{thebibliography}

\label{lastpage}
\end{document}